\documentclass{article}
\usepackage[preprint,helvetica]{iclr2027_conference}
\usepackage{times}
\usepackage{amsmath,amssymb,mathtools,amsthm,booktabs,tabularx, dsfont}
\usepackage{algorithm,algpseudocode}
\usepackage{graphicx}
\usepackage{xcolor}
\usepackage{tikz}
\usetikzlibrary{calc,arrows.meta}
\usepackage{url}
\usepackage{thmtools}
\usepackage{optidef}

\usepackage{hyperref}
\definecolor{mydarkblue}{RGB}{0,0,139}

\usepackage{pgfplots}
\usepackage{pgfplotstable}
\pgfplotsset{compat=newest}
\usepgfplotslibrary{groupplots,fillbetween}

\pgfdeclareplotmark{rank simplex star}{%
  \pgfpathmoveto{\pgfpointpolar{90}{\pgfplotmarksize}}%
  \pgfpathlineto{\pgfpointpolar{126}{0.42\pgfplotmarksize}}%
  \pgfpathlineto{\pgfpointpolar{162}{\pgfplotmarksize}}%
  \pgfpathlineto{\pgfpointpolar{198}{0.42\pgfplotmarksize}}%
  \pgfpathlineto{\pgfpointpolar{234}{\pgfplotmarksize}}%
  \pgfpathlineto{\pgfpointpolar{270}{0.42\pgfplotmarksize}}%
  \pgfpathlineto{\pgfpointpolar{306}{\pgfplotmarksize}}%
  \pgfpathlineto{\pgfpointpolar{342}{0.42\pgfplotmarksize}}%
  \pgfpathlineto{\pgfpointpolar{18}{\pgfplotmarksize}}%
  \pgfpathlineto{\pgfpointpolar{54}{0.42\pgfplotmarksize}}%
  \pgfpathclose
  \pgfusepathqfillstroke
}

\newtheorem{theorem}{Theorem}
\newtheorem{proposition}[theorem]{Proposition}

\theoremstyle{definition}

\theoremstyle{remark}

\newcommand{\R}{\mathbb R}

\newcommand{\cN}{\mathcal N}
\newcommand{\cA}{\mathcal A}

\newcommand{\cC}{\mathcal C}
\newcommand{\cI}{\mathcal I}
\newcommand{\cT}{\mathcal T}
\newcommand{\cV}{\mathcal V}
\newcommand{\cW}{\mathcal W}
\newcommand{\cU}{\mathcal U}

\newcommand{\phibar}{\overline{\Phi}}

\newcommand{\Id}{\mathbb I_d}
\newcommand{\Jsel}{J}

\newcommand{\RankWeights}{\textsc{RankWeights}}
\newcommand{\pa}{\operatorname{pa}}
\newcommand{\dd}{\,\mathrm d}

\title{Rank-Aware Speculative Sampling\\for Diffusion Draft Trees}
\author{Marcello Bullo, Yanxiao Liu, \"{O}yk\"{u} S\i la G\"{u}ner, Arpan Mukherjee, Deniz G\"{u}nd\"{u}z \\
Imperial College London \\
\texttt{\{m.bullo21,y.liu2,o.guner25,a.mukherjee,d.gunduz\}@imperial.ac.uk}}

\begin{document}
\maketitle

\begin{abstract}
Speculative sampling accelerates diffusion generation by verifying inexpensive draft states in parallel while preserving the target law. Recent tree-based methods allocate the parallel compute budget more effectively than single-chain drafts, as demonstrated by Diffusion Greedy Rejection Sampling (D-GRS). D-GRS generates $K$ conditionally independent candidates per node, and sequentially tests them in their generation order. Yet the sampled candidates admit an informative ranking without additional target-model evaluations. To exploit this, we introduce Rank-Aware Speculative Sampling (RASS), a verification rule for speculative draft trees based on rank-aware list coupling. RASS orders draft candidates along the proposal–target mean displacement and samples a rank with weights optimized to minimize total variation between the selected-proposal and target laws. Finally, the selected candidate is maximally coupled with the target, with residual correction ensuring \emph{exact} sampling for any choice of rank weights.
We evaluate RASS on a Gaussian-mixture target, unconditional pixel-space generation on FFHQ, conditional generation on CIFAR-10, and latent diffusion with Stable Diffusion 3.5 using COCO2014 prompts. Measured by the ratio of standard to speculative sampling’s target-model evaluation counts, RASS improves on D-GRS across the evaluated settings, with gains reaching approximately 20\% on CIFAR-10 at matched compute budgets.
\end{abstract}

\section{Introduction}
\label{sec:intro}







Diffusion and flow-based models \citep{sohl2015deep,ho2020denoising,song2021score,lipman2022flow} produce state-of-the-art samples across modalities \citep{yang2023diffusion}; however, each sample requires tens to hundreds of sequential evaluations of a large network, incurring significant delay at inference. To circumvent this challenge, a body of literature trades exactness for speed through fewer-step solvers~\citep{song2021ddim,zhang2023fast,lu2025dpmpp} or distilled few-step generators \citep{salimans2022progressive,song2023consistency,sabour2025align,yin2024improved}. Distinctly, \citet{debortoli2025speculative} introduces \emph{speculative diffusion sampling}, which uses a cheaper drafter to generate proposals that are subsequently corrected to yield \emph{exact} target samples, thereby reducing the number of expensive target-model evaluations and accelerating inference. The central mechanism enabling speculative sampling is parallel verification of multiple drafted states. Its speed-up therefore hinges on how effectively the available parallel compute budget is allocated.


Reflection maximal coupling (RMC), introduced by \citet{debortoli2025speculative}, drafts a single chain in which an early rejection discards all states succeeding the inexact draft state. \citet{bullo2026drafttrees} show that \emph{draft trees}, which spend the budget on $K$ candidate children per node, use the same batch more effectively, and propose diffusion greedy rejection sampling (D-GRS), a verification rule adapted from relative entropy coding \citep{harsha2010communication,flamich2023adaptive} that tests the $K$ children sequentially in their drafting order. Each rejection updates the unassigned target mass and the next child is tested against it.

\begin{figure}[t]
  \centering
  \includegraphics[width=0.8\linewidth]{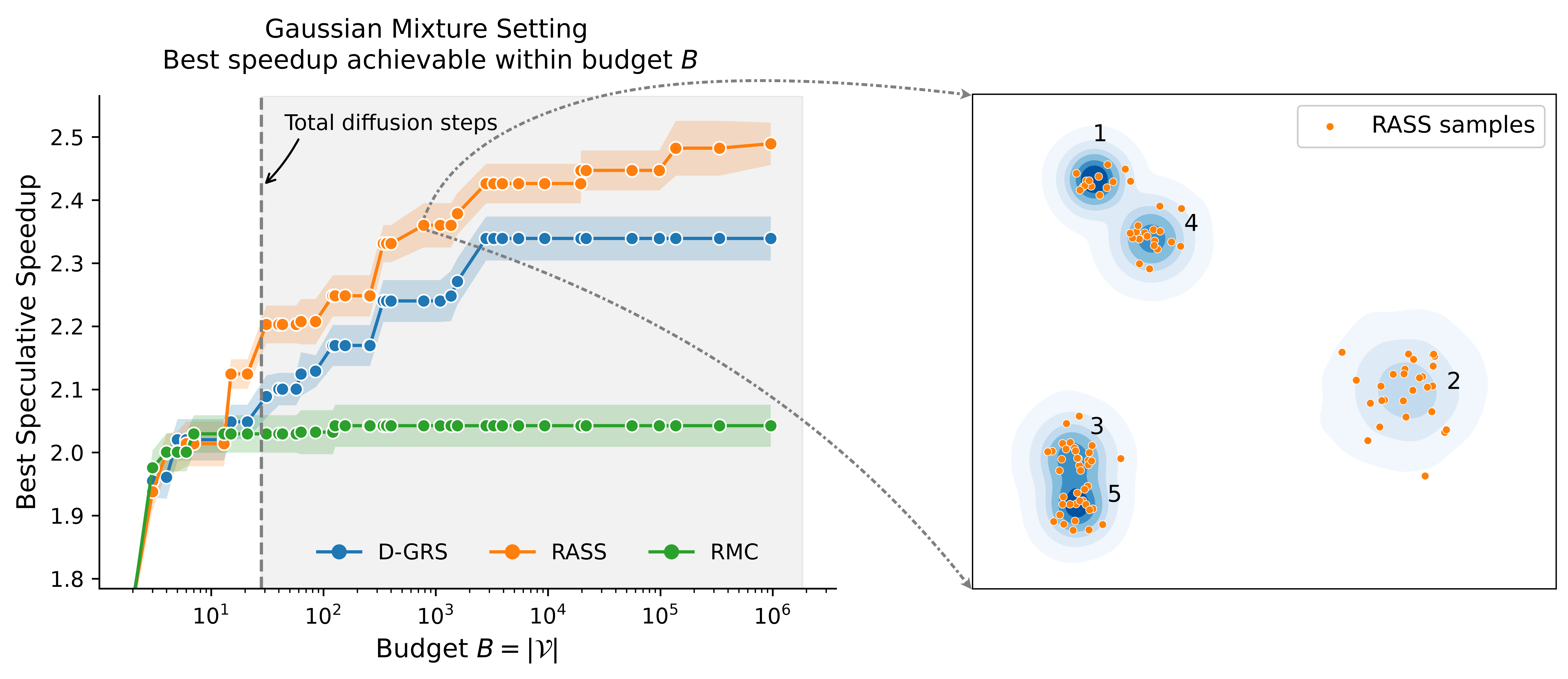}  
\caption{Target-call efficiency and sample visualization on the Gaussian-mixture target ($\varepsilon=0.3$). \textbf{Left:} Best observed mean target-call speedup within each parallel budget $B=|\mathcal V|$, with shaded $95\%$ confidence intervals. The vertical dashed line marks the number of diffusion steps. RASS achieves higher speedups than D-GRS and RMC throughout
the gray region.
\textbf{Right:} RASS samples overlaid on target-density contours, with numbers identifying the five mixture components. Experimental details are provided in Section~\ref{sec:experiments}.}
  \label{fig:intro-frontier}
\end{figure}


We elicit a fundamental limitation of D-GRS: proposals are verified in the order in which they are gleaned, imposing a sequence constraint that is not required by the underlying problem. Indeed, the $K$ children of a node are conditionally independent and identically distributed (i.i.d.) draws from the same Gaussian proposal and are therefore exchangeable at verification. Enforcing an arbitrary order thus discards exploitable symmetry and can be strictly suboptimal relative to list coupling. Building on this observation, we introduce \textbf{rank-aware speculative sampling (RASS)}, a verification rule for diffusion draft trees that exploits the order statistics of the drafted proposals.

More specifically, RASS ranks proposals by their target-to-proposal likelihood ratios, prioritizing candidates that are most compatible with the target distribution. A naive strategy would always select the highest-ranked proposal, but this breaks exactness in an important regime: when the proposal and target distributions coincide, all drafts are distributionally equivalent and no rank should be preferred. RASS therefore places a discrepancy-dependent distribution over proposal ranks, interpolating between uniform selection when proposal and target are close and increasingly concentrated selection on the highest-likelihood proposal as their discrepancy grows.

\paragraph{Contributions.}
Our contributions are threefold: 

(i) We identify a fundamental limitation of the state-of-the-art speculative diffusion sampler D-GRS~\citep{bullo2026drafttrees}: its verification rule is based on sequence coupling, even though the children of a draft-tree node are exchangeable and list coupling can only improve upon an imposed ordering. Motivated by this gap, we introduce \textbf{rank-aware speculative sampling (RASS)}, which ranks i.i.d.\ proposals by their target-to-proposal likelihood ratios and uses this ranking to construct a stronger proposal distribution for verification (Section~\ref{sec:rass}).

(ii) We derive a universal upper bound on the acceptance probability achievable by any list-coupling scheme in our setting (Section~\ref{sec:analysis}, Theorem~\ref{prop:optimal-list-acceptance}). Existing bounds based on the hockey-stick divergence allow arbitrary dependence among the proposals~\citep{phan2026multi}; however, such coupling is unavailable in speculative sampling, where the drafts are generated i.i.d.\ from the proposal distribution. Exploiting this independence constraint, we obtain a strictly tighter characterization of the attainable acceptance probability.

(iii) We prove that RASS preserves the target law for every feasible choice of rank weights (Theorem~\ref{thm:exact}). Numerical comparisons show up to approximately $13\%$ higher per-node acceptance probability than D-GRS with the same number of drafts. Across a Gaussian-mixture target, CIFAR-10, FFHQ, and Stable Diffusion~3.5, RASS consistently improves mean target-call speedup, with gains reaching $20.9\%$ at matched draft trees and parallel compute budgets. On CIFAR-10, it exceeds D-GRS's best observed mean speedup using $81.5\%$ less parallel budget for every tested churn value.

\section{Background}
\label{sec:background}

\paragraph{Notation.}
We use uppercase letters for random variables and corresponding lowercase letters for their realizations, and write $X_{1:n}=(X_1,\ldots,X_n)$. The state dimension is $d$, $\Id$ denotes the identity matrix, and $\|\!\cdot\!\|$ is the Euclidean norm. We write $\mathcal N(\mu,\Sigma)$ for the Gaussian law with mean $\mu$ and covariance $\Sigma$, and $\mathcal N(x;\mu,\Sigma)$ for its density at $x$ when $\Sigma$ is positive definite. The standard Gaussian density and cumulative distribution function are denoted by $\phi$ and $\Phi$, with $\bar\Phi=1-\Phi$. We write $\mathcal U(a,b)$ for the uniform law on $(a,b)$. For probability laws $P$ and $Q$, $\Pi(P,Q)$ denotes the set of joint laws with marginals $P,Q$, and $P^{\otimes K}$ is the joint law of $K$ independent draws from $P$. For probability densities $p,q$, their total variation distance is $\mathrm{TV}(p,q)=\frac12\int |p(x)-q(x)|\,dx$. Finally, $(x)_+=\max\{x,0\}$ denotes the positive part.

\paragraph{Standard Diffusion Sampling.}
Let $X_0\sim q_{\mathrm{data}}$ and consider the forward diffusion $\dd X_t=f_t(X_t)\dd t+g_t\dd B_t$, $t\in[0,1]$
with marginals $q_t$ and score $s_t(x)=\nabla_x\log q_t(x)$, where the dynamics are chosen such that $q_1\approx\cN(0,\Id)$ \citep{song2021score}. A family of reverse processes with the same marginals is
\begin{equation}
    \dd Y_t=b_t(Y_t)\dd t+\varepsilon g_{1-t}\dd W_t,\qquad
    b_t(x)=-f_{1-t}(x)+\frac{1+\varepsilon^2}{2}g_{1-t}^2s_{1-t}(x),
    \label{eq:reverse-sde}
\end{equation}
where $\varepsilon\ge0$ controls the stochasticity \citep{song2021score,karras2022elucidating,albergo2023stochastic}. 
In practice, $b_t$ (or, equivalently, $s_{1-t}$) is approximated by a neural network trained using denoising score matching or flow matching \citep{song2021score,lipman2022flow}. At inference time, we use the learned drift $b_t^q$ to simulate the reverse process with Euler--Maruyama on the uniform grid $t_n=n\gamma$, $n=0,\ldots,N$, where $\gamma=1/N$:
\begin{equation}
Y_{n+1}
=Y_n+\gamma b^q_{t_n}(Y_n)
+\sqrt{\gamma},\varepsilon g_{1-t_n}\xi_n,
\qquad \xi_n\sim\cN(0,\Id).
\label{eq}
\end{equation}
Hence the target transition at step $n$ is
\begin{equation}
    Q_n(\cdot\mid y)
    =\cN\!\left(m_n^q(y),\sigma_n^2\Id\right),\qquad
    m_n^q(y)=y+\gamma b^q_{t_n}(y),\quad
    \sigma_n=\sqrt{\gamma}\,\varepsilon g_{1-t_n}.
    \label{eq:target-kernel}
\end{equation}
Evaluating $m_n^q$ requires one neural-network evaluation, so generating an $N$-step trajectory requires $N$ sequential evaluations. 

\paragraph{Speculative diffusion sampling.}
\begin{figure}[t]
  \centering
  \resizebox{\linewidth}{!}{
\begin{tikzpicture}[
  x=1cm,y=1cm,
  node/.style={circle,draw,minimum size=5.5pt,inner sep=0pt,fill=white,line width=0.6pt},
  hi/.style={node,draw=orange!90!black,fill=orange!35,line width=0.8pt},
  edge/.style={draw=black!70,line width=0.5pt},
  hedge/.style={draw=orange!90!black,line width=0.9pt},
  hedgeblk/.style={draw=black,line width=0.9pt},
  feed/.style={->,>=Stealth,draw=blue!70!black,line width=0.55pt},
  blkfeed/.style={->,>=Stealth,draw=black,line width=0.55pt},
  hfeed/.style={->,>=Stealth,draw=orange!90!black,line width=0.6pt},
  box/.style={draw=white!20!black,rounded corners=3pt,fill=black!6,inner sep=4pt,align=center},
  verifybox/.style={draw=green!70!black,rounded corners=3pt,fill=green!6,inner sep=4pt,align=center},
  lab/.style={font=\scriptsize},
  tlab/.style={font=\footnotesize\bfseries},
  d0/.style={node,draw=blue!80!black,fill=blue!80!black},
  d1/.style={node,draw=blue!70!black,fill=blue!45},
  d2/.style={node,draw=blue!70!black,fill=blue!12},
  bell/.style={draw=blue!60!black,fill=blue!60!black,fill opacity=0.05,draw opacity=0.3,line width=0.4pt},
]
\begin{scope}[shift={(0,0)}]
\node[tlab] at (0,1.05) {(i) Drafting};
\coordinate (r) at (0,0);
\coordinate (a) at (-0.9,-1.05); \coordinate (b) at (0.9,-1.05);
\coordinate (aa) at (-1.4,-2.1); \coordinate (ab) at (-0.4,-2.1);
\coordinate (ba) at (0.4,-2.1);  \coordinate (bb) at (1.4,-2.1);
\draw[bell] plot[domain=-1.35:1.35,samples=50,smooth] (\x,{-1.05+0.62*exp(-\x*\x/(2*0.42*0.42))}) -- (1.35,-1.05) -- (-1.35,-1.05) -- cycle;
\draw[bell] plot[domain=-1.75:-0.05,samples=50,smooth] (\x,{-2.1+0.5*exp(-(\x+0.9)*(\x+0.9)/(2*0.36*0.36))}) -- (-0.05,-2.1) -- (-1.75,-2.1) -- cycle;
\draw[bell] plot[domain=0.05:1.75,samples=50,smooth] (\x,{-2.1+0.5*exp(-(\x-0.9)*(\x-0.9)/(2*0.36*0.36))}) -- (1.75,-2.1) -- (0.05,-2.1) -- cycle;
\draw[edge] (r)--(a) (r)--(b) (a)--(aa) (a)--(ab) (b)--(ba) (b)--(bb);
\node[d0] at (r) {}; \node[d1] at (a) {}; \node[d1] at (b) {};
\node[d2] at (aa) {}; \node[d2] at (ab) {}; \node[d2] at (ba) {}; \node[d2] at (bb) {};
\node[lab,anchor=east] at ($(r)+(-0.12,0.05)$) {$\widehat Y_{\mathrm r}=Y_n$};
\node[lab,anchor=east] at ($(a)+(-0.12,0.05)$) {$\widehat Y_{v}$};
\node[lab,anchor=east] at (-1.5,-2.1) {$\widehat Y_{v'}$};
\node[lab,anchor=south,align=center] at ($(r)+(0,0.25)$) {$\widehat Y_{v}\sim\mathcal N\big(m^p_{n+|v|-1}(\widehat Y_{\mathrm{pa}(v)}),\sigma^2_{n+|v|-1}I_d\big)$};
\end{scope}
\begin{scope}[shift={(4.6,0)}]
\node[tlab] at (0,1.05) {(ii) One batched target call};
\foreach \sty/\y/\nm in {d0/0.15/r, d1/-0.2/a, d1/-0.55/b, d2/-.9/aa, d2/-1.25/ab, d2/-1.60/ba, d2/-1.95/bb}
  \node[\sty] (s\nm) at (-1.2,\y) {};
\node[box,font=\scriptsize,minimum height=2.7cm,minimum width=1.2cm] (net) at (0,-.9)
  {target\\ network\\[2pt] $b^q_{t}(\cdot)$};
\foreach \nm in {r,a,b,aa,ab,ba,bb} {
  \draw[blkfeed] (s\nm) -- (net.west |- s\nm);
  \draw[blkfeed] (net.east |- s\nm) -- ($(net.east |- s\nm)+(0.55,0)$);
}
\end{scope}
\begin{scope}[shift={(9.2,0)}]
\node[tlab] at (0,1.05) {(iii) Verification at a node};
\coordinate (r) at (0,0);
\coordinate (a) at (-0.9,-0.75); \coordinate (b) at (0.9,-0.75);
\draw[edge] (r)--(a) (r)--(b);
\node[d0] at (r) {}; \node[d1] (c1) at (a) {}; \node[d1] (c2) at (b) {};
\node[lab,anchor=east] at ($(r)+(1.4,0.05)$) {$\widehat Y_{\mathrm r}=Y_n$};
\node[lab,anchor=east] at ($(a)+(-0.12,0.05)$) {$\widehat Y_1$};
\node[lab,anchor=west] at ($(b)+(0.12,0.05)$) {$\widehat Y_2$};
\node[lab,anchor=south] at ($(r)+(0,0.25)$) {$P=\mathcal N(\mu_p,\sigma^2I),\ \ Q=\mathcal N(\mu_q,\sigma^2I)$};
\node[verifybox,minimum width=2.0cm,inner sep=3pt] (ver) at (0,-1.7)
  {\tiny$\displaystyle\max_{\pi\in\Pi}\ \Pr_\pi\big(Y_{n+1}\in\{\widehat Y_1,\widehat Y_2\}\big)$\\[-2.pt]
   \tiny s.t.\ $\widehat Y_1,\widehat Y_2\overset{\text{i.i.d.}}{\sim}P,\ \ Y_{n+1}\sim Q$};
\node[green!70!black] at (0,-1.) {\footnotesize\textsc{Verify}};
\draw[->,>=Stealth,draw=black,line width=0.55pt] (c1) -- (c1 |- ver.north);
\draw[->,>=Stealth,draw=black,line width=0.55pt] (c2) -- (c2 |- ver.north);
\end{scope}
\end{tikzpicture}}
  \caption{One speculative round on a $(K,L)=(2,2)$ draft tree. (i) Candidates are drawn recursively from the proposal kernel. (ii) Target drifts required for verification are evaluated in one batch. (iii) Verification walks down the tree while candidates are accepted. The first rejection is corrected to an exact target draw and terminates the round.}
  \label{fig:stages}
\end{figure}
Speculative diffusion sampling reduces this sequential dependence by generating candidate future states with an inexpensive proposal and evaluating the target model at these candidates in parallel \citep{debortoli2025speculative}. A subsequent verification step accepts or corrects the candidates while preserving exactly the law of the target Markov chain. We call this three-stage procedure a \emph{speculative round}, illustrated in Figure~\ref{fig:stages}. In the remainder of this section, we summarize its three stages following \citet{bullo2026drafttrees}, and we defer the full algorithm to  Algorithm~\ref{alg:speculative-template}.

(i) \underline{Drafting}: To generate candidate future states inexpensively, speculative diffusion sampling uses a proposal transition. Analogously to \eqref{eq:target-kernel}, we consider $P_n(\cdot\mid y)=\cN\!\left(m_n^p(y),\sigma_n^2\Id\right)$, where $m_n^p(y)=y+\gamma b^p_{t_n}(y)$ and $b^p_{t_n}$ is an inexpensive approximation to the target drift $b^q_{t_n}$. The proposal thus shares the target covariance but uses a cheaper mean computation. In this work, following \citep{debortoli2025speculative}, we use a \emph{delayed-target proposal}, which holds a cached target-model evaluation fixed throughout drafting and updates it between rounds using the available parallel evaluations (see Appendix~\ref{app:delayed-drift-prefetch}). The drafting procedure can be organized as a finite rooted tree $\cT=(\cV,\pa)$ of candidate states \citep{bullo2026drafttrees}, where $\cV$ is the node set, $\mathrm r\in\cV$ is the root, and $\pa:\cV\setminus\{\mathrm r\}\to\cV$ maps each non-root node to its parent. For $u\in\cV$, let $C(u):=\{v\in\cV:\pa(v)=u\}$ denote its set of children, and let $|u|$ denote its depth, with $|\mathrm r|=0$. The chain construction of \citep{debortoli2025speculative} is recovered as the degenerate case with branching factor one. The root $\mathrm r$ contains the current state $Y_n$, and each node $v\neq \mathrm r$ at depth $|v|=\ell$ represents a candidate state for step $n+\ell$. Conditional on its parent, this candidate is drawn from the proposal transition at step $n+\ell-1$. Writing $\mathbf y_A := (y_v)_{v\in A}$ for the collection of states indexed by $A\subseteq\cV$, the class of draft models we consider is of the form
\begin{equation}
  p_{\cT}\big(\mathbf y_{\cV\setminus\{\mathrm r\}}\mid y_{\mathrm r}\big)
  =
  \prod_{v\in\cV\setminus\{\mathrm r\}}
    \cN\Big(
      y_v;\;
      m^p_{n+|v|-1}\big(y_{\pa(v)}\big),\;
      \sigma^2_{n+|v|-1}\Id
    \Big).
  \label{eq:tree-proposal}
\end{equation}
In particular, children of the same parent are conditionally i.i.d. proposals for the same target transition. For a full tree with branching factor $K$ and lookahead depth $L$, every non-leaf node has $K$ children, and depth $\ell$ contains $K^\ell$ candidate states. Hence drafting produces $|\cV|-1 = \sum_{\ell=1}^{L} K^\ell$ candidate states.

(ii) \underline{Parallel evaluation}:
Verification requires target-drift evaluations only at the internal nodes
$\cI(\cT):=\{u:\cC(u)\neq\emptyset\}$, and these evaluations can be batched into a single target-network call. One may therefore define the parallel compute budget using only the internal nodes, $B=|\cI|$. This reduces the verification budget, but introduces a caveat: if all drafted nodes are rejected and the algorithm returns a residual sample, the target drift at that new root state is unavailable and must be evaluated before the next draft tree can be generated. An alternative is to verify the entire tree, including the leaves, corresponding to the larger budget $B=|\cV|$. Even though this spends additional parallel compute, it provides target-drift evaluations at the leaves in advance. Consequently, if the current tree is fully rejected, the score evaluated at a leaf can be reused as the delayed proposal drift for the next tree, whose root is the residual sample. Full-tree verification also makes same-timestep sibling evaluations available for reuse after a leaf rejection. Since the relative benefit of these two strategies is problem dependent, we follow~\citet{bullo2026drafttrees} and adopt the full-tree budget $B=|\cV|$ in the main experiments, deferring the internal-node alternative $B=|\cI|$ to Appendix~\ref{app:internal-node-budget}. Speculative sampling therefore trades a larger parallel batch for fewer \emph{sequential} target-model evaluations.

(iii) \underline{Verification}:
Verification proceeds from the root down the tree, one level at a time,
along the accepted branch. We denote the nodewise verification rule by $\textsc{Verify}$. Consider a visited node with proposal
$P=\cN(\mu_p,\sigma^2\Id)$, target
$Q=\cN(\mu_q,\sigma^2\Id)$, and drafted children
$\widehat Y_{1:K}\stackrel{\mathrm{i.i.d.}}{\sim}P$.
Given $(\mu_p,\mu_q,\sigma,\widehat Y_{1:K})$,
$\textsc{Verify}$ returns a state $Y$ satisfying $Y\sim Q$. If a child is accepted, then $Y=\widehat Y_j$ for some $j\in\{1,\ldots,K\}$, and verification continues from that child. Otherwise, $\textsc{Verify}$ returns a correction drawn from the appropriate residual distribution, which is committed as the next state and terminates the speculative round. 
The performance of $\textsc{Verify}$ is determined by the probability of accepting one of the drafted children. Formally, this gives the
\emph{ maximal list-coupling} objective
\begin{equation}\label{eq:max_list_coupling}
  \max_{\pi\in\Pi(P^{\otimes K},Q)}
  \Pr_{\pi}\!\left(
    Y\in\{\widehat Y_1,\ldots,\widehat Y_K\}
  \right).
\end{equation}
D-GRS instantiates $\textsc{Verify}$ by greedily testing candidates in
drafting order against the remaining target mass
\citep{bullo2026drafttrees}. For $K=1$, the problem reduces to classical
maximal coupling, as used by \citet{debortoli2025speculative}.
For $P$ and $Q$ being isotropic Gaussians with the same covariance, the verification problem reduces to one dimension. Define
\begin{equation}
  \Delta:=(\mu_q-\mu_p)/\sigma,
  \qquad
  \delta:=\|\Delta\|,
  \qquad
  e:=\Delta/\delta,
  \quad \delta>0,
\end{equation}
and, for every node, decompose each standardized child \((\widehat Y_k-\mu_p)/\sigma\) into its projection onto \(e\) and its orthogonal component as
\begin{equation}
  \widehat S_k
  :=e^\top(\widehat Y_k-\mu_p)/\sigma,
  \qquad
  Z_{\perp,k}
  :=(\widehat Y_k-\mu_p)/\sigma- e\widehat S_k, \qquad k=1,\ldots,K.
  \label{eq:scalar_reduction}
\end{equation}
Under \(P\), \(\widehat S_{1:K}\overset{\mathrm{i.i.d.}}{\sim}\mathcal N(0,1)\), whereas the target projection has density \(q_\delta(s):=\phi(s-\delta)\).
In both cases, the orthogonal component follows
$\cN(0,\Id-ee^\top)$ independently of the projection. Hence, $\textsc{Verify}$ needs to generate a scalar \(S\) with target density $\mathcal N(\delta,1)$, after which the output is reconstructed as
$Y=\mu_p+\sigma(Se+Z_\perp)$. Henceforth, we take this reduction as implicit and regard \(\textsc{Verify}\) as operating directly on the scalar projections \(\widehat S_{1:K}\). Our contribution, presented in Section~\ref{sec:rass}, is a new instantiation of $\textsc{Verify}$.

\section{Rank-Aware Speculative Sampling}
\label{sec:rass}
We now introduce a rank-aware instantiation of $\textsc{Verify}$. RASS ranks the drafted children projections 
and selects one according to a rank distribution. Then, it maximally couples the resulting selected-child distribution with the target scalar distribution.

\paragraph{Rank-based selection.}
Define $\widehat U_k:=\Phi(\widehat S_k)$ for $k=1,\ldots,K$.
Since $\widehat S_k\overset{\mathrm{i.i.d.}}{\sim}\cN(0,1)$,
the probability integral transform gives
$\widehat U_k\overset{\mathrm{i.i.d.}}{\sim}\cU(0,1)$.
Let $\widehat U_{(1)}<\cdots<\widehat U_{(K)}$ denote the corresponding order statistics, and let $j(r)$ be the index of the child with rank $r$, such that $\widehat U_{(r)}=\widehat U_{j(r)}$.
Since $\Phi$ is strictly increasing, this ordering is also the ordering
of the projections $\widehat S_{1:K}$. Thus, $j(r)$ identifies the
child with rank $r$ along $e$, and rank $K$ corresponds
to the child with the largest projection toward the target mean.
Let $\cW_K$ denote the $K$-dimensional probability simplex. Given rank weights
$\omega\in\cW_K$ (which depend only on $(\delta,K)$), RASS samples
\begin{equation}
    R\sim\mathrm{Categorical}(\omega),
    \qquad
    \Jsel:=j(R),
    \label{eq:rank-draw}
\end{equation}
with the selected proposal being $\widehat Y_{\Jsel}$. While $R$ is independent of the drafted children, the selected index $\Jsel$ depends on their projections through $j$. The selected scalar projection is then $\widehat S_{\Jsel}=\Phi^{-1}(\widehat U_{(R)})$, with $\Phi^{-1}$ denoting the inverse CDF of the standard normal distribution.
The next proposition characterizes the distribution induced by this 
rank-based selection.

\begin{restatable}[Law of selected child $\widehat S_J$]{proposition}{childlawprop}
\label{prop:selected-law}
For any $\omega\in\cW_K$, the selected projection
$\widehat S_{\Jsel}$ has density
\begin{equation}
    p_\omega(s)
    =\phi(s)\,b_\omega\!\left(\Phi(s)\right),
    \qquad\textrm{with}\quad
    b_\omega(a)
    :=
    \sum_{r=1}^{K}\omega_r f_r(a), 
    \label{eq:selected-density}
\end{equation}
where $f_r$ denotes the density of a
$\mathrm{Beta}(r,K+1-r)$ random variable.
Furthermore, conditional on $\widehat S_{1:K}$ and $R$, the selected orthogonal component
satisfies $Z_{\perp,\Jsel}\sim\cN(0,\Id-ee^\top)$.
\end{restatable}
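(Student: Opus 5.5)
The plan is to condition on the rank $R$ and use standard order-statistic facts for i.i.d.\ uniforms. Since $R$ is drawn independently of $\widehat S_{1:K}$, the law of $\widehat S_{\Jsel}$ is the mixture $\sum_{r=1}^K \omega_r \,\mathcal L\big(\Phi^{-1}(\widehat U_{(r)})\big)$. So it suffices to identify the law of each transformed order statistic and then combine them linearly.

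\textbf{Step 1: the order statistic.} Because $\widehat U_{1:K}$ are i.i.d.\ $\cU(0,1)$, the $r$-th order statistic $\widehat U_{(r)}$ is $\mathrm{Beta}(r,K+1-r)$ with density $f_r(a)=\frac{K!}{(r-1)!(K-r)!}a^{r-1}(1-a)^{K-r}$. If needed, this follows by counting: $\widehat U_{(r)}\in[a,a+da]$ requires one point in the interval, $r-1$ points below it and $K-r$ points above it.

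\textbf{Step 2: change of variables.} The map $s\mapsto\Phi(s)$ is strictly increasing and differentiable with derivative $\phi(s)$. Hence $\Phi^{-1}(\widehat U_{(r)})$ has density $\phi(s)f_r(\Phi(s))$. Mixing over $r$ with weights $\omega_r$ gives $p_\omega(s)=\phi(s)\,b_\omega(\Phi(s))$. Ties among the $\widehat U_k$ occur with probability zero, so $j(r)$ is almost surely well defined.

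\textbf{Step 3: the orthogonal component.} Under $P$, the pairs $(\widehat S_k,Z_{\perp,k})$ are i.i.d.\ across $k$, and within each pair $Z_{\perp,k}\sim\cN(0,\Id-ee^\top)$ is independent of $\widehat S_k$. Therefore the family $Z_{\perp,1:K}$ is independent of $(\widehat S_{1:K},R)$, because $R$ is independent of all drafts. The index $\Jsel=j(R)$ is a measurable function of $(\widehat S_{1:K},R)$. Conditioning on $\widehat S_{1:K}=s_{1:K}$ and $R=r$ fixes $\Jsel=j^*$ deterministically. Then $Z_{\perp,\Jsel}=Z_{\perp,j^*}$, whose conditional law equals its unconditional law $\cN(0,\Id-ee^\top)$.

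The only step needing care is this last one. It must make explicit that the selection depends on the drafts only through the projections, so that the selection introduces no bias into the orthogonal part. Everything else is routine.
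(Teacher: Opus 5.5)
Your proposal is correct and follows essentially the same route as the paper. You identify $\widehat U_{(r)}$ as $\mathrm{Beta}(r,K+1-r)$, mix over the independent rank $R$, change variables through $\Phi$, and use the independence of $Z_{\perp,1:K}$ from $(\widehat S_{1:K},R)$ for the orthogonal part. The differences are cosmetic: the paper derives the Beta law from the binomial CDF $\Pr(M_a\ge r)$ rather than your counting heuristic, and it mixes over $r$ before the change of variables rather than after.
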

Two limiting choices of $\omega$ illustrate the effect of rank selection. First, uniform weights,
$\omega_r=1/K$, satisfy $b_\omega(a)=1$. Consequently, $p_\omega=\phi$, so uniform rank selection recovers the original proposal distribution.
Second, the top-rank choice $\omega_K=1$ selects the child with the largest projection \(\widehat S_{(K)}=\max(\widehat S_1,\ldots,\widehat S_K)\), whose density is $ p_\omega(s) = K\phi(s)\Phi(s)^{K-1}$. This choice biases the selected proposal toward larger projections along the target direction. For small \(\delta\), this bias can overshoot the target, reducing the overlap with the target relative to the original proposal \(\phi\). As delineated in what follows, we adopt a weight selection scheme that \emph{adaptively} interpolates between uniform and top-rank selection.

\paragraph{Verification by maximal coupling.}
Rank-based selection converts the \(K\) drafted children into a single rank-selected candidate $\widehat Y_J$. For fixed rank weights \(\omega\), its scalar projection \(\widehat S_J\) follows the induced density \(p_\omega\), which in general differs from the target density \(q_\delta\). Following the verification procedure of Section~\ref{sec:background}, RASS maximally couples \(p_\omega\) with the target \(q_\delta\), maximizing the probability of accepting the selected child. Equivalently, RASS chooses the rank mixture so that the induced selected-child measure is as close as possible to the target, thereby maximizing the maximal-coupling acceptance probability.
Specifically, let $Y \sim Q$ denote the target-distributed output of the verifier. For a given selected child $\widehat Y_{\Jsel}=y$ with scalar projection $\widehat S_J = s$, the acceptance probability is
\begin{equation}
  a_\omega(s) := \Pr\bigl(Y = \widehat Y_J \mid \widehat S_J = s\bigr)
  = \min\bigl\{1,\, q_\delta(s)/p_\omega(s)\bigr\},
  \qquad s \in \mathbb{R}.
  \label{eq:acceptance-rule}
\end{equation}
%
Therefore, the acceptance rate
is
\begin{equation}
\alpha_K(\delta;\omega):=\Pr(\widehat Y_J=Y)=\!\!\int_{\mathbb{R}}a_\omega(s)
p_\omega(s)\dd s =\!\!\int_{\mathbb R}\min\{{q_\delta(s),p_\omega(s)}\}\dd s =1-\mathrm{TV}(q_\delta,p_\omega).
\end{equation}
On rejection, the remaining mass is \(q_\delta(s)-\min\{q_\delta(s),p_\omega(s)\}
=(q_\delta(s)-p_\omega(s))_+\). Hence, to guarantee exactness, RASS draws $S\sim (q_\delta(s)-p_\omega(s))_+/\mathrm{TV}(q_\delta,p_\omega)$, and reconstructs \(Y=\mu_p+\sigma(Se+Z_{\perp,1})\), with the orthogonal component of the first child in drafting order.
\begin{algorithm}[t]
\caption{$\textsc{Verify} \gets$ RASS}
\label{alg:rass-verify}
\begin{algorithmic}[1]
\Require $\mu_p,\mu_q,\sigma>0$; children
$\widehat Y_{1:K}\overset{\mathrm{i.i.d.}}{\sim}
\mathcal N(\mu_p,\sigma^2I_d)$
\Ensure $(Y,\texttt{accepted},j)$
\State Compute $\widehat S_{1:K}$ and $Z_{\perp,1}$ as in~\eqref{eq:scalar_reduction}, $\delta\gets\|\mu_q-\mu_p\|/\sigma$ 
\If{$\delta=0$}
    \State \Return $(\widehat Y_1,\texttt{true},1)$
\EndIf
\State Compute $\omega^\star$ by solving \eqref{eq:rank-optimization}  \Comment{(See Appendix~\ref{app:rankweights-impl}) }
\State $R\sim\textrm{Categorical}(\omega^\star)$,\quad
       $J\gets j(R)$
\State $A\sim\textrm{Bernoulli}\!\left(
       \min\{1, q_\delta(\widehat S_{J})/p_\omega(\widehat S_{J})\}\right)$
\If{$A=1$}
    \State \Return $(\widehat Y_J,\texttt{true},J)$
\EndIf
\State Draw $S\sim (q_\delta(s)-p_\omega(s))_+/\mathrm{TV}(q_\delta,p_\omega)$ \Comment{(See Appendix~\ref{app:residual})}
\State \Return
$\bigl(\mu_p+\sigma(Se+Z_{\perp,1}),
       \texttt{false},K+1\bigr)$
\end{algorithmic}
\end{algorithm}
The rank weights \(\omega\) determine the selected-proposal density \(p_\omega\) and hence the acceptance probability \(\alpha_K(\delta;\omega)\) in (11). For fixed \((\delta,K)\), we choose
\begin{equation}
  \omega^\star(\delta,K)
  \in
  \arg\max_{\omega\in\cW_K}
  \alpha_K(\delta;\omega).
  \label{eq:rank-optimization}
\end{equation} 
The RASS verification rule is reported in Algorithm~\ref{alg:rass-verify}, and implementation details for sampling the residual distribution, and the computational complexity for solving \eqref{eq:rank-optimization} are deferred to Appendix~\ref{app:residual} and \ref{app:rankweights-impl}, respectively.

\section{Theoretical Analysis}
\label{sec:analysis}
In this section, we first establish that RASS preserves the target law for every feasible rank distribution. We then characterize the maximum acceptance probability attainable from \(K\) i.i.d. drafts, providing a common benchmark for RASS and D-GRS.
Proofs are deferred to Appendix~\ref{ap:th_analysis}.

\begin{restatable}[Exactness]{theorem}{exactnesstheorem}
\label{thm:exact}
For every $K\ge1$ and every $\delta\ge0$, the RASS verifier is exact for any feasible rank weights $\omega\in\cW_K$ used by the verifier. 
\end{restatable}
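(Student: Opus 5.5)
Exactness means that, conditional on the parent state (so on $\mu_p,\mu_q,\sigma$), the output $Y$ of Algorithm~\ref{alg:rass-verify} has law $Q=\cN(\mu_q,\sigma^2\Id)$. Because of the scalar reduction \eqref{eq:scalar_reduction}, I will show two things. First, the returned scalar $S_{\mathrm{out}}:=e^\top(Y-\mu_p)/\sigma$ has density $q_\delta$. Second, the returned orthogonal component is $\cN(0,\Id-ee^\top)$ and is independent of $S_{\mathrm{out}}$. Together these give $Y=\mu_p+\sigma(S_{\mathrm{out}}e+Z_{\perp,\mathrm{out}})\sim\cN(\mu_q,\sigma^2\Id)$, since $\mu_p+\sigma\delta e=\mu_q$. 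I first dispose of $\delta=0$. Then $P=Q$, and the algorithm returns $\widehat Y_1\sim P=Q$, so it is trivially exact.

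For $\delta>0$, fix any $\omega\in\cW_K$. By Proposition~\ref{prop:selected-law}, $\widehat S_J$ has density $p_\omega$. Writing $A$ for the acceptance event, the sub-density of the accepted branch is
\[
\Pr(\widehat S_J\in\dd s,\,A=1)=p_\omega(s)\min\{1,q_\delta(s)/p_\omega(s)\}\dd s=\min\{p_\omega(s),q_\delta(s)\}\dd s.
\]
Hence $\Pr(A=0)=1-\int\min\{p_\omega,q_\delta\}=\mathrm{TV}(q_\delta,p_\omega)$. On rejection, the residual draw contributes the sub-density $\mathrm{TV}(q_\delta,p_\omega)\cdot(q_\delta-p_\omega)_+/\mathrm{TV}(q_\delta,p_\omega)=(q_\delta-p_\omega)_+$. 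Summing the two branches gives $\min\{p_\omega,q_\delta\}+(q_\delta-p_\omega)_+=q_\delta$, which is the desired scalar law. The degenerate case $\mathrm{TV}=0$ needs a remark: rejection then has probability zero (on the set where $p_\omega>0$), so the residual is never sampled.

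The main obstacle is the orthogonal component, because the acceptance decision depends on the children, which might in principle correlate $Z_\perp$ with the output. I handle each branch separately. On acceptance, the output component is $Z_{\perp,J}$. By Proposition~\ref{prop:selected-law}, conditional on $(\widehat S_{1:K},R)$ it is $\cN(0,\Id-ee^\top)$. The coin $A$ depends only on $\widehat S_J$ and independent auxiliary randomness, so conditioning further on $A$ does not change this law. Thus on $\{A=1\}$ the component is $\cN(0,\Id-ee^\top)$ independently of $S_{\mathrm{out}}=\widehat S_J$.

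On rejection, the output uses $Z_{\perp,1}$. The children are independent across $k$, and each $Z_{\perp,k}$ is independent of $\widehat S_k$ (Gaussian projection). So $Z_{\perp,1:K}$ is independent of $\widehat S_{1:K}$, and therefore of $(R,A)$ and of the residual draw. Hence on $\{A=0\}$, $Z_{\perp,1}\sim\cN(0,\Id-ee^\top)$ independently of $S_{\mathrm{out}}$.

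Mixing the two branches, $Y$ has joint scalar/orthogonal law $q_\delta\otimes\cN(0,\Id-ee^\top)$, i.e.\ $Y\sim Q$. This holds for arbitrary $\omega$, so optimality of $\omega^\star$ is never used. Finally, exactness of the whole speculative round follows by induction down the accepted path: each visited node's output is an exact target transition given its parent, and the first rejection commits an exact draw.
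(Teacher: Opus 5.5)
Your proof is correct and follows the paper's argument essentially step for step. Both dispose of $\delta=0$ trivially and obtain the accepted sub-density $\min\{q_\delta,p_\omega\}$ plus the residual $(q_\delta-p_\omega)_+$. Both show branchwise independence of the orthogonal component ($Z_{\perp,J}$ on acceptance, $Z_{\perp,1}$ on rejection) and finish by induction down the accepted path.

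The paper states one point you leave implicit: why that induction is legitimate. The decision at a node uses only that node's children and independent randomness. So the accepted child's already-drafted descendants keep their proposal-tree conditional law.
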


Exactness holds for any choice of rank weights, allowing us to optimize acceptance without changing the target law. To assess how effectively RASS uses the draft list, we first determine the best acceptance probability achievable by any exact verifier under the same drafting distribution.
\begin{restatable}[Optimal list acceptance]{proposition}{optachievablelistcouplingprop}
    \label{prop:optimal-list-acceptance}
    In the Gaussian setting of Section~\ref{sec:background}, the maximum probability
    of accepting a draft while preserving the target distribution is
    \[
        \alpha_K^{\mathrm{up}}(\delta)
        =
        1-\sup_{s\in\mathbb R}
        \left\{\Phi(s)^K-\Phi(s-\delta)\right\},
    \]
    where $\Phi$ denotes the standard Gaussian CDF. 
\end{restatable}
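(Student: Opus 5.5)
The plan is to prove matching upper and lower bounds, working throughout with the scalar projections of Section~\ref{sec:background}. This is justified on both sides. Equality of two $d$-dimensional states forces equality of their projections onto $e$, so any list coupling of the full vectors induces a scalar list coupling with at least the same acceptance probability. Conversely, a scalar coupling can be lifted by attaching the orthogonal component of the accepted child, or of the first child on rejection, exactly as in Proposition~\ref{prop:selected-law} and Algorithm~\ref{alg:rass-verify}. It therefore suffices to take $\widehat S_{1:K}\overset{\mathrm{i.i.d.}}{\sim}\mathcal N(0,1)$ and $S\sim\mathcal N(\delta,1)$, and to show that
\[
\max_\pi \Pr_\pi\bigl(S\in\{\widehat S_1,\ldots,\widehat S_K\}\bigr)=1-\sup_s\{\Phi(s)^K-\Phi(s-\delta)\}.
\]

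\emph{Upper bound.} Fix any coupling $\pi$ and let $A=\{S\in\{\widehat S_k\}\}$. Fix also a threshold $s$ and let $M=\max_k\widehat S_k$, so that $\Pr(M\le s)=\Phi(s)^K$ by independence. On $A\cap\{M\le s\}$ the accepted value is some $\widehat S_k\le s$, hence $S\le s$. Therefore
\[
\Phi(s-\delta)=\Pr(S\le s)\ge\Pr(A\cap\{M\le s\})\ge \Phi(s)^K-\Pr(A^c).
\]
This gives $\Pr(A)\le 1-(\Phi(s)^K-\Phi(s-\delta))$ for every $s$, and taking the infimum over $s$ yields the bound. Note the supremum is nonnegative, since the difference tends to $0$ as $s\to\pm\infty$.

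\emph{Achievability.} I would exhibit a selection rule $J$, possibly randomized and depending on the values $\widehat S_{1:K}$, whose selected law $G$ (as a CDF) satisfies
\[
1-\mathrm{TV}(G,F)=1-\sup_s\{\Phi(s)^K-F(s)\},\qquad F(s)=\Phi(s-\delta).
\]
Maximally coupling $G$ with $F$ then gives the claimed acceptance, and exactness follows by the residual correction, as in Theorem~\ref{thm:exact}. A selected child always lies between the minimum and the maximum of the list, so any achievable $G$ satisfies $1-(1-\Phi)^K\ge G\ge\Phi^K$. Since $F$ is stochastically larger than $\Phi$, the natural candidate is the ``truncated'' law
\[
G=\max\{\Phi^K,F\}.
\]

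The analysis of this candidate splits into three steps.
\begin{enumerate}
\item Using the tail asymptotics, $\Phi(s)^K\ll\Phi(s-\delta)$ as $s\to-\infty$ for $K\ge2$, while $1-\Phi(s)^K\approx K\bar\Phi(s)<\bar\Phi(s-\delta)$ as $s\to+\infty$. I would then use monotonicity of the ratio of derivatives of $\Phi^K$ and $F$ (log-concavity of $\phi$ and of $\Phi^{K-1}$) to show that $\Phi^K-F$ changes sign at most a controlled number of times. Its positive part should then be concentrated where $G$ follows $\Phi^K$.
\item Given such a sign structure, the density difference $g-q$ has a single positive region, and then $\mathrm{TV}(G,F)=\sup_s(G-F)=\sup_s(\Phi^K-F)_+$, as required.
\item Show that $G$ is realizable by selecting from the list. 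Since $\Phi^K\le G\le\Phi\le 1-(1-\Phi)^K$, I would realize it by a monotone (quantile-type) randomized rule. For instance, select the maximum when it lies in the region where $G=\Phi^K$, and otherwise select among the candidates by a rank mixture whose Beta-mixture density of Proposition~\ref{prop:selected-law} reproduces $q$ there. Alternatively, I would invoke a Strassen/Hall-type criterion for laws dominated by the list: the criterion $\nu(B)\le 1-(1-P(B))^K$ on half-lines.
\end{enumerate}

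I expect achievability to be the main obstacle, specifically proving that $\Phi^K-F$ has the right single-crossing structure and that the truncated $G$ is exactly realizable by a selection from i.i.d.\ draws. If the single-crossing argument fails, I would instead construct the coupling directly and greedily. Let $S$ follow the quantile coupling with the maximum where it can, and reassign the deficit mass $\sup_s(\Phi^K-F)$ to the residual. I would then verify acceptance by computing $\Pr(A^c)$ at the maximizing $s^\star$, where the first-order condition $K\phi(s^\star)\Phi(s^\star)^{K-1}=\phi(s^\star-\delta)$ should make the upper-bound chain tight.
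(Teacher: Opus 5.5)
Your scalar reduction and your upper bound are correct. The threshold argument $\Pr(S\le s)\ge\Pr(A\cap\{M\le s\})\ge\Phi(s)^K-\Pr(A^c)$ is the mirror image of the paper's step $\Pr(E,V>a)\le 1-a^K$. Your crossing analysis in step~1 can also be completed, since it follows from the unimodality of $\log(p_{\omega^{\max}}/q_\delta)$ shown in the proof of Theorem~\ref{thm:acceptance}. Writing $D(s):=\Phi(s)^K-\Phi(s-\delta)$, there is a single $s_0$ with $D<0$ on $(-\infty,s_0)$ and $D>0$ on $(s_0,\infty)$. So $G=\max\{\Phi^K,F\}$ does satisfy $\mathrm{TV}(G,F)=\sup_s D(s)$.

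The genuine gap is step~3: you never show that $G$ is the law of a child selected from the list. On $\{M>s_0\}$ you can take the maximum. On $\{M\le s_0\}$, which has probability $\Phi(s_0)^K=F(s_0)$, you must pick, from $K$ i.i.d.\ draws of $\mathcal N(0,1)$ truncated to $(-\infty,s_0]$, a child whose law is exactly $\mathcal N(\delta,1)$ truncated to the same set.
\begin{itemize}
\item A rank mixture cannot do this. In the coordinate $v=\Phi(s)/\Phi(s_0)$ its selected density is a polynomial of degree at most $K-1$. The required density is proportional to $\exp\{\delta\Phi^{-1}(v\Phi(s_0))\}$, which is not a polynomial. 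This is consistent with the gap between RASS and $\alpha_K^{\mathrm{up}}$ in Figure~\ref{fig:iid-comparison}.
\item The half-line criterion does not close the gap. It only sandwiches the CDF between $\Phi^K$ and $1-(1-\Phi)^K$, and laws with atoms also satisfy that. Such laws are never realizable, since a selected child is absolutely continuous. Realizability needs $\nu(B)\le 1-(1-P(B))^K$ for every Borel $B$, plus an existence theorem that turns this inequality into a selection rule.
\item Your greedy fallback has the same hole. Coupling with the maximum alone incurs the extra rejection mass $-\min_s D(s)>0$ on the left. Removing it requires a value-dependent selection of lower-ranked children, which is exactly the missing construction.
\end{itemize}

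The missing ingredient is a Strassen-type theorem for the closed relation ``the target equals one of the drafts.'' For a scalar target law $\nu$ and draft law $P$, it says the maximal acceptance equals $1-\sup_B\{\nu(B)-[1-(1-P(B))^K]\}$ over Borel $B$, and that this value is attained. The paper proves this directly, working in uniform coordinates:
\begin{itemize}
\item it discretizes $[0,1]$ into bins and applies finite max-flow/min-cut to the bin labels;
\item it lifts the resulting coupling by drawing points uniformly inside bins;
\item it passes to a weak limit using the Portmanteau theorem on the closed sets $R_\varepsilon$.
\end{itemize}
Because the target density $h_\delta$ is increasing, the supremum then reduces to upper intervals, giving the formula with no need for $G$ or any crossing analysis.

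You can also salvage your own route. Note that $g/\phi$ is nondecreasing: it equals $e^{\delta s-\delta^2/2}$ below $s_0$ and $K\Phi(s)^{K-1}$ above. It jumps upward at $s_0$ because $s_0$ lies where $p_{\omega^{\max}}>q_\delta$. Hence the Borel condition for $G$ reduces to your half-line check. Even so, you still need the same existence theorem to turn that condition into an actual selection rule.
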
 
Note that Proposition~\ref{prop:optimal-list-acceptance} characterizes the maximal acceptance probability achievable by any list-coupling scheme when the $K$ proposals are sampled \emph{independently}. This yields a tighter bound than~\citet[Theorem~2.2]{phan2026multi}, which allows arbitrary coupling among the draft samples---an assumption that is not available in our setting. Proposition~\ref{prop:optimal-list-acceptance} reflects a simple limitation: a verifier can return a draft above a threshold $s$ only if the list contains one. The target assigns probability $1-\Phi(s-\delta)$ to this region, whereas the list contains such a draft with probability $1-\Phi(s)^K$. Any shortfall must therefore be supplied through rejection and correction. Maximizing this shortfall over $s$ gives a lower bound on rejection.
Thus, $\alpha_K^{\mathrm{up}}(\delta)$ provides a common benchmark for
RASS and D-GRS, measuring how much room remains to improve
acceptance with the same draft budget.
Figure~\ref{fig:iid-comparison} compares the acceptance probabilities of RASS and D-GRS with the optimal i.i.d. list-coupling bound $\alpha_K^{\mathrm{up}}(\delta)$, numerically demonstrating higher acceptance for RASS across the tested positive mean displacements, with the largest gains at intermediate values of $\delta$.

\begin{figure}[t]
  \centering
\begingroup
\providecommand{\FigureThreeDataFile}{figures/analysis/data/acceptance-data.csv}
\pgfplotstableread[col sep=comma]{\FigureThreeDataFile}\FigureThreeData

\definecolor{figureThreeBlue}{HTML}{1F77B4}
\definecolor{figureThreeOrange}{HTML}{FF7F0E}
\pgfplotsset{
  figure three rass/.style={color=figureThreeOrange,line width=0.95pt},
  figure three dgrs/.style={color=figureThreeBlue,dashed,line width=0.85pt},
  figure three upper/.style={color=black,line width=0.8pt}
}

\newcommand{\FigureThreePanel}[1]{%
  \addplot[figure three upper,forget plot]
    table[x=delta,y=upper_bound_k#1] {\FigureThreeData};
  \addplot[figure three rass,forget plot]
    table[x=delta,y=rass_k#1] {\FigureThreeData};
  \addplot[figure three dgrs,forget plot]
    table[x=delta,y=dgrs_k#1] {\FigureThreeData};
}

\begin{tikzpicture}
  \begin{groupplot}[
    group style={
      group size=3 by 1,
      horizontal sep=0.035\linewidth,
      ylabels at=edge left,
      yticklabels at=edge left
    },
    width=0.28\linewidth,
    height=0.14\linewidth,
    scale only axis,
    xmin=0,xmax=4,
    ymin=0,ymax=1.025,
    xtick={0,1,2,3,4},
    ytick={0,0.2,0.4,0.6,0.8,1},
    xlabel={Mean displacement $\delta$},
    ylabel={Acceptance probability},
    axis lines=left,
    axis line style={black!65,-},
    tick align=outside,
    tick style={black!65},
    tick label style={font=\scriptsize},
    label style={font=\small},
    title style={font=\small, yshift=-7pt},
    yticklabel style={
      /pgf/number format/fixed,
      /pgf/number format/precision=1,
      /pgf/number format/fixed zerofill
    },
    scaled ticks=false,
    grid=major,
    major grid style={black!10,line width=0.25pt},
    no markers
  ]
    \nextgroupplot[
      title={$K=2$},
      legend to name=figureThreeAcceptanceLegend,
      legend columns=3,
      legend style={
        draw=none,
        font=\small,
        cells={anchor=west},
        /tikz/every even column/.append style={column sep=0.6em}
      }
    ]
    \FigureThreePanel{2}
    \addlegendimage{figure three rass}
    \addlegendentry{RASS}
    \addlegendimage{figure three dgrs}
    \addlegendentry{D-GRS}
    \addlegendimage{figure three upper}
    \addlegendentry{$\alpha_K^{\mathrm{up}}(\delta)$}

    \nextgroupplot[title={$K=4$}]
    \FigureThreePanel{4}

    \nextgroupplot[title={$K=8$}]
    \FigureThreePanel{8}
  \end{groupplot}
  \node[anchor=north] at ([yshift=4mm]current bounding box.north)
    {\pgfplotslegendfromname{figureThreeAcceptanceLegend}};
\end{tikzpicture}
\endgroup
  \par\vspace{-1.em}
  \def\RankSimplexDir{figures/analysis/data/}
\begingroup
\providecommand{\RankSimplexDir}{}
\pgfplotstableread[col sep=comma]{\RankSimplexDir rank-simplex-points.csv}\RankSimplexPoints

\definecolor{rankSimplexBlue}{HTML}{1F77B4}
\definecolor{rankSimplexOrange}{HTML}{FF7F0E}
\definecolor{rankSimplexPurple}{HTML}{9467BD}
\pgfplotsset{
  rank simplex dgrs/.style={color=rankSimplexBlue,line width=0.9pt},
  rank simplex path/.style={color=rankSimplexOrange,line width=1.2pt,
    line cap=round,dash pattern=on 0pt off 2.2pt},
  rank simplex unif/.style={only marks,mark=*,mark size=2.3pt,
    mark options={fill=white,draw=black,line width=0.4pt}},
  rank simplex max/.style={only marks,mark=triangle*,mark size=2.8pt,
    mark options={fill=rankSimplexPurple,draw=black,line width=0.4pt}},
  rank simplex star/.style={only marks,mark=rank simplex star,mark size=4pt,
    mark options={fill=rankSimplexOrange,draw=black,line width=0.4pt}},
}
\tikzset{
  rank simplex value/.style={anchor=north,inner sep=0pt,font=\scriptsize},
  rank simplex vertex/.style={inner sep=1pt,font=\scriptsize},
}

\newcommand{\RankSimplexPanel}[1]{%
  \addplot[patch,shader=interp,forget plot]
    table[col sep=comma,x=x,y=y,point meta=\thisrow{c#1}]
      {\RankSimplexDir rank-simplex-mesh.csv};
  \addplot[rank simplex dgrs,unbounded coords=jump,forget plot]
    table[col sep=comma,x=x,y=y] {\RankSimplexDir rank-simplex-dgrs-#1.csv};
  \draw[black,line width=0.4pt,line join=round]
    (axis cs:0,0) -- (axis cs:1,0) -- (axis cs:0.5,0.866025) -- cycle;
  \addplot[rank simplex path,forget plot]
    table[col sep=comma,x=x,y=y] {\RankSimplexDir rank-simplex-path.csv};
  \addplot[rank simplex unif,forget plot] coordinates {(0.5,0.288675)};
  \addplot[rank simplex max,forget plot] coordinates {(0.5,0.866025)};
  \foreach \col in {delta,star_x,star_y,lab_unif,lab_max,lab_dgrs,lab_rass}{%
    \pgfplotstablegetelem{#1}{\col}\of\RankSimplexPoints
    \global\expandafter\let\csname RankSimplex\col#1\endcsname\pgfplotsretval
  }%
  \addplot[rank simplex star,forget plot] coordinates
    {(\csname RankSimplexstar_x#1\endcsname,\csname RankSimplexstar_y#1\endcsname)};
  \node[anchor=south,font=\small] at (axis cs:0.5,0.91)
    {$\delta=\csname RankSimplexdelta#1\endcsname$};
  \node[rank simplex value] at (axis cs:0.5,-0.13) {%
    \begin{tabular}{@{}l@{\hspace{0.9em}}l@{}}
      $\omega_{\mathrm{unif}}$: \csname RankSimplexlab_unif#1\endcsname &
      $\omega_{\mathrm{max}}$: \csname RankSimplexlab_max#1\endcsname \\
      D-GRS: \csname RankSimplexlab_dgrs#1\endcsname &
      RASS: \csname RankSimplexlab_rass#1\endcsname
    \end{tabular}};
}

\begin{tikzpicture}
  \begin{groupplot}[
    group style={group size=4 by 1,horizontal sep=0.015\linewidth},
    scale only axis,width=0.2\linewidth,height=0.1608\linewidth,
    xmin=-0.02,xmax=1.02,ymin=-0.04,ymax=0.9,
    hide axis,clip=false,
    colormap={rank simplex greys}{rgb255=(255,255,255) rgb255=(240,240,240)
      rgb255=(217,217,217) rgb255=(189,189,189) rgb255=(150,150,150)
      rgb255=(115,115,115) rgb255=(82,82,82) rgb255=(37,37,37)
      rgb255=(0,0,0)},
    point meta min=0.5,point meta max=1,
  ]
    \nextgroupplot[
      legend to name=rankSimplexLegend,
      legend columns=5,
      legend style={draw=none,font=\scriptsize,cells={anchor=west},
        /tikz/every even column/.append style={column sep=0.9em}}
    ]
    \RankSimplexPanel{0}
    \node[rank simplex vertex,anchor=north] at (axis cs:0,-0.01) {$r=1$};
    \node[rank simplex vertex,anchor=north] at (axis cs:1,-0.01) {$r=2$};
    \node[rank simplex vertex,anchor=west] at (axis cs:0.55,0.866) {$r=3$};
    \addlegendimage{rank simplex unif}
    \addlegendentry{$\omega_{\mathrm{unif}}$ (RMC)}
    \addlegendimage{rank simplex max}
    \addlegendentry{$\omega_{\mathrm{max}}$}
    \addlegendimage{rank simplex star}
    \addlegendentry{RASS $\omega^\star$}
    \addlegendimage{rank simplex dgrs}
    \addlegendentry{D-GRS acceptance}
    \addlegendimage{rank simplex path}
    \addlegendentry{$\omega^\star(\delta)$ path}

    \nextgroupplot
    \RankSimplexPanel{1}

    \nextgroupplot
    \RankSimplexPanel{2}

    \nextgroupplot[
      colorbar,
      colorbar style={
        hide axis=false,xmin=0,xmax=1,clip=true,ytick pos=right,
        width=0.012\linewidth,
        xshift=0.005\linewidth,
        tick align=outside,
        major tick length=2pt,
        ytick={0.5,0.75,1},
        yticklabel style={font=\scriptsize,
          /pgf/number format/fixed,
          /pgf/number format/precision=2,
          /pgf/number format/fixed zerofill},
        ylabel={$\alpha_3(\delta;\omega)\,/\,\alpha^{\mathrm{up}}_3(\delta)$},
        ylabel style={font=\scriptsize},
      }
    ]
    \RankSimplexPanel{3}
  \end{groupplot}
  \node[anchor=north,inner sep=0pt] at ([xshift=0.0075\linewidth,yshift=-2.4em]group c2r1.south east)
    {\pgfplotslegendfromname{rankSimplexLegend}};
\end{tikzpicture}
\endgroup

  \caption{RASS and D-GRS acceptance compared with $\alpha^{\mathrm{up}}_K(\delta)$. \emph{Top:} acceptance versus $\delta$ for $K\in\{2,4,8\}$, with optimized RASS weights from (12). \emph{Bottom:} relative acceptance $\alpha_3(\delta;\omega)/\alpha^{\mathrm{up}}_3(\delta)$ on $\mathcal{W}_3$ at four displacements. The solid blue contour encloses weights outperforming D-GRS; the dotted curve traces $\omega^\star(\delta)$ for $\delta\in(0,2.5]$. Acceptance values are reported below each panel.}
  \label{fig:iid-comparison}
\end{figure}

\section{Experiments}
\label{sec:experiments}
We compare RASS with D-GRS~\citep{bullo2026drafttrees} and RMC~\citep{debortoli2025speculative} on a Gaussian-mixture target, class-conditional CIFAR-10 and unconditional FFHQ generation in pixel space, and latent text-to-image generation with Stable Diffusion 3.5 (SD3.5) on COCO 2014 captions. Within each experiment, all methods use the same target discretization and delayed-velocity proposal, reusing target evaluations between speculative rounds. All three methods preserve the discretized target law, with exactness established for RASS in Theorem~\ref{thm:exact}. Consistently with this guarantee, Table~\ref{tab:quality} reports comparable FID and CLIP scores across methods.
RASS and D-GRS use identical tree configurations, while RMC uses a chain matched to the same budget and truncated at the sampling horizon. Following Section~\ref{sec:background}, parallel target evaluation includes the leaves, with budget $B=|\mathcal V|$.
We report mean per-trajectory target-call speedup over $100$ trajectories per configuration for the Gaussian-mixture and SD3.5 experiments and $500$ for CIFAR-10 and FFHQ.
Shaded bands indicate $95\%$ confidence intervals.
Implementation details and results with verification restricted to internal nodes are reported in Appendix~\ref{app:exp-details}.



\paragraph{Gaussian-mixture target.}
The target distribution is an equally weighted mixture of five isotropic Gaussians in $\R^{512}$. We sample uniformly the component means from $[-2,2]^{512}$ and the standard deviations from $[0.1,0.25]$. We use a 30-step schedule, churn value $\varepsilon=0.3$, and the full grid $K,L\in\{2,\dots,7\}$. Figure~\ref{fig:intro-frontier} shows that RASS achieves higher mean speedup than D-GRS at every evaluated budget, reaching a $6.4\%$ higher maximum. Within the tested grid, RASS exceeds D-GRS's highest observed mean speedup using only $28\%$ of its budget. In Appendix~\ref{app:internal-node-budget}, we report results for $\varepsilon\in\{0.06, 0.1,0.6\}$. 

\paragraph{Pixel-space diffusion.}

\begin{figure}[t]
  \centering
  \input{figures/pixel-space/pixel-space-style}
\pgfplotstableread[col sep=comma]{figures/pixel-space/data/pixel-space-speedup-results-full-tree.csv}\PixelData
\begin{tikzpicture}
  \begin{groupplot}[
    group style={
      group size=3 by 2,
      horizontal sep=0.3cm,vertical sep=0.6cm,
      xlabels at=edge bottom,xticklabels at=edge bottom,
      ylabels at=edge left,yticklabels at=edge left
    },
    width=0.255\linewidth,height=0.12\linewidth,
    scale only axis,
    xmin=0,xmax=355,xtick={0,100,200,300},
    xlabel={Parallel budget $B=|\mathcal{V}|$},
    ylabel={Mean speedup},
    axis lines=box,tick pos=left,tick align=outside,
    tick label style={font=\scriptsize},
    label style={font=\small},title style={font=\small,inner sep=0pt,yshift=2pt},
    every axis title shift=0pt,
    scaled ticks=false,
    yticklabel style={/pgf/number format/fixed,
      /pgf/number format/precision=1,/pgf/number format/fixed zerofill},
    unbounded coords=discard,filter discard warning=false,
  ]
    \nextgroupplot[
        title={CIFAR-10, $\varepsilon=0.1$},
        ymin=1.9,
        ymax=2.85,
        ytick={2.0,2.2,2.4,2.6,2.8},
        legend to name=pixelLegend, 
        legend columns=3,
        legend style={draw=none,font=\small,/tikz/every even column/.append style={column sep=0.4em}},
        grid=major,
        major grid style={densely dotted,draw=black,opacity=0.2}
    ]
    \PixelPanel{cifar10}{0.1}
    \addlegendimage{pixelDGRS,thick,mark=*}\addlegendentry{D-GRS}
    \addlegendimage{pixelRASS,thick,mark=*}\addlegendentry{RASS}
    \addlegendimage{pixelRMC,thick,mark=*}\addlegendentry{RMC}

    \nextgroupplot[
        title={CIFAR-10, $\varepsilon=0.3$},
        ymin=1.9,
        ymax=2.85,
        ytick={2.0,2.2,2.4,2.6,2.8},
        grid=major,
        major grid style={densely dotted,draw=black,opacity=0.2}
    ]
    \PixelPanel{cifar10}{0.3}

    \nextgroupplot[
        title={CIFAR-10, $\varepsilon=0.6$},
        ymin=1.9,
        ymax=2.85,
        ytick={2.0,2.2,2.4,2.6,2.8},
        grid=major,
        major grid style={densely dotted,draw=black,opacity=0.2}
    ]
    \PixelPanel{cifar10}{0.6}

    \nextgroupplot[
        title={FFHQ, $\varepsilon=0.1$},
        ymin=1.9,
        ymax=2.32,
        ytick={2.0,2.1,2.2,2.3},
        grid=major,
        major grid style={densely dotted,draw=black,opacity=0.2}
    ]
    \PixelPanel{ffhq}{0.1}

    \nextgroupplot[
        title={FFHQ, $\varepsilon=0.3$},
        ymin=1.9,
        ymax=2.32,
        ytick={2.0,2.1,2.2,2.3},
        grid=major,
        major grid style={densely dotted,draw=black,opacity=0.2}
    ]
    \PixelPanel{ffhq}{0.3}

    \nextgroupplot[title={FFHQ, $\varepsilon=0.6$},
        ymin=1.9,
        ymax=2.32,
        ytick={2.0,2.1,2.2,2.3},
        grid=major,
        major grid style={densely dotted,draw=black,opacity=0.2}
    ]
    \PixelPanel{ffhq}{0.6}
  \end{groupplot}
  \node[anchor=south] at ([yshift=0mm]current bounding box.north){\pgfplotslegendfromname{pixelLegend}};

\end{tikzpicture}
  \caption{Mean target-call speedup on CIFAR-10 (top) and FFHQ (bottom) across parallel budgets and churn values. Curves average over $500$ images per configuration, and shading shows $95\%$ confidence intervals.}
  \label{fig:pixel-speedup-full-tree}
\end{figure}

We use the pretrained denoisers of \citet{karras2022elucidating}, with the velocity parameterization described in Appendix~\ref{app:delayed-drift-prefetch}.
CIFAR-10 generates $32\times32$ images conditioned on uniformly sampled class labels, while FFHQ generates unconditional $64\times64$ images. We use a $100$-step schedule and evaluate $(K,L)\in\{(2,3),(2,5),(3,4),(4,4)\}$ at $\varepsilon\in\{0.1,0.3,0.6\}$. These configurations correspond to full-tree budgets $B\in\{15,63,121,341\}$. Figure~\ref{fig:pixel-speedup-full-tree} shows higher mean speedups for RASS than D-GRS in every evaluated configuration. On CIFAR-10, the relative improvement increases with churn, reaching $20.9\%$ at $\varepsilon=0.6$ and $B=63$. 
On FFHQ, whose state dimension is 4$\times$ larger, absolute speedups are lower than CIFAR-10, but RASS consistently improves on D-GRS by $1.4\%$ to $4.6\%$. RASS also uses the parallel budget more efficiently. In fact, on CIFAR-10, its mean speedup at $B=63$ exceeds D-GRS's at $B=341$ for all tested churn values. On FFHQ, the same comparison holds with RASS at $B=121$ for $\varepsilon\le0.3$. 

\paragraph{High-dimensional Latent-space Text-to-image.}
\label{sec:exp-sd3}
\begin{figure}[t]
    \centering
%
%
%

\begingroup

\def\RuleDGRS{d-grs}
\def\RuleRASS{paws}
\def\RuleRMC{rmc}

\def\SDMaxBudget{156}

\definecolor{DGRSColor}{RGB}{31,119,180}
\definecolor{RASSColor}{RGB}{255,127,14}
\definecolor{RMCColor}{RGB}{44,160,44}

\def\DataFile{figures/latent-space/data/sd3-speedup-full-tree.csv}

\pgfplotsset{
    sd select/.style 2 args={
        x filter/.code={
            \edef\sdRow{\thisrow{rule}}%
            \edef\sdWant{#1}%
            \ifx\sdRow\sdWant\else\def\pgfmathresult{nan}\fi
            \edef\sdLWant{#2}%
            \ifx\sdLWant\empty\else
                \ifnum\thisrow{L}=\sdLWant\relax\else\def\pgfmathresult{nan}\fi
            \fi
            \ifnum\thisrow{budget}>\SDMaxBudget\relax\def\pgfmathresult{nan}\fi
        },
        unbounded coords=discard,
        filter discard warning=false,
    },
}

\newcommand{\sdSeries}[8]{%
    \addplot[name path=#1-#2-#3-hi, draw=none, forget plot, sd select={#1}{#2}]
        table[x=#3, y=#6] {\DataFile};
    \addplot[name path=#1-#2-#3-lo, draw=none, forget plot, sd select={#1}{#2}]
        table[x=#3, y=#5] {\DataFile};
    \addplot[draw=none, fill=#7, fill opacity=0.13, forget plot]
        fill between[of=#1-#2-#3-hi and #1-#2-#3-lo];
    \addplot[color=#7, #8, sd select={#1}{#2}]
        table[x=#3, y=#4] {\DataFile};
}

\begin{tikzpicture}

\begin{groupplot}[
    group style={
        group size=3 by 1,
        horizontal sep=0.35cm,
        yticklabels at=edge left,
        ylabels at=edge left,
    },
    scale only axis,
    width=0.265\linewidth,
    height=0.14\linewidth,
    ymin=1.086, ymax=1.166,
    ytick={1.09,1.10,...,1.161},
    yticklabel style={
        /pgf/number format/.cd, fixed, fixed zerofill, precision=2,
    },
    grid=major,
    grid style={line width=.1pt, draw=gray!20},
    axis line style={black!70},
    tick align=outside,
    tick pos=left,
    tick style={black!70},
    table/col sep=comma,
    title style={font=\small, yshift=-6pt},
    label style={font=\small},
    tick label style={font=\scriptsize},
    every axis plot/.append style={
        line width=1.1pt,
    },
    legend columns=3,
    legend style={
        draw=none,
        font=\small,
        /tikz/every even column/.append style={column sep=0.8em},
    },
]

\nextgroupplot[
    title={Best within budget},
    title style={font=\small, yshift=-2pt},
    xlabel={Parallel Budget $B=|\mathcal{V}|$},
    ylabel={Speculative speedup},
    xmin=6, xmax=180,
    legend to name=sd3legend,
]
\sdSeries{\RuleDGRS}{}{budget}{best_speedup}{best_ci_low}{best_ci_high}{DGRSColor}{mark=*, mark size=1.5pt}
\addlegendentry{D-GRS}
\sdSeries{\RuleRASS}{}{budget}{best_speedup}{best_ci_low}{best_ci_high}{RASSColor}{mark=square*, mark size=1.5pt}
\addlegendentry{RASS}
\sdSeries{\RuleRMC}{}{budget}{best_speedup}{best_ci_low}{best_ci_high}{RMCColor}{mark=triangle*, mark size=1.7pt}
\addlegendentry{RMC}

\nextgroupplot[
    title={Lookahead $L=2$},
    xlabel={Branching factor $K$},
    xmin=1.8, xmax=6.2,
    xtick={2,...,6},
]
\sdSeries{\RuleDGRS}{2}{K}{mean_speedup}{ci_low}{ci_high}{DGRSColor}{mark=*, mark size=1.5pt}
\sdSeries{\RuleRASS}{2}{K}{mean_speedup}{ci_low}{ci_high}{RASSColor}{mark=square*, mark size=1.5pt}
\sdSeries{\RuleRMC}{2}{K}{mean_speedup}{ci_low}{ci_high}{RMCColor}{mark=triangle*, mark size=1.7pt}

\nextgroupplot[
    title={Lookahead $L=3$},
    xlabel={Branching factor $K$},
    xmin=1.85, xmax=5.15,
    xtick={2,...,5},
]
\sdSeries{\RuleDGRS}{3}{K}{mean_speedup}{ci_low}{ci_high}{DGRSColor}{mark=*, mark size=1.5pt}
\sdSeries{\RuleRASS}{3}{K}{mean_speedup}{ci_low}{ci_high}{RASSColor}{mark=square*, mark size=1.5pt}
\sdSeries{\RuleRMC}{3}{K}{mean_speedup}{ci_low}{ci_high}{RMCColor}{mark=triangle*, mark size=1.7pt}

\end{groupplot}

\node[anchor=south, inner sep=0pt] at ([yshift=-3pt]group c2r1.outer north)
    {\pgfplotslegendfromname{sd3legend}};

\end{tikzpicture}

\endgroup
    \caption{Mean target-call speedup for SD3.5-medium. Left: best observed mean speedup within parallel budget $B=|\cV|$. Middle and right: speedup versus branching factor $K$ at lookahead $L=2$ and $L=3$, respectively. Shading indicates $95\%$ confidence intervals over $100$ paired caption--noise inputs.}
    \label{fig:sd3}
\end{figure}



We use Stable Diffusion 3.5 (SD3.5)\footnote{The model is available in Huggingface as \texttt{stabilityai/stable-diffusion-3.5-medium}.} with the linear-interpolant SDE parameterization of \citet{bullo2026drafttrees}. We generate $512\times512$ images in a $16\times64\times64$ latent space ($d=65{,}536$).
We use $N=50$ steps, classifier-free guidance $7.0$, timestep shift $3.0$, and churn $\varepsilon=0.8$, evaluating $K\in\{2,3,4,5,6\}$ and $L\in\{2,3\}$ on $100$ shared COCO~2014 validation captions with paired initial noise. Figure~\ref{fig:sd3} shows higher mean speedups for RASS than both baselines in every evaluated configuration, reaching $1.152\times$ versus $1.142\times$ for D-GRS and $1.105\times$ for RMC. RASS exceeds D-GRS's best observed mean speedup at $B=31$ using $28\%$ less parallel budget. At matched configurations, its relative improvement over D-GRS reaches $1.93\%$, at $(K,L)=(5,3)$, i.e., $B=156$.




\begin{table}[t]
  \centering
  \small
  \setlength{\tabcolsep}{3pt}
  \begin{tabular}{lccccccc}
    \toprule
    & \multicolumn{2}{c}{FID $\downarrow$}
    & \multicolumn{5}{c}{SD3.5 CLIP $\uparrow$} \\
    \cmidrule(lr){2-3}\cmidrule(lr){4-8}
    Sampler & CIFAR-10 & FFHQ
      & $K=2$ & $K=3$ & $K=4$ & $K=5$ & $K=6$ \\
    \midrule
    Target & 1.838 & 2.929
      & \multicolumn{5}{c}{$26.68\pm0.33$} \\
    RMC & 1.871 & 2.898
      & $26.53\pm0.33$ & $26.74\pm0.32$ & $26.76\pm0.35$
      & $26.48\pm0.33$ & $26.80\pm0.33$ \\
    D-GRS & 1.854 & 2.902
      & $26.50\pm0.29$ & $26.69\pm0.37$ & $26.48\pm0.33$
      & $26.71\pm0.33$ & $26.82\pm0.34$ \\
    RASS & 1.844 & 2.936
      & $26.53\pm0.33$ & $26.49\pm0.35$ & $26.31\pm0.33$
      & $26.73\pm0.33$ & $26.82\pm0.33$ \\
    \bottomrule
  \end{tabular}
  \caption{Sample quality.
FID ($\downarrow$) compares the empirical means and covariances of Inception-v3 features from $50$k generated images and real reference images on CIFAR-10 and FFHQ, with $(K,L)=(2,3)$ and $\varepsilon=0.3$. For SD3.5, CLIP score ($\uparrow$) embeddings are obtained from \texttt{openai/clip-vit-large-patch14}.
We report $\textrm{mean}\pm\textrm{standard error}$ over $100$ shared COCO~2014
prompts at $L=2$ and $\varepsilon=0.8$.
The target reports the standard sampler without speculation.}
    \label{tab:quality}
\end{table}

\section{Limitations}
\label{sec:limitations}

Following the convention in the growing literature on speculative diffusion sampling~\citep{debortoli2025speculative,bullo2026drafttrees}, we report speed-up in terms of reductions in sequential target-model evaluations at matched parallel compute budgets. This metric captures the algorithmic benefit of speculation, but translating it into wall-clock acceleration is inherently hardware dependent. Speculative sampling trades fewer sequential target calls for larger parallel batches, additional drafting, and verification overhead; latency gains therefore materialize only when sufficiently capable accelerators can execute these operations efficiently, a requirement which is challenging in an academic setting. Our reported speed-ups should therefore be interpreted primarily as improvements in target-call efficiency rather than guaranteed wall-clock acceleration. Appendix~\ref{app:overhead} separately quantifies the scalar verification overhead, while optimizing the full systems pipeline remains an important direction for future work. Beyond these systems constraints, RASS also relies on specific distributional assumptions. In particular, the drafts must be conditionally independent Gaussian samples sharing the isotropic covariance of the target transition. This restricts its direct applicability to stochastic samplers satisfying this structure and motivates extensions to more general proposal and covariance families.


\section{Conclusion and Future Work}
\label{sec:conclusion}

We introduced RASS, a rank-aware verification rule for speculative diffusion draft trees. By optimizing a distribution over candidate ranks and maximally coupling the selected proposal with the target, RASS improves acceptance while preserving the target law for every feasible choice of rank weights. We also characterized the optimal acceptance probability attainable from independent Gaussian drafts, providing a common benchmark for exact list-based verifiers. Across Gaussian-mixture, pixel-space, and latent text-to image experiments, RASS consistently improves mean target call speedup over D-GRS and RMC at matched parallel budgets.\\
Several directions remain open. First, the gap between optimized rank mixtures and the optimal list-coupling bound motivates richer selection rules that exploit more information from the realized draft list. Developing computationally efficient verifiers that close this gap would further improve the use of parallel proposals. Second, integrating RASS with efficient batching, rank-weight precomputation, and adaptive tree construction may help translate target-call savings into lower generation latency.
Finally, continuous diffusion language models offer a promising application:
RePlaid's stochastic sampler without self-conditioning has Gaussian transitions
compatible with RASS when proposals retain the target covariance \citep{yang2026continuous}. Evaluation and handling self-conditioning
dependencies remain future work.


\section{Disclosure of using AI}
We used generative AI to assist with writing, literature searches, numerical checks, and interpreting results and limitations. The authors reviewed all AI-assisted material and take full responsibility for the manuscript’s accuracy and integrity.

\bibliographystyle{iclr2027_conference}
\bibliography{iclr2027_conference}

@article{bullo2026drafttrees,
  title   = {Accelerating Diffusion Sampling via Speculative Draft Trees},
  author  = {Bullo, Marcello and Liu, Yanxiao and G{\"u}ner, {\"O}yk{\"u} S{\i}la and Mukherjee, Arpan and G{\"u}nd{\"u}z, Deniz},
  journal = {arXiv preprint arXiv:2609.17691},
  year    = {2026},
  url     = {https://arxiv.org/abs/2609.17691}
}

@inproceedings{debortoli2025speculative,
  title     = {Accelerated Diffusion Models via Speculative Sampling},
  author    = {De Bortoli, Valentin and Galashov, Alexandre and Gretton, Arthur and Doucet, Arnaud},
  booktitle = {International Conference on Machine Learning (ICML)},
  pages     = {12590--12631},
  year      = {2025},
  publisher = {PMLR}
}

@article{soen2026block,
  title   = {Accelerating Speculative Diffusions via Block Verification},
  author  = {Soen, Alexander and Husain, Hisham and De Bortoli, Valentin and Doucet, Arnaud},
  journal = {arXiv preprint arXiv:2606.13426},
  year    = {2026}
}

@inproceedings{leviathan2023fast,
  title     = {Fast Inference from Transformers via Speculative Decoding},
  author    = {Leviathan, Yaniv and Kalman, Matan and Matias, Yossi},
  booktitle = {International Conference on Machine Learning (ICML)},
  pages     = {19274--19286},
  year      = {2023},
  publisher = {PMLR}
}

@article{chen2023accelerating,
  title   = {Accelerating Large Language Model Decoding with Speculative Sampling},
  author  = {Chen, Charlie and Borgeaud, Sebastian and Irving, Geoffrey and Lespiau, Jean-Baptiste and Sifre, Laurent and Jumper, John},
  journal = {arXiv preprint arXiv:2302.01318},
  year    = {2023}
}

@inproceedings{sun2023spectr,
  title     = {SpecTr: Fast Speculative Decoding via Optimal Transport},
  author    = {Sun, Ziteng and Suresh, Ananda Theertha and Ro, Jae Hun and Beirami, Ahmad and Jain, Himanshu and Yu, Felix},
  booktitle = {Advances in Neural Information Processing Systems},
  volume    = {36},
  pages     = {30222--30242},
  year      = {2023}
}

@inproceedings{miao2024specinfer,
  title     = {SpecInfer: Accelerating Large Language Model Serving with Tree-based Speculative Inference and Verification},
  author    = {Miao, Xupeng and Oliaro, Gabriele and Zhang, Zhihao and Cheng, Xinhao and Wang, Zeyu and Zhang, Zhengxin and Wong, Rae Ying Yee and Zhu, Alan and Yang, Lijie and Shi, Xiaoxiang and others},
  booktitle = {Proceedings of the 29th ACM International Conference on Architectural Support for Programming Languages and Operating Systems, Volume 3},
  pages     = {932--949},
  year      = {2024}
}

@inproceedings{khisti2025multidraft,
  title     = {Multi-Draft Speculative Sampling: Canonical Decomposition and Theoretical Limits},
  author    = {Khisti, Ashish and Ebrahimi, M Reza and Dbouk, Hassan and Behboodi, Arash and Memisevic, Roland and Louizos, Christos},
  booktitle = {International Conference on Learning Representations (ICLR)},
  year      = {2025}
}

@inproceedings{hu2025optimal,
  title     = {Towards Optimal Multi-Draft Speculative Decoding},
  author    = {Hu, Zhengmian and Zheng, Tong and Viswanathan, Vignesh and Chen, Ziyi and Rossi, Ryan and Wu, Yihan and Manocha, Dinesh and Huang, Heng},
  booktitle = {International Conference on Learning Representations (ICLR)},
  pages     = {3181--3203},
  year      = {2025}
}

@article{lin2026spectrgbv,
  title   = {SpecTr-GBV: Multi-Draft Block Verification Accelerating Speculative Decoding},
  author  = {Lin, Yijun and Sheng, Jinhao and Cai, Qingyue and Zhou, Feng},
  journal = {arXiv preprint arXiv:2604.25925},
  year    = {2026}
}

@inproceedings{rowan2025list,
  title     = {List-Level Distribution Coupling with Applications to Speculative Decoding and Lossy Compression},
  author    = {Rowan, Joseph and Phan, Truong Buu and Khisti, Ashish},
  booktitle = {Advances in Neural Information Processing Systems},
  volume    = {38},
  pages     = {5576--5619},
  year      = {2025}
}

@inproceedings{daliri2025coupling,
  title     = {Coupling without Communication and Drafter-Invariant Speculative Decoding},
  author    = {Daliri, Majid and Musco, Christopher and Suresh, Ananda Theertha},
  booktitle = {2025 IEEE International Symposium on Information Theory (ISIT)},
  pages     = {1--6},
  year      = {2025},
  organization = {IEEE}
}

@article{harsha2010communication,
  title   = {The Communication Complexity of Correlation},
  author  = {Harsha, Prahladh and Jain, Rahul and McAllester, David and Radhakrishnan, Jaikumar},
  journal = {IEEE Transactions on Information Theory},
  volume  = {56},
  number  = {1},
  pages   = {438--449},
  year    = {2010}
}

@inproceedings{flamich2023adaptive,
  title     = {Adaptive Greedy Rejection Sampling},
  author    = {Flamich, Gergely and Theis, Lucas},
  booktitle = {2023 IEEE International Symposium on Information Theory (ISIT)},
  pages     = {454--459},
  year      = {2023},
  organization = {IEEE}
}

@inproceedings{flamich2023greedy,
  title     = {Greedy Poisson Rejection Sampling},
  author    = {Flamich, Gergely},
  booktitle = {Advances in Neural Information Processing Systems},
  volume    = {36},
  pages     = {37089--37127},
  year      = {2023}
}

@article{flamich2026stochastic,
  title   = {Data Compression with Stochastic Codes},
  author  = {Flamich, Gergely and G{\"u}nd{\"u}z, Deniz},
  journal = {IEEE BITS the Information Theory Magazine},
  year    = {2026},
  doi     = {10.1109/MBITS.2026.3702602}
}

@article{li2024channel,
  title   = {Channel Simulation: Theory and Applications to Lossy Compression and Differential Privacy},
  author  = {Li, Cheuk Ting},
  journal = {Foundations and Trends in Communications and Information Theory},
  volume  = {21},
  number  = {6},
  pages   = {847--1106},
  year    = {2024}
}

@article{li2018strong,
  title   = {Strong Functional Representation Lemma and Applications to Coding Theorems},
  author  = {Li, Cheuk Ting and El Gamal, Abbas},
  journal = {IEEE Transactions on Information Theory},
  volume  = {64},
  number  = {11},
  pages   = {6967--6978},
  year    = {2018}
}

@article{hill2026rejection,
  title   = {Rejection Sampling is Optimal for Relative Entropy Coding},
  author  = {Hill, Spencer and Alajaji, Fady and Linder, Tam{\'a}s and Flamich, Gergely},
  journal = {To appear at the 2026 IEEE Information Theory Workshop (ITW)},
  year    = {2026}
}

@inproceedings{sohl2015deep,
  title     = {Deep Unsupervised Learning using Nonequilibrium Thermodynamics},
  author    = {Sohl-Dickstein, Jascha and Weiss, Eric and Maheswaranathan, Niru and Ganguli, Surya},
  booktitle = {International Conference on Machine Learning},
  pages     = {2256--2265},
  year      = {2015},
  publisher = {PMLR}
}

@inproceedings{ho2020denoising,
  title     = {Denoising Diffusion Probabilistic Models},
  author    = {Ho, Jonathan and Jain, Ajay and Abbeel, Pieter},
  booktitle = {Advances in Neural Information Processing Systems},
  volume    = {33},
  pages     = {6840--6851},
  year      = {2020}
}

@inproceedings{song2021score,
  title     = {Score-Based Generative Modeling through Stochastic Differential Equations},
  author    = {Song, Yang and Sohl-Dickstein, Jascha and Kingma, Diederik P and Kumar, Abhishek and Ermon, Stefano and Poole, Ben},
  booktitle = {International Conference on Learning Representations},
  year      = {2021},
  url       = {https://openreview.net/forum?id=PxTIG12RRHS}
}

@inproceedings{song2021ddim,
  title     = {Denoising Diffusion Implicit Models},
  author    = {Song, Jiaming and Meng, Chenlin and Ermon, Stefano},
  booktitle = {International Conference on Learning Representations},
  year      = {2021},
  url       = {https://openreview.net/forum?id=St1giarCHLP}
}

@inproceedings{karras2022elucidating,
  title     = {Elucidating the Design Space of Diffusion-Based Generative Models},
  author    = {Karras, Tero and Aittala, Miika and Aila, Timo and Laine, Samuli},
  booktitle = {Advances in Neural Information Processing Systems},
  volume    = {35},
  pages     = {26565--26577},
  year      = {2022}
}

@inproceedings{lipman2022flow,
  title     = {Flow Matching for Generative Modeling},
  author    = {Lipman, Yaron and Chen, Ricky TQ and Ben-Hamu, Heli and Nickel, Maximilian and Le, Matthew},
  booktitle = {International Conference on Learning Representations},
  year      = {2023}
}

@article{yang2023diffusion,
  title   = {Diffusion Models: A Comprehensive Survey of Methods and Applications},
  author  = {Yang, Ling and Zhang, Zhilong and Song, Yang and Hong, Shenda and Xu, Runsheng and Zhao, Yue and Zhang, Wentao and Cui, Bin and Yang, Ming-Hsuan},
  journal = {ACM Computing Surveys},
  volume  = {56},
  number  = {4},
  pages   = {1--39},
  year    = {2023}
}

@inproceedings{zhang2023fast,
  title     = {Fast Sampling of Diffusion Models with Exponential Integrator},
  author    = {Zhang, Qinsheng and Chen, Yongxin},
  booktitle = {International Conference on Learning Representations},
  year      = {2023}
}

@article{lu2025dpmpp,
  title   = {DPM-Solver++: Fast Solver for Guided Sampling of Diffusion Probabilistic Models},
  author  = {Lu, Cheng and Zhou, Yuhao and Bao, Fan and Chen, Jianfei and Li, Chongxuan and Zhu, Jun},
  journal = {Machine Intelligence Research},
  volume  = {22},
  number  = {4},
  pages   = {730--751},
  year    = {2025}
}

@inproceedings{song2023consistency,
  title     = {Consistency Models},
  author    = {Song, Yang and Dhariwal, Prafulla and Chen, Mark and Sutskever, Ilya},
  booktitle = {International Conference on Machine Learning},
  pages     = {32211--32252},
  year      = {2023},
  publisher = {PMLR}
}

@inproceedings{salimans2022progressive,
  title     = {Progressive Distillation for Fast Sampling of Diffusion Models},
  author    = {Salimans, Tim and Ho, Jonathan},
  booktitle = {International Conference on Learning Representations},
  year      = {2022}
}

@inproceedings{sabour2025align,
  title     = {Align Your Flow: Scaling Continuous-Time Flow Map Distillation},
  author    = {Sabour, Amirmojtaba and Fidler, Sanja and Kreis, Karsten},
  booktitle = {Advances in Neural Information Processing Systems},
  volume    = {38},
  pages     = {146459--146512},
  year      = {2025}
}

@inproceedings{geng2026improved,
  title     = {Improved Mean Flows: On the Challenges of Fastforward Generative Models},
  author    = {Geng, Zhengyang and Lu, Yiyang and Wu, Zongze and Shechtman, Eli and Kolter, J. Zico and He, Kaiming},
  booktitle = {Proceedings of the IEEE/CVF Conference on Computer Vision and Pattern Recognition (CVPR)},
  pages     = {30467--30476},
  year      = {2026}
}

@inproceedings{zhou2026terminal,
  title     = {Terminal Velocity Matching},
  author    = {Zhou, Linqi and Parger, Mathias and Haque, Ayaan and Song, Jiaming},
  booktitle = {The Fourteenth International Conference on Learning Representations},
  year      = {2026}
}

@inproceedings{yin2024improved,
  title     = {Improved Distribution Matching Distillation for Fast Image Synthesis},
  author    = {Yin, Tianwei and Gharbi, Micha{\"e}l and Park, Taesung and Zhang, Richard and Shechtman, Eli and Durand, Fr{\'e}do and Freeman, William T},
  booktitle = {Advances in Neural Information Processing Systems},
  pages     = {47455--47487},
  year      = {2024}
}

@inproceedings{sauer2024adversarial,
  title     = {Adversarial Diffusion Distillation},
  author    = {Sauer, Axel and Lorenz, Dominik and Blattmann, Andreas and Rombach, Robin},
  booktitle = {European Conference on Computer Vision},
  pages     = {87--103},
  year      = {2024},
  publisher = {Springer}
}

@misc{dieleman2026flowmaps,
  title  = {Learning the Integral of a Diffusion Model},
  author = {Dieleman, Sander},
  year   = {2026},
  url    = {https://sander.ai/2026/05/06/flow-maps.html}
}

@inproceedings{shih2023parallel,
  title     = {Parallel Sampling of Diffusion Models},
  author    = {Shih, Andy and Belkhale, Suneel and Ermon, Stefano and Sadigh, Dorsa and Anari, Nima},
  booktitle = {Advances in Neural Information Processing Systems},
  volume    = {36},
  pages     = {4263--4276},
  year      = {2023}
}

@inproceedings{tang2024accelerating,
  title     = {Accelerating Parallel Sampling of Diffusion Models},
  author    = {Tang, Zhiwei and Tang, Jiasheng and Luo, Hao and Wang, Fan and Chang, Tsung-Hui},
  booktitle = {International Conference on Machine Learning (ICML)},
  year      = {2024}
}

@inproceedings{chen2024accelerating,
  title     = {Accelerating Diffusion Models with Parallel Sampling: Inference at Sub-Linear Time Complexity},
  author    = {Chen, Haoxuan and Ren, Yinuo and Ying, Lexing and Rotskoff, Grant M},
  booktitle = {Advances in Neural Information Processing Systems},
  volume    = {37},
  pages     = {133661--133709},
  year      = {2024}
}

@inproceedings{zhou2025parallel,
  title     = {Parallel Simulation for Log-Concave Sampling and Score-Based Diffusion Models},
  author    = {Zhou, Huanjian and Sugiyama, Masashi},
  booktitle = {International Conference on Machine Learning (ICML)},
  pages     = {79192--79225},
  year      = {2025},
  publisher = {PMLR}
}

@article{huangfu2018highs,
  title   = {Parallelizing the Dual Revised Simplex Method},
  author  = {Huangfu, Qi and Hall, J. A. Julian},
  journal = {Mathematical Programming Computation},
  volume  = {10},
  number  = {1},
  pages   = {119--142},
  year    = {2018}
}

@article{albergo2023stochastic,
  title   = {Stochastic Interpolants: A Unifying Framework for Flows and Diffusions},
  author  = {Albergo, Michael S. and Boffi, Nicholas M. and Vanden-Eijnden, Eric},
  journal = {arXiv preprint arXiv:2303.08797},
  year    = {2023}
}

@article{phan2026multi,
  title={Multi-Marginal Couplings for Metropolis-Hastings},
  author={Phan, Buu and Flamich, Gergely and Khisti, Ashish and Asoodeh, Shahab},
  journal={arXiv preprint arXiv:2605.12807},
  year={2026}
}

@article{li2021unified,
  title={A unified framework for one-shot achievability via the {P}oisson matching lemma},
  author={Li, Cheuk Ting and Anantharam, Venkat},
  journal={IEEE Transactions on Information Theory},
  volume={67},
  number={5},
  pages={2624--2651},
  year={2021},
  publisher={IEEE}
}

@article{liu2025one,
  title={One-shot coding over general noisy networks},
  author={Liu, Yanxiao and Li, Cheuk Ting},
  journal={IEEE Transactions on Information Theory},
  volume={71},
  number={11},
  pages={8346--8357},
  year={2025},
  publisher={IEEE}
}

@article{zhou2026dual,
  title={Dual-Representation Image Compression at Ultra-Low Bitrates via Explicit Semantics and Implicit Textures},
  author={Zhou, Chuqin and Ling, Xiaoyue and Chen, Yunuo and Dai, Jincheng and Lu, Guo and Zhang, Wenjun},
  journal={arXiv preprint arXiv:2602.05213},
  year={2026}
}

@article{flamich2026scalable,
  title={Scalable Differentially Private Data Compression via Diffusion and Stochastic Codes},
  author={Flamich, Gergely and G{\"u}ner, Oyk{\"u} S{\i}la and Liu, Yanxiao and G{\"u}nd{\"u}z, Deniz},
  journal={arXiv preprint arXiv:2607.03392},
  year={2026}
}

@article{yang2026continuous,
  title={Continuous diffusion scales competitively with discrete diffusion for language},
  author={Yang, Zhihan and Guo, Wei and Zhang, Shuibai and Sahoo, Subham Sekhar and Chen, Yongxin and Vahdat, Arash and Mardani, Morteza and Thickstun, John},
  journal={arXiv preprint arXiv:2605.18530},
  year={2026}
}

\newpage

\appendix

\section{Related Work}
\label{sec:related}

\paragraph{Accelerating diffusion sampling.}
Training-free solvers reduce the number of steps \citep{song2021ddim,zhang2023fast,lu2025dpmpp}; distillation and flow-map methods learn few-step generators \citep{salimans2022progressive,song2023consistency,sabour2025align,geng2026improved,zhou2026terminal,yin2024improved,sauer2024adversarial,dieleman2026flowmaps}; parallel-in-time methods trade compute for latency through iterative refinement \citep{shih2023parallel,tang2024accelerating,chen2024accelerating,zhou2025parallel}. All of these change the sampling distribution or require training. Speculative diffusion sampling \citep{debortoli2025speculative} is exact; \citet{soen2026block} accelerate it with block verification and \citet{bullo2026drafttrees} with draft trees and greedy rejection sampling. RASS is a verification rule for the latter framework and is orthogonal to the proposal used to draft. 

\paragraph{Speculative decoding and multi-draft coupling.}
Speculative decoding for language models \citep{leviathan2023fast,chen2023accelerating} has been extended to tree-structured drafts \citep{miao2024specinfer} and to multi-draft verification, where several candidate tokens are coupled with the target through optimal transport \citep{sun2023spectr}, canonical decompositions with theoretical limits \citep{khisti2025multidraft,hu2025optimal}, block verification \citep{lin2026spectrgbv}, and list-level couplings \citep{rowan2025list,daliri2025coupling}. Those constructions operate on discrete distributions over a vocabulary, where the optimal list coupling is a transport problem of vocabulary size. In the continuous, high-dimensional Gaussian setting of diffusion, the rank-1 structure reduces the problem to a scalar, and RASS's rank family turns it into a $K$-dimensional LP that does not depend on $d$; to our knowledge, rank selection with optimized weights has not been used as a coupling device before.

\paragraph{Relative entropy coding.}
GRS \citep{harsha2010communication,flamich2023adaptive,flamich2023greedy} develops relative entropy coding, also known as channel simulation, schemes that exactly communicate samples from a target probability distribution to a remote party and are based on rejection sampling.
Another widely used relative entropy coding scheme is based on the Poisson functional representation~\citep{li2018strong}, which has recently found a wide range of applications in information theory~\citep{li2021unified, liu2025one}, machine learning~\citep{zhou2026dual, flamich2026scalable}, and differential privacy.
Its optimality~\citep{hill2026rejection} underlies D-GRS; see also the general references by \citet{li2024channel,flamich2026stochastic}.
RASS replaces the sequential residual-measure recursion with a single maximal coupling of a selected law. 

\section{Speculative Sampling Algorithm}
\label{app:algorithm}
Algorithm~\ref{alg:speculative-template} implements the draft-tree sampler described in Section~\ref{sec:background}, following \citet[Algorithm~1]{bullo2026drafttrees}. It accommodates any \textsc{Verify} rule with the interface of Algorithm~\ref{alg:rass-verify}, including RASS, D-GRS, and, when $K=1$, the maximal coupling of \citet{debortoli2025speculative}. We use the notation of Section~\ref{sec:background} and write $\mathcal T|_{L_n}$ for the tree truncated at depth $L_n=\min\{L,N-n\}$, ensuring that each round stays within the sampling horizon. At each node $u$, we enumerate its children in drafting order as $(v_1,\ldots,v_{K_u})$, where $K_u=|\mathcal C(u)|$.

\begin{algorithm}[t]
\caption{Speculative diffusion sampler with a pluggable \textsc{Verify} rule.\\
Adapted from \citet[Algorithm~1]{bullo2026drafttrees}.}
\label{alg:speculative-template}
\begin{algorithmic}[1]
\Require Initial law $q_0$; means $m^p,m^q$; scales $\sigma_{0:N-1}>0$;
         horizon $N$; tree $\mathcal T$; rule \textsc{Verify}
\Ensure Target trajectory $Y_{1:N}$
\State $n\gets0$; draw $Y_0\sim q_0$
\While{$n<N$}
  \State $L_n\gets\min\{L,N-n\}$, $\mathcal T_n\gets\mathcal T|_{L_n}$, $\widehat Y_{\mathrm r}\gets Y_n$
  \Statex \textcolor{black!50}{(i) \textit{\underline{Drafting}: expand each layer in parallel.}}
  \For{$\ell=1,\ldots,L_n$}
    \ForAll{nodes $u$ at depth $\ell-1$ in $\mathcal T_n$, in parallel}
      \State Draw $\{\widehat Y_v:v\in\mathcal C(u)\}
        \overset{\mathrm{i.i.d.}}{\sim}
        \mathcal N(m^p_{n+|u|}(\widehat Y_u),\sigma_{n+|u|}^2 I_d)$
    \EndFor
  \EndFor
  \Statex \textcolor{black!50}{(ii) \textit{\underline{Parallel Evaluation}: one batched target-model call.}}
  \State Compute $m^q_{n+|u|}(\widehat Y_u)$ for all $u\in\mathcal E_n$
         in parallel
         \Comment{$\mathcal E_n=\mathcal V(\mathcal T_n)$ or
         $\mathcal I(\mathcal T_n)$}
  \Statex \textcolor{black!50}{(iii) \textit{\underline{Verification}: walk down the tree until rejection or a leaf.}}
  \State $u\gets\mathrm r$; $N_\alpha\gets0$
  \While{$u\in\mathcal I(\mathcal T_n)$}
    \State $k\gets n+|u|$
    \State $(Y_{k+1},\texttt{accepted},j)\gets
      \textsc{Verify}\bigl(m^p_{k}(\widehat Y_u),m^q_{k}(\widehat Y_u),
      \sigma_{k};\widehat Y_{v_1},\ldots,\widehat Y_{v_{K_u}}\bigr)$
    \State $N_\alpha\gets N_\alpha+1$
    \If{$\neg$ \texttt{accepted}}
      \State \textbf{break}
    \EndIf
    \State $u\gets v_j$
  \EndWhile
  \State $n\gets n+N_\alpha$
\EndWhile
\State \Return $Y_{1:N}$
\end{algorithmic}
\end{algorithm}
The evaluation set $\mathcal E_n$ determines the parallel budget. We evaluate the full tree, $\mathcal E_n=\mathcal V(\mathcal T_n)$, in the main experiments, and report the internal-node alternative, $\mathcal E_n=\mathcal I(\mathcal T_n)$, in Appendix~\ref{app:internal-node-budget}. Verification requires only the target means at internal nodes, while evaluations at leaves provide additional values for the prefetch cache described in Appendix~\ref{app:delayed-drift-prefetch}. Each round commits $1\leq N_\alpha\leq L_n$ transitions, including the correction step when a proposal is rejected. The template also supports unequal branching factors and leaves at different depths.

To preserve the target trajectory law, a plug-in rule must satisfy the exactness requirement of Section~\ref{sec:background} at each node. Its decision may depend only on the current transition parameters, the node's children, and auxiliary randomness. Consequently, conditional on an accepted child, its descendants retain their proposal-tree law, allowing verification to continue recursively. Deterministic steps ($\sigma_n=0$) are handled separately by applying the target transition directly.



\section{Theoretical Analysis}
\label{ap:th_analysis}
\subsection{Proof of Proposition~\ref{prop:selected-law}}
\childlawprop*

\begin{proof}
We first characterize the CDF of the \(r\)-th order statistic.

{\color{black}

Fix a rank $r$ and define
\[
    M_a:=\sum_{k=1}^{K}\mathds{1}\{\widehat U_k\le a\}.
\]
Since $\widehat U_k\sim\mathcal U(0,1)$ independently,
$M_a\sim\operatorname{Binomial}(K,a)$. Moreover,
\[
    \{\widehat U_{(r)}\le a\}=\{M_a\ge r\}.
\]
Therefore,
\[
    \Pr(\widehat U_{(r)}\le a)
    =
    \sum_{j=r}^{K}\binom Kj a^j(1-a)^{K-j},
\]
which is the CDF of a $\operatorname{Beta}(r,K+1-r)$ random variable.
}
that is the CDF of a \(\mathrm{Beta}(r,K+1-r)\) random variable, whose density is
\begin{equation}
    f_r(a)
    :=
    \frac{K!}{(r-1)!\,(K-r)!}
    a^{r-1}(1-a)^{K-r},
    \qquad a\in(0,1).
\end{equation}
Since $R$ is independent of the children,
$\widehat U_{(R)}$ has density
\[
    b_\omega(a)
    =
    \sum_{r=1}^{K}\omega_r f_r(a).
\]
Since $\widehat S_{\Jsel}=\Phi^{-1}(\widehat U_{(R)})$, and equivalently \(\widehat U_{(R)}=\Phi(\widehat S_{J})\), the monotone change-of-variables formula with \(a=\Phi(s)\) yields
\[
    p_\omega(s)
    =
    b_\omega\!\left(\Phi(s)\right)\Phi'(s)
    =
    \phi(s)b_\omega\!\left(\Phi(s)\right).
\]

Finally, the variables $Z_{\perp,1:K}$ are i.i.d.\
$\cN(0,\Id-ee^\top)$ and are independent of the projections
$\widehat S_{1:K}$. Since $\Jsel$ is determined by
$(\widehat S_{1:K},R)$, selecting an index according to $\Jsel$ does
not change the conditional distribution of its orthogonal component.
Hence
$Z_{\perp,\Jsel}\sim\cN(0,\Id-ee^\top)$.
\end{proof}

\subsection{Proof of Theorem \ref{thm:exact}}
\exactnesstheorem*
\begin{proof}
If $\delta=0$, the shared preparation in Algorithm~\ref{alg:rass-verify} returns the first child, and $P=Q$, so the claim is immediate.  Assume $\delta>0$.  Let $S^{\mathrm{out}}$ and $Z^{\mathrm{out}}_\perp$ denote the scalar and orthogonal components of the returned state, so that 
\[
    Y=\mu_p+\sigma\bigl(S^{\mathrm{out}}e+Z^{\mathrm{out}}_\perp\bigr).
\]
On the acceptance branch, $S^{\mathrm{out}}=\widehat S_{\Jsel}$ has density $p_\omega$ by Proposition~\ref{prop:selected-law} and is retained with probability $a_\omega(s)=1\wedge q_\delta(s)/p_\omega(s)$.  Hence the accepted scalar submeasure has density
\[
    p_\omega(s)a_\omega(s)=\min\{q_\delta(s),p_\omega(s)\}.
\]
Moreover, conditional on $(\widehat S_{1:K},R)$, the selected orthogonal component $Z_{\perp,\Jsel}$ has law $\cN(0,\Id-ee^\top)$ and is independent of the acceptance coin.  Thus the accepted joint submeasure factors as the above scalar submeasure times the orthogonal Gaussian law.

On rejection, the rejection probability is $\eta_\omega=1-\alpha_K (\delta;\omega)$.  If $\eta_\omega>0$, the residual scalar has density $r_\omega$, so its scalar contribution is
\[
    \eta_\omega r_\omega(s)
    =\bigl(q_\delta(s)-p_\omega(s)\bigr)_+.
\]
The orthogonal component $Z_{\perp,1}$ is independent of all projections, the rank draw, the acceptance coin, and the independent residual draw, and has law $\cN(0,\Id-ee^\top)$.  If $\eta_\omega=0$, this branch has zero probability.  Therefore the total joint law of $(S^{\mathrm{out}},Z^{\mathrm{out}}_\perp)$ is the product of $\cN(\delta,1)$ and $\cN(0,\Id-ee^\top)$, since
\[
    \min\{q_\delta(s),p_\omega(s)\}
    +\bigl(q_\delta(s)-p_\omega(s)\bigr)_+
    =q_\delta(s).
\]
It follows that
\[
    Y\sim\cN(\mu_p+\sigma\delta e,\sigma^2\Id)
      =\cN(\mu_q,\sigma^2\Id).
\]

At every visited node, the preceding argument shows that the next committed state has exactly the corresponding target transition law.  If a child is accepted, the verifier returns that drafted child itself and its decision uses only the children of the current node. Hence, conditional on the accepted child, its already drafted descendants retain their original proposal-tree conditional law.  Verification may therefore continue one level deeper with the same procedure.  If a proposal is rejected, the exact residual draw is committed and the speculative round terminates.  Induction over committed steps therefore gives the target Markov-chain law.
\end{proof}

\subsection{Proof of Proposition~\ref{prop:optimal-list-acceptance}}
\optachievablelistcouplingprop*
\begin{proof}
We first reduce the problem to one dimension.  Recall that
\[
    P = \mathcal N(\mu_p,\sigma^2 I_d),
    \qquad
    Q = \mathcal N(\mu_q,\sigma^2 I_d),
    \qquad
    \delta = \frac{\|\mu_q-\mu_p\|}{\sigma}.
\]
The case $\delta=0$ is immediate, since then $P=Q$ and one may always
return, say, the first draft.  Hence suppose $\delta>0$, and let
\[
    e = \frac{\mu_q-\mu_p}{\sigma\delta}=\frac{\mu_q-\mu_p}{\|\mu_q-\mu_p\|}.
\]
For i.i.d.\ drafts $X_1,\ldots,X_K\sim P$ and a target variable
$Y\sim Q$, define the scalar coordinates
\[
    \widehat S_i
    :=
    \frac{\langle X_i-\mu_p,e\rangle}{\sigma},
    \qquad
    S
    :=
    \frac{\langle Y-\mu_p,e\rangle}{\sigma}.
\]
Then
\[
    \widehat S_1,\ldots,\widehat S_K
    \stackrel{\mathrm{i.i.d.}}{\sim}\mathcal N(0,1),
    \qquad
    S\sim\mathcal N(\delta,1).
\]
Apply the probability integral transform
\[
    \widehat U_i := \Phi(\widehat S_i),
    \qquad
    V := \Phi(S).
\]
Then $\widehat U_1,\ldots,\widehat U_K$ are i.i.d.\ $\mathcal{U}(0,1)$, while $V$ has
CDF
\begin{align}
    F_\delta(a):&=\Pr(V\le a)=\Pr(\Phi(S)\le a)\\
    &=\Pr(S\le \Phi^{-1}(a))=\Pr\left(\frac{S-\delta}{\sigma}\le \frac{\Phi^{-1}(a)-\delta}{\sigma}\right)\\
    &=\Phi\!\left(\Phi^{-1}(a)-\delta\right),
    \qquad a\in(0,1),
\end{align}
Its density can be obtained directly by a likelihood-ratio argument.
Indeed, for any Borel set $A\subseteq(0,1)$,
\begin{align*}
    \Pr(V\in A)
    &=
    \int_{\{s:\,\Phi(s)\in A\}}
        \phi(s-\delta)\,ds \\
    &=
    \int_{\{s:\,\Phi(s)\in A\}}
        \frac{\phi(s-\delta)}{\phi(s)}
        \phi(s)\,ds \\
    &=
    \int_{\{s:\,\Phi(s)\in A\}}
        \exp\!\left\{
            \delta s-\frac{\delta^2}{2}
        \right\}\phi(s)\,ds.
\end{align*}
Since $\Phi$ pushes the standard Gaussian measure
$\phi(s)\,ds$ forward to the uniform measure on $(0,1)$, this becomes
\[
    \Pr(V\in A)
    =
    \int_A
        \exp\!\left\{
            \delta\Phi^{-1}(a)-\frac{\delta^2}{2}
        \right\}\,da.
\]
Hence $V$ has density
\[
    h_\delta(a)
    =
    \exp\!\left\{
        \delta\Phi^{-1}(a)-\frac{\delta^2}{2}
    \right\}.
\]
For $\delta>0$, this density is strictly increasing because
$\Phi^{-1}$ is strictly increasing.
Let
\[
    E:=\{V\in\{U_1,\ldots,U_K\}\}
\]
be the acceptance event.  For any $a\in[0,1]$, acceptance with
$V>a$ requires at least one draft above $a$.  Hence
\[
    \Pr(E,V>a)
    \le
    \Pr\!\left(\max_{1\le i\le K}\widehat U_i>a\right)
    =
    1-a^K.
\]
It follows that
\begin{align*}
    1-\Pr(E)
    &\ge
    \Pr(E^c,V>a) \\
    &=
    \Pr(V>a)-\Pr(E,V>a) \\
    &\ge
    \bigl(1-F_\delta(a)\bigr)-\bigl(1-a^K\bigr) \\
    &=
    a^K-F_\delta(a).
\end{align*}
Therefore every valid coupling satisfies
\[
    \Pr(E)
    \le
    1-\max_{a\in[0,1]}
        \left\{a^K-F_\delta(a)\right\}.
\]
Substituting $a=\Phi(s)$ gives
\[
    \Pr(E)
    \le
    1-\sup_{s\in\mathbb R}
    \left\{
        \Phi(s)^K-\Phi(s-\delta)
    \right\}.
\]
It remains to prove that this bound is attainable.

More generally, let $\nu$ be any probability measure on $[0,1]$ and
consider $K$ i.i.d.\ uniform drafts.  For a Borel set
$A\subseteq[0,1]$, write $|A|$ for its Lebesgue measure and define
\[
    d(\nu)
    :=
    \sup_{A\subseteq[0,1]\ {\rm Borel}}
    \left\{
        \nu(A)
        -
        \bigl[1-(1-|A|)^K\bigr]
    \right\}.
\]
The term
\[
    1-(1-|A|)^K
\]
is precisely the probability that at least one of the $K$ drafts lies
in $A$.  Thus, for any coupling of the list and a target $V\sim\nu$,
\[
    \nu(A)
    \le
    \Pr(E^c)
    +
    \Pr\!\left(\{U_1,\ldots,U_K\}\cap A\neq\varnothing\right),
\]
and consequently
\[
    \Pr(E^c)\ge d(\nu).
\]
Hence the acceptance probability is at most $1-d(\nu)$.

We now show that this general bound is sharp.  Partition $[0,1]$ into
$n$ intervals $I_{n,1},\ldots,I_{n,n}$ of length $1/n$.  Consider the
finite coupling problem between the vector of draft-bin labels
\[
    (J_1,\ldots,J_K),
    \qquad
    J_i=j \ \Longleftrightarrow\ U_i\in I_{n,j},
\]
and the target-bin label $J$, where $V\sim\nu$.  Connect a draft-label
vector $(j_1,\ldots,j_K)$ to a target bin $j$ exactly when
$j\in\{j_1,\ldots,j_K\}$.  The finite max-flow/min-cut theorem implies
that the largest mass that can be coupled along such admissible edges
is
\[
    1-d_n,
\]
where
\[
    d_n
    =
    \max_{\substack{A\subseteq[0,1]\\
                    A\text{ a union of partition bins}}}
    \left\{
        \nu(A)-\bigl[1-(1-|A|)^K\bigr]
    \right\}.
\]
Since the maximization defining $d_n$ is over a subclass of the Borel
sets,
\[
    d_n\le d(\nu).
\]
Thus there exists a coupling of the bin labels under which, with
probability at least $1-d(\nu)$, the target bin appears among the draft
bins.  Complete any unmatched source and target mass arbitrarily, for
example by coupling the normalized residual marginals independently,
so that the full source and target marginals remain unchanged.

Lift this finite coupling from bin labels to points as follows.
Conditionally on their labels, draw each $U_i$ uniformly inside its
assigned bin, independently across $i$, and draw $V$ according to
$\nu$ conditioned on its assigned target bin.  The resulting coupling
$\pi_n$ has exactly the marginal
$\mathrm{Unif}[0,1]^{\otimes K}$ for the draft list and marginal
$\nu$ for $V$.  Moreover, whenever the target bin is one of the draft
bins,
\[
    \min_{1\le i\le K}|V-U_i|
    \le \frac1n.
\]
Hence
\[
    \pi_n\!\left(
        \min_{1\le i\le K}|V-U_i|
        \le \frac1n
    \right)
    \ge
    1-d(\nu).
\]

Since $[0,1]^{K+1}$ is compact, the sequence $(\pi_n)$ has a weakly
convergent subsequence, say $\pi_{n_j}\Rightarrow\pi$.  The limiting
coupling $\pi$ preserves both marginals.  For $\varepsilon>0$, let
\[
    R_\varepsilon
    :=
    \left\{
        (u_1,\ldots,u_K,v):
        \min_i|v-u_i|\le\varepsilon
    \right\}.
\]
This set is closed.  For all sufficiently large $j$,
$1/n_j\le\varepsilon$, so
\[
    \pi_{n_j}(R_\varepsilon)\ge1-d(\nu).
\]
By the closed-set part of the Portmanteau theorem,
\[
    \pi(R_\varepsilon)
    \ge
    \limsup_{j\to\infty}\pi_{n_j}(R_\varepsilon)
    \ge
    1-d(\nu).
\]
Since
\[
    R_0
    :=
    \{v=u_i\text{ for some }i\}
    =
    \bigcap_{m=1}^\infty R_{1/m},
\]
continuity from above yields
\[
    \pi(R_0)\ge1-d(\nu).
\]
Together with the preceding upper bound, this proves
\[
    \sup_\pi
    \Pr_\pi\{V\in\{U_1,\ldots,U_K\}\}
    =
    1-d(\nu),
\]
and the supremum is attained.

It remains to specialize this characterization to the Gaussian target
law $\nu_\delta$.  Since $h_\delta$ is increasing, among all measurable
sets $A\subseteq[0,1]$ of a fixed Lebesgue measure $m$, the target mass
$\nu_\delta(A)$ is maximized by the upper interval
\[
    T_m=[1-m,1].
\]
Indeed, $A\setminus T_m$ and $T_m\setminus A$ have equal Lebesgue
measure, while every point of $A\setminus T_m$ lies to the left of
every point of $T_m\setminus A$.  Therefore
\[
    \nu_\delta(A)
    =
    \int_A h_\delta(a)\,da
    \le
    \int_{T_m}h_\delta(a)\,da
    =
    1-F_\delta(1-m).
\]
The draft-availability term depends on $A$ only through $m=|A|$.
Consequently,
\begin{align*}
    d(\nu_\delta)
    &=
    \max_{m\in[0,1]}
    \left\{
        1-F_\delta(1-m)
        -
        \bigl[1-(1-m)^K\bigr]
    \right\} \\
    &=
    \max_{a\in[0,1]}
    \left\{
        a^K-F_\delta(a)
    \right\} \\
    &=
    \sup_{s\in\mathbb R}
    \left\{
        \Phi(s)^K-\Phi(s-\delta)
    \right\},
\end{align*}
where in the second line we set $a=1-m$, and in the last line
$a=\Phi(s)$.

Therefore the maximal achievable acceptance probability is
\[
    \alpha_K^{\mathrm{up}}(\delta)
    =
    1-
    \sup_{s\in\mathbb R}
    \left\{
        \Phi(s)^K-\Phi(s-\delta)
    \right\}.
\]
The weak-limit construction above supplies a coupling attaining this
value, completing the proof.
\end{proof}

\subsection{Proof of Theorem \ref{thm:acceptance}}
\begin{restatable}[Acceptance probability]{theorem}{acceptancetheorem}
\label{thm:acceptance}
Fix $\delta>0$ and $\omega\in\cW_K$. 
We take 
\[
    \cA_\omega
    :=
    \{s\in\R:q_\delta(s)>p_\omega(s)\}
    =
    \{s:\rho(s)>b_\omega(\Phi(s))\},
\]
and let $\{(a_j,b_j)\}_j$ be its disjoint connected components.
Define $C_\omega(a) := \sum_{r=1}^K \omega_r I_a(r,K+1-r)$. 
Then the acceptance probability satisfies 
\begin{equation}
    \alpha_K(\delta;\omega)
    =
    1-
    \sum_j
    \bigl[
        H_\omega(b_j)-H_\omega(a_j)
    \bigr]
    \label{eq:acceptance-closed-a}
\end{equation}
where $H_\omega(s) := \Phi(s-\delta) - C_\omega(\Phi(s))$ satisfies
$H_\omega' = q_\delta - p_\omega$, so each term of the sum is the residual mass
$\int_{a_j}^{b_j} (q_\delta - p_\omega)\,ds$ on one component of $A_\omega$. 
Moreover, we have $\alpha_K(\delta;\omega^{\mathrm{unif}})=2\phibar(\delta/2)$.
\end{restatable}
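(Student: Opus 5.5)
The plan is to start from the identity $\alpha_K(\delta;\omega)=1-\mathrm{TV}(q_\delta,p_\omega)$ established in Section~\ref{sec:rass}. Since both $q_\delta$ and $p_\omega$ are probability densities, the total variation equals the positive-part mass:
\[
\mathrm{TV}(q_\delta,p_\omega)=\int_{\mathbb R}(q_\delta-p_\omega)_+\,ds=\int_{\cA_\omega}(q_\delta-p_\omega)\,ds .
\]
Here $\cA_\omega$ is the stated superlevel set. Dividing $q_\delta(s)>p_\omega(s)=\phi(s)b_\omega(\Phi(s))$ by $\phi(s)>0$ gives the equivalent form $\rho(s)>b_\omega(\Phi(s))$, with $\rho(s)=\phi(s-\delta)/\phi(s)=e^{\delta s-\delta^2/2}$.

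Next we decompose $\cA_\omega$. Both densities are continuous, so $\cA_\omega$ is open and hence a countable disjoint union of open intervals $(a_j,b_j)$, possibly with $a_j=-\infty$ or $b_j=+\infty$. By countable additivity the integral splits into a sum over components.

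On each component we use an explicit antiderivative. Write $I_a(r,K+1-r)$ for the regularized incomplete beta function, which is the CDF of $\mathrm{Beta}(r,K+1-r)$. Then $C_\omega'(a)=b_\omega(a)$. By the chain rule,
\[
\frac{d}{ds}C_\omega(\Phi(s))=\phi(s)\,b_\omega(\Phi(s))=p_\omega(s),
\]
which is exactly the computation in Proposition~\ref{prop:selected-law}. Together with $\frac{d}{ds}\Phi(s-\delta)=q_\delta(s)$, this gives $H_\omega'=q_\delta-p_\omega$. The fundamental theorem of calculus then yields $\int_{a_j}^{b_j}(q_\delta-p_\omega)\,ds=H_\omega(b_j)-H_\omega(a_j)$.

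At infinite endpoints we interpret $H_\omega(\pm\infty)$ as limits. These exist and vanish: $\Phi(\cdot-\delta)$ tends to $0$ or $1$, and $C_\omega(\Phi(\cdot))$ tends to $C_\omega(0)=0$ or $C_\omega(1)=1$. This boundary bookkeeping, together with justifying the interchange for a possibly countable family of components, is the only delicate point.

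It is in fact benign. The summands are nonnegative and their total is bounded by $1$, so summation over countably many components is legitimate. I would also remark that $\rho$ is monotone and $b_\omega\circ\Phi$ is a polynomial in $\Phi(s)$, so $\cA_\omega$ actually has finitely many components. This is not needed for the proof, however.

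For the uniform case, take $\omega_r=1/K$. Then $b_\omega\equiv1$ (as noted after Proposition~\ref{prop:selected-law}), so $p_\omega=\phi$ and $C_\omega(a)=a$. Since $\rho(s)>1$ iff $s>\delta/2$, we get $\cA_\omega=(\delta/2,\infty)$, and
\[
\alpha=1-[H(\infty)-H(\delta/2)]=1+\Phi(-\delta/2)-\Phi(\delta/2).
\]
Rewriting $\Phi(-\delta/2)=\phibar(\delta/2)$ and $1-\Phi(\delta/2)=\phibar(\delta/2)$, this equals $2\phibar(\delta/2)$.
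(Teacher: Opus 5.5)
Your proposal is correct and follows the paper's proof essentially step for step. You rewrite $\alpha_K(\delta;\omega)$ as $1-\int_{\cA_\omega}(q_\delta-p_\omega)\,ds$, split the open set $\cA_\omega$ into intervals, integrate $H_\omega'=q_\delta-p_\omega$ with limits at infinite endpoints, and specialize to $\cA_{\omega^{\mathrm{unif}}}=(\delta/2,\infty)$. One caveat applies only to your optional aside: monotonicity of $\rho$ plus polynomiality of $b_\omega$ in $\Phi(s)$ does not by itself give finitely many components, since a monotone function can cross a polynomial infinitely often. Finiteness does hold here, because $q_\delta-p_\omega$ is real-analytic, not identically zero, and of fixed sign near $\pm\infty$, but as you say it is not needed.
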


\begin{proof}
Since $p_\omega$ and $q_\delta$ are probability densities,
\begin{align*}
    \alpha_K(\delta;\omega)
    &=
    \int_{\mathbb R}
    \min\{q_\delta(s),p_\omega(s)\}\,\dd s \\
    &=
    1-
    \int_{\mathbb R}
    \bigl(q_\delta(s)-p_\omega(s)\bigr)_+\,\dd s.
\end{align*}
By definition,
\[
    \cA_\omega
    =
    \{s\in\mathbb R:q_\delta(s)>p_\omega(s)\},
\]
and hence
\[
    \alpha_K(\delta;\omega)
    =
    1-
    \int_{\cA_\omega}
    \bigl(q_\delta(s)-p_\omega(s)\bigr)\,\dd s.
\]

Recall that
\[
    p_\omega(s)
    =
    \phi(s)b_\omega(\Phi(s)),
    \qquad
    b_\omega(a)
    =
    \sum_{r=1}^K\omega_r f_r(a),
\]
and
\[
    C_\omega(a)
    =
    \sum_{r=1}^K
    \omega_r I_a(r,K+1-r).
\]
Since $I_a(r,K+1-r)$ is the CDF corresponding to the density
$f_r(a)$,
\[
    \frac{\dd}{\dd a}C_\omega(a)
    =
    b_\omega(a).
\]
Therefore
\[
    \frac{\dd}{\dd s}C_\omega(\Phi(s))
    =
    b_\omega(\Phi(s))\phi(s)
    =
    p_\omega(s).
\]
Moreover,
\[
    \frac{\dd}{\dd s}\Phi(s-\delta)
    =
    \phi(s-\delta)
    =
    q_\delta(s).
\]
Thus, for
\[
    H_\omega(s)
    :=
    \Phi(s-\delta)-C_\omega(\Phi(s)),
\]
we have
\[
    H_\omega'(s)
    =
    q_\delta(s)-p_\omega(s).
\]

The set $\cA_\omega$ is open, since $q_\delta-p_\omega$ is
continuous, and therefore it can be written as a disjoint union of
open intervals,
\[
    \cA_\omega
    =
    \bigcup_j(a_j,b_j).
\]
Here an endpoint may be infinite, in which case $H_\omega(\pm\infty)$
is understood as the corresponding limit.  Consequently,
\begin{align*}
    \int_{\cA_\omega}
    \bigl(q_\delta(s)-p_\omega(s)\bigr)\,\dd s
    &=
    \sum_j
    \int_{a_j}^{b_j}
    \bigl(q_\delta(s)-p_\omega(s)\bigr)\,\dd s \\
    &=
    \sum_j
    \bigl[
        H_\omega(b_j)-H_\omega(a_j)
    \bigr].
\end{align*}
Every term in the sum is nonnegative, since
$q_\delta>p_\omega$ on $(a_j,b_j)$, and the sum is bounded by one.
Hence
\[
    \alpha_K(\delta;\omega)
    =
    1-
    \sum_j
    \bigl[
        H_\omega(b_j)-H_\omega(a_j)
    \bigr],
\]
which proves~\eqref{eq:acceptance-closed-a}.
We now consider the two special choices of rank weights.
For uniform weights,
\[
    \omega^{\mathrm{unif}}
    =
    \left(\frac1K,\ldots,\frac1K\right).
\]
The equally weighted mixture of the $K$ order-statistic densities is
uniform on $(0,1)$, so
\[
    b_{\omega^{\mathrm{unif}}}(a)=1,
    \qquad
    C_{\omega^{\mathrm{unif}}}(a)=a.
\]
Thus
\[
    p_{\omega^{\mathrm{unif}}}(s)
    =
    \phi(s).
\]
Since
\[
    \rho(s)
    :=
    \frac{q_\delta(s)}{\phi(s)}
    =
    \exp\!\left(
        \delta s-\frac{\delta^2}{2}
    \right)
\]
is strictly increasing for $\delta>0$, we have
\[
    q_\delta(s)>\phi(s)
    \quad\Longleftrightarrow\quad
    s>\frac{\delta}{2}.
\]
Therefore
\[
    \cA_{\omega^{\mathrm{unif}}}
    =
    \left(\frac{\delta}{2},\infty\right).
\]
Furthermore,
\[
    H_{\omega^{\mathrm{unif}}}(s)
    =
    \Phi(s-\delta)-\Phi(s),
\]
and hence
\[
    H_{\omega^{\mathrm{unif}}}(\infty)=0,
\]
while
\begin{align*}
    H_{\omega^{\mathrm{unif}}}(\delta/2)
    &=
    \Phi(-\delta/2)-\Phi(\delta/2) \\
    &=
    1-2\Phi(\delta/2).
\end{align*}
It follows that the rejection probability is
\[
    H_{\omega^{\mathrm{unif}}}(\infty)
    -
    H_{\omega^{\mathrm{unif}}}(\delta/2)
    =
    2\Phi(\delta/2)-1,
\]
and therefore
\begin{align*}
    \alpha_K(\delta;\omega^{\mathrm{unif}})
    &=
    1-\bigl(2\Phi(\delta/2)-1\bigr) \\
    &=
    2\bigl(1-\Phi(\delta/2)\bigr) \\
    &=
    2\phibar(\delta/2).
\end{align*}

Finally, consider the top-rank weights
\[
    \omega^{\max}=(0,\ldots,0,1).
\]
The largest of $K$ i.i.d.\ uniform random variables has CDF $a^K$
and density $Ka^{K-1}$, so
\[
    C_{\omega^{\max}}(a)=a^K,
    \qquad
    b_{\omega^{\max}}(a)=Ka^{K-1}.
\]
Consequently,
\[
    p_{\omega^{\max}}(s)
    =
    K\phi(s)\Phi(s)^{K-1}.
\]
Every finite boundary point of $\cA_{\omega^{\max}}$ satisfies
$q_\delta(s)=p_{\omega^{\max}}(s)$.  Dividing by $\phi(s)>0$ gives
\[
    \frac{\phi(s-\delta)}{\phi(s)}
    =
    K\Phi(s)^{K-1},
\]
or equivalently
\[
    \exp\!\left(
        \delta s-\frac{\delta^2}{2}
    \right)
    =
    K\Phi(s)^{K-1}.
\]

Suppose now that $K>1$.  To characterize the solutions of this
equation, define
\[
    L_K(s)
    :=
    \log K
    +(K-1)\log\Phi(s)
    -\delta s
    +\frac{\delta^2}{2}.
\]
Then
\[
    L_K(s)
    =
    \log
    \frac{p_{\omega^{\max}}(s)}{q_\delta(s)},
\]
so the boundary equation is precisely
\[
    L_K(s)=0.
\]
Differentiating,
\[
    L_K'(s)
    =
    (K-1)\frac{\phi(s)}{\Phi(s)}-\delta.
\]
The inverse Mills ratio $\phi(s)/\Phi(s)$ is strictly decreasing,
with
\[
    \frac{\phi(s)}{\Phi(s)}
    \longrightarrow \infty
    \quad\text{as }s\to-\infty,
    \qquad
    \frac{\phi(s)}{\Phi(s)}
    \longrightarrow 0
    \quad\text{as }s\to\infty.
\]
Hence $L_K'$ vanishes at exactly one point.  Thus $L_K$ is strictly
increasing up to its unique maximizer and strictly decreasing
afterwards.

Moreover,
\[
    \lim_{s\to\infty}L_K(s)=-\infty.
\]
Also, using the standard Gaussian left-tail asymptotics
$\log\Phi(s)=-s^2/2+o(s^2)$ as $s\to-\infty$, and since $K>1$,
\[
    \lim_{s\to-\infty}L_K(s)=-\infty.
\]
The maximum of $L_K$ must be strictly positive.  Indeed, if
$L_K(s)\le0$ for every $s$, then
\[
    p_{\omega^{\max}}(s)\le q_\delta(s)
    \qquad\text{for all }s.
\]
Since both sides are probability densities with total mass one, this
would force equality almost everywhere.  But
$p_{\omega^{\max}}$ and $q_\delta$ are distinct for $K>1$ and
$\delta>0$, a contradiction.

Therefore $L_K$ crosses zero exactly twice.  Denote the two roots by
\[
    s_-<s_+.
\]
Since $L_K$ is positive between its roots and negative outside them,
\[
    p_{\omega^{\max}}(s)>q_\delta(s)
    \quad\Longleftrightarrow\quad
    s\in(s_-,s_+).
\]
Equivalently,
\[
    \cA_{\omega^{\max}}
    =
    (-\infty,s_-)\cup(s_+,\infty).
\]
This completes the proof.
\end{proof}
The acceptance guarantee provided by Theorem \ref{thm:acceptance} has the similar form as the D-GRS guarantee by \citet[Theorem~2]{bullo2026drafttrees}, i.e., 
$$1-G_{K+1}=\Phi(\ln\lambda_K-\delta/2)+\lambda_K\phibar(\ln\lambda_K+\delta/2),$$ with $\{\lambda_j\}_{j=0}^K$ a sequence of scalar levels defined as in \citep{bullo2026drafttrees}, and neither requires quantities dependent on $d$. 
The two guarantees are explicitly compared in Figure \ref{fig:iid-comparison}. 
\section{Implementing RASS}
\label{app:rass-impl}

We describe how to compute the rank weights and sample the residual
distribution in lines~5 and~11 of Algorithm~\ref{alg:rass-verify}.
We use the scalar densities $q_\delta$ and $p_\omega$ defined in
Sections~2 and~3.

\subsection{Rank Weights}
\label{app:rankweights-impl}

For fixed $(\delta,K)$, the optimization
in~\eqref{eq:rank-optimization} chooses weights
$\omega\in\mathcal W_K$ to maximize the overlap between the
selected-proposal density $p_\omega$ and the target density $q_\delta$.
The following property allows us to formulate its quadrature
approximation as a linear program.

\begin{proposition}
\label{prop:concavity}
For fixed $(\delta,K)$, the acceptance probability
$\alpha_K(\delta;\omega)$ is concave in $\omega$ on $\mathcal W_K$.
\end{proposition}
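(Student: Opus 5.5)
The plan is to show that $\omega\mapsto p_\omega(s)$ is affine for every fixed $s$, and that the pointwise minimum of an affine function with a fixed function is concave; integrating then preserves concavity. By Proposition~\ref{prop:selected-law},
\[
p_\omega(s)=\phi(s)\sum_{r=1}^K\omega_r f_r(\Phi(s)),
\]
so $p_\omega(s)$ is linear in $\omega$ on $\R^K$, and in particular affine on the convex set $\cW_K$.

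First, I rewrite the acceptance probability as the overlap integral derived in Section~\ref{sec:rass}:
\[
\alpha_K(\delta;\omega)=\int_\R \min\{q_\delta(s),p_\omega(s)\}\dd s .
\]
Next, fix $\omega,\omega'\in\cW_K$ and $\lambda\in[0,1]$. Linearity gives
\[
p_{\lambda\omega+(1-\lambda)\omega'}(s)=\lambda p_\omega(s)+(1-\lambda)p_{\omega'}(s).
\]
The map $x\mapsto\min\{c,x\}$ is concave for fixed $c=q_\delta(s)$. Hence, pointwise in $s$,
\[
\min\{q_\delta,p_{\lambda\omega+(1-\lambda)\omega'}\}\ge\lambda\min\{q_\delta,p_\omega\}+(1-\lambda)\min\{q_\delta,p_{\omega'}\}.
\]
Integrating over $s$ yields
\[
\alpha_K(\delta;\lambda\omega+(1-\lambda)\omega')\ge\lambda\,\alpha_K(\delta;\omega)+(1-\lambda)\,\alpha_K(\delta;\omega'),
\]
which is the claimed concavity.

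The only step that needs care is integrability, so that the integrals are finite and linearity of the integral applies. All integrands are nonnegative and bounded above by $q_\delta$, which integrates to one, so every integral is finite. The case $\delta=0$ requires no separate treatment, since the argument never uses $\delta>0$. Equivalently, $\alpha_K=1-\mathrm{TV}(q_\delta,p_\omega)$ is one minus a convex function of $\omega$, because $\mathrm{TV}$ is convex and $\omega\mapsto p_\omega$ is affine.

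As a remark for the LP formulation that follows in the paper: after quadrature the objective becomes $\sum_i w_i\min\{q_i,\,\phi_i\,b_\omega(\Phi(s_i))\}$. Introducing an epigraph variable $t_i\le q_i$, $t_i\le p_\omega(s_i)$ for each node turns the maximization into a linear program, and this reformulation is exact precisely because of the concave structure established above.
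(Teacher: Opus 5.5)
Your proof is correct and matches the paper's argument exactly: $p_\omega(s)$ is linear in $\omega$ by Proposition~\ref{prop:selected-law}, so $\min\{q_\delta(s),p_\omega(s)\}$ is concave in $\omega$ pointwise, and integrating over $s$ preserves concavity. Your added details on integrability and the equivalent $1-\mathrm{TV}$ viewpoint are sound elaborations of the same idea.
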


\begin{proof}
By Proposition~\ref{prop:selected-law}, $p_\omega(s)$ is linear in
$\omega$ for each $s$. Thus,
$\min\{q_\delta(s),p_\omega(s)\}$ is concave in $\omega$.
Integrating over $s$ preserves concavity.
\end{proof}

Evaluating the overlap through its closed-form expression requires locating the boundaries of $\{s:q_\delta(s)>p_\omega(s)\}$, which depend on $\omega$. Quadrature instead evaluates both densities at each grid point, so the optimization does not require solving $q_\delta(s)=p_\omega(s)$ as the weights change.

\paragraph{Linear program.}
Let $s_1<\cdots<s_M$ be the quadrature grid points and $\nu_1,\ldots,\nu_M>0$ the corresponding quadrature weights. Using the order-statistic densities $f_r$ from Proposition~\ref{prop:selected-law}, define
\[
  \mathsf B_r(s_i):=\phi(s_i)f_r(\Phi(s_i)),
  \qquad
  p_\omega(s_i)=\sum_{r=1}^K\omega_r\mathsf B_r(s_i).
\]
The quadrature approximation to the acceptance probability is
\[
  \widetilde\alpha_K(\delta;\omega)
  :=\sum_{i=1}^M\nu_i
    \min\left\{q_\delta(s_i),
               \sum_{r=1}^K\omega_r\mathsf B_r(s_i)\right\}.
\]
Maximizing this objective over $\mathcal W_K$ is equivalent to
solving
\begin{equation}
\begin{aligned}
  \underset{\omega\in\mathcal W_K,\;z\in\mathbb R^M}
    {\operatorname{maximize}}
  \quad & \sum_{i=1}^M\nu_i z_i\\
  \text{subject to}\quad
  & 0\leq z_i\leq q_\delta(s_i),\\
  & z_i\leq\sum_{r=1}^K\omega_r\mathsf B_r(s_i),
    \qquad i=1,\ldots,M.
\end{aligned}
\label{eq:rank-lp}
\end{equation}
Since every $\nu_i$ is positive, each $z_i$ equals the smaller of
its two upper bounds at an optimum. Hence~\eqref{eq:rank-lp}
represents the quadrature objective exactly. Only the replacement of the integral by quadrature introduces an approximation.

The program has $K+M$ variables and $2M$ upper-bound inequalities,
in addition to nonnegativity and the simplex constraint. Its size is independent of the state dimension $d$, and its coefficients depend only on $(\delta,K)$ and the quadrature rule.

\paragraph{Approximation quality.}
Let $\widetilde\omega$ and $\omega^\star$ solve
\eqref{eq:rank-lp} and \eqref{eq:rank-optimization}, respectively.
Define the uniform quadrature error
\[
  \epsilon_M
  :=\sup_{\omega\in\mathcal W_K}
    \bigl|\widetilde\alpha_K(\delta;\omega)
          -\alpha_K(\delta;\omega)\bigr|.
\]
Optimality of $\widetilde\omega$ for the quadrature objective gives
\[
  \alpha_K(\delta;\widetilde\omega)
  \geq \widetilde\alpha_K(\delta;\widetilde\omega)-\epsilon_M
  \geq \widetilde\alpha_K(\delta;\omega^\star)-\epsilon_M
  \geq \alpha_K(\delta;\omega^\star)-2\epsilon_M.
\]
Thus, the loss in acceptance probability is at most $2\epsilon_M$.
This bound accounts for both integration error within the grid
and any omitted tail mass.

We assess the numerical loss by comparing the weights returned by
Algorithm~\ref{alg:rankweights} with those obtained using a finer
quadrature grid of $6001$ points per interval. We evaluate the
acceptance probabilities of both sets of weights using
Theorem~\ref{thm:acceptance}.
For $K\in\{2,3,4,6,8\}$ and $14$ values of
$\delta\in[0.01,6]$, the largest observed loss is
$2.1\times10^{-5}$.
By comparison, the gap between optimized rank mixtures and the
optimal list-coupling acceptance probability $\alpha_K^{\rm up}$
of Proposition~\ref{prop:optimal-list-acceptance} reaches
$3.7\times10^{-2}$ (Figure~\ref{fig:iid-comparison}).
These results indicate that quadrature error has a negligible
effect on acceptance in the tested settings.

\paragraph{Implementation.}
Algorithm~\ref{alg:rankweights} implements the routine
$\RankWeights(\delta,K)$ used in line~5 of
Algorithm~\ref{alg:rass-verify}.
To reuse weights across transitions with similar mean displacements,
we round $\delta$ to three \textit{significant} digits, obtaining
$\widetilde\delta$, and cache the weights by
$(\widetilde\delta,K)$.
For $\delta>0$, this gives a relative rounding error of at most
$5\times10^{-3}$, including for small displacements.

The quadrature grid is the sorted union, with duplicates removed,
of two equally spaced grids, each containing $361$ points:
one on $[-9,9]$ and the other on
$[\widetilde\delta-9,\widetilde\delta+9]$.
These intervals extend nine standard deviations around the
proposal and target means used in the optimization.
On the merged grid, the composite trapezoidal weights are
\[
  \nu_1=\frac{s_2-s_1}{2},
  \qquad
  \nu_i=\frac{s_{i+1}-s_{i-1}}{2}
  \quad (2\leq i\leq M-1),
  \qquad
  \nu_M=\frac{s_M-s_{M-1}}{2}.
\]
To improve numerical stability in the tails, we compute
$\mathsf B_r(s_i)$ through its logarithm, using
$\log\Phi(s_i)$ and $\log\Phi(-s_i)$ for the beta-density factors.
We solve~\eqref{eq:rank-lp} using the dual simplex method of
HiGHS~\citep{huangfu2018highs}.


\begin{algorithm}[t]
\caption{$\RankWeights(\delta,K)$: finite-grid rank optimization}
\label{alg:rankweights}
\small
\begin{algorithmic}[1]
\Require Normalized mean gap $\delta\geq0$; integer $K\geq1$
\Ensure Rank weights $\omega\in\mathcal W_K$
\State $\widetilde\delta\gets\operatorname{round}_3(\delta)$
  \Comment{Three significant digits.}
\State $\omega^{\rm unif}\gets(1/K,\ldots,1/K)$;
       $\omega^{\rm max}\gets(0,\ldots,0,1)$
\If{$K=1$ \textbf{or} $\widetilde\delta<10^{-7}$}
  \State \Return $\omega^{\rm unif}$
\EndIf
\If{$\widetilde\delta>12$}
  \State \Return $\omega^{\rm max}$\Comment{Optimal for large $\delta$; here $\alpha_K<10^{-8}$.}
\EndIf
\State $(s_1,\ldots,s_M)\gets\operatorname{sort\_unique}\bigl(
  \operatorname{linspace}(-9,9,361)
  \cup\operatorname{linspace}(\widetilde\delta-9,\widetilde\delta+9,361)\bigr)$
\State $\nu_1\gets\tfrac12(s_2-s_1)$;
       $\nu_M\gets\tfrac12(s_M-s_{M-1})$;
       $\nu_i\gets\tfrac12(s_{i+1}-s_{i-1})$, $i=2,\ldots,M-1$
\State $q_i\gets\phi(s_i-\widetilde\delta)$;
       $\mathsf B_{r}(s_i)\gets\phi(s_i)f_r(\Phi(s_i))$,
       $i=1,\ldots,M$, $r=1,\ldots,K$
\State $\mathcal W_{\rm cand}\gets\{\omega^{\rm unif},\omega^{\rm max}\}$
\If{the program~\eqref{eq:rank-lp} is solved with weights $\widehat\omega$}
  \State $v_r\gets\max\{\widehat\omega_r,0\}$, $r=1,\ldots,K$
    \Comment{Remove solver round-off.}
  \State $\mathcal W_{\rm cand}\gets\mathcal W_{\rm cand}\cup
         \{v/\textstyle\sum_{r}v_r\}$
\EndIf
\State \Return $\displaystyle\arg\max_{w\in\mathcal W_{\rm cand}}
  \sum_{i=1}^M\nu_i\min\Bigl\{q_i,\sum_{r=1}^K\mathsf B_{r}(s_i)w_r\Bigr\}$
\end{algorithmic}
\end{algorithm}

\paragraph{Optimized weights.}
Figure~\ref{fig:rank-weights} shows the numerically optimized weights
returned by Algorithm~\ref{alg:rankweights} for $K=4$ and $K=8$.
As $\delta\to0$, the weights approach the uniform distribution,
recovering the proposal density $\phi$.
Increasing $\delta$ shifts weight toward higher ranks, with nearly
all mass assigned to the highest rank for $\delta\gtrsim2$
when $K=4$ and $\delta\gtrsim2.5$ when $K=8$.

The irregular variation of intermediate-rank weights at small
$\delta$ can arise from nonuniqueness of the linear-program solution.
When several solutions are optimal, the simplex solver selects
a vertex of the optimal face, and this choice may change as
$\delta$ varies. Such changes need not indicate a change in
acceptance: for a fixed $(\delta,K)$, all solutions on the optimal
face yield the same quadrature objective.

\begin{figure}[t]
  \centering
  \includegraphics[width=0.8\linewidth]{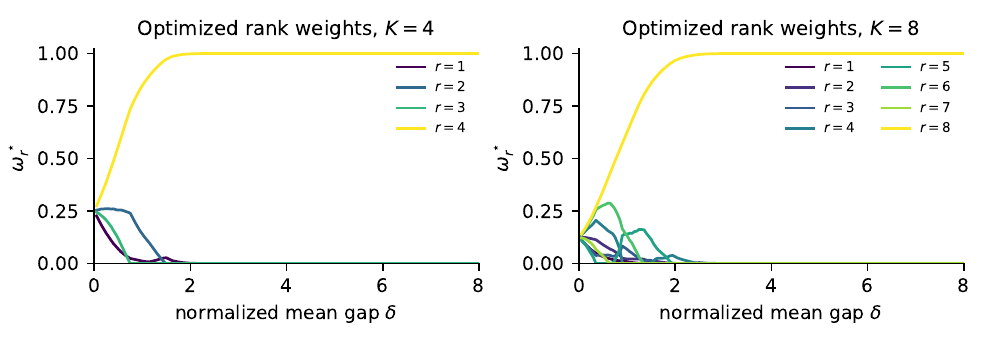}
  \caption{Numerically optimized rank weights returned by Algorithm~\ref{alg:rankweights} for $K=4$ and $K=8$. The weights approach uniformity as $\delta\to0$ and concentrate on the highest rank for $\delta\gtrsim2$ ($K=4$) and $\delta\gtrsim2.5$ ($K=8$). Irregular changes in intermediate-rank weights at small $\delta$ can reflect transitions between distinct optimal solutions, which yield the same quadrature objective at a given $\delta$.}
  \label{fig:rank-weights}
\end{figure}

\subsection{Residual Sampling}
\label{app:residual}

On rejection, line~11 of Algorithm~\ref{alg:rass-verify} samples
the scalar projection from
\[
  r_\omega(s)
  =\frac{(q_\delta(s)-p_\omega(s))_+}{\eta_\omega},
  \qquad
  \eta_\omega=1-\alpha_K(\delta;\omega).
\]
This branch is reached only when $\eta_\omega>0$.
Recall from Theorem~\ref{thm:acceptance} that the residual is
positive on
\[
  A_\omega=\{s:q_\delta(s)>p_\omega(s)\}
           =\bigcup_j(a_j,b_j),
\]
where the disjoint intervals are ordered from left to right.
The function
\[
  H_\omega(s)=\Phi(s-\delta)-C_\omega(\Phi(s))
\]
satisfies $H_\omega'=q_\delta-p_\omega$ and
$H_\omega(\pm\infty)=0$. Hence, the unnormalized residual mass
on interval $(a_j,b_j)$ is
\[
  m_j=H_\omega(b_j)-H_\omega(a_j),
  \qquad
  \sum_jm_j=\eta_\omega.
\]
The beta CDFs defining $C_\omega$ have integer parameters, giving
the polynomial expression
\[
  C_\omega(a)
  =\sum_{r=1}^K\omega_r
    \sum_{\ell=r}^K\binom K\ell a^\ell(1-a)^{K-\ell}.
\]
We can therefore compute interval masses without numerical
integration. In the implementation, we evaluate $H_\omega$ as
\[
  H_\omega(s)
  =-[\Phi(s)-\Phi(s-\delta)]
   -[C_\omega(\Phi(s))-\Phi(s)].
\]

\paragraph{Locating the residual intervals.}
Every finite endpoint satisfies
\[
  q_\delta(s)=p_\omega(s),
  \qquad\text{equivalently}\qquad
  \exp(\delta s-\delta^2/2)=b_\omega(\Phi(s)).
\]
The residual need not be confined to a single right tail.
For example, with $K=2$, $\omega=(0,1)$, and $\delta=1$,
\[
  p_\omega(s)=2\phi(s)\Phi(s),
  \qquad
  H_\omega(s)=\Phi(s-1)-\Phi(s)^2,
\]
and
\[
  A_\omega\approx(-\infty,-1.486)\cup(1.028,\infty).
\]
The left interval contains approximately $0.8\%$ of the residual
mass, so sampling only from the right interval would not preserve
the target law.
To locate the intervals numerically, we evaluate
$\log q_\delta-\log p_\omega$ on a grid of $513$ points and
refine each detected sign change using Brent's method.
This is a numerical search: a finite grid alone does not
guarantee that every crossing is detected.
At a simple crossing, an endpoint error of size $\epsilon$
changes the corresponding interval mass by $O(\epsilon^2)$,
since $q_\delta-p_\omega$ vanishes at the endpoint.
We cache the interval endpoints and masses using the unrounded
pair $(\delta,\omega)$.

\paragraph{Inverse-CDF sampler.}
Draw $U\sim\mathcal U(0,1)$ and set $v=U\eta_\omega$.
First, select the interval $j$ satisfying
\[
  \sum_{i<j}m_i<v<\sum_{i\leq j}m_i.
\]
Equality at an interval boundary has probability zero and may
be resolved arbitrarily. Then solve
\[
  H_\omega(S)-H_\omega(a_j)
  =v-\sum_{i<j}m_i,
  \qquad S\in(a_j,b_j),
\]
using Brent's method. Since $H_\omega'>0$ on this interval,
the solution is unique. With exact interval masses and inversion,
this construction samples from $r_\omega$.
Each draw uses one uniform random variable and one scalar root
solve, with no numerical integration or additional target-model
evaluation.

\paragraph{Alternative rejection sampler.}
One can instead repeatedly draw $S\sim\mathcal N(\delta,1)$
and accept with probability
\[
  \left(1-\frac{p_\omega(S)}{q_\delta(S)}\right)_+.
\]
The acceptance probability of each trial is $\eta_\omega$,
so this procedure requires $1/\eta_\omega$ trials on average.
Conditional on needing a correction, this can be expensive
when $\alpha_K(\delta;\omega)$ is close to one.
The inverse-CDF method avoids these repeated trials: after
computing or retrieving the residual intervals, it performs
one root solve per draw.

\paragraph{Orthogonal component.}
We reconstruct the corrected state as
\[
  Y=\mu_p+\sigma(Se+Z_{\perp,1}),
\]
reusing the orthogonal component of the first child in drafting
order. This component retains its Gaussian law independently
of the scalar verification and residual draw.
An independent sample from
$\mathcal N(0,I_d-ee^\top)$ can be used instead, with the same
output law.

\subsection{RASS Computational Overhead}
\label{app:overhead}
RASS introduces two scalar computations: selecting the rank weights
and sampling the residual after rejection. The weights can be
cached, while the residual admits either inverse-CDF or rejection
sampling. We benchmark these computations separately and measure
their combined contribution to verification. With cached weights
and rejection sampling, the mean verification time of RASS is
within $1.5\%$ of D-GRS for every tested branching factor.

\paragraph{Benchmark setup.}
We measure scalar verification costs on synthetic inputs, using
one core of an Intel Core i9-10980XE and no diffusion model.
We consider $K\in\{2,3,4,6,8\}$ and normalized mean displacements
$\delta\in\{0.01,0.05,0.1,0.25,0.5,1,2,4,8\}$.
For each pair, we set $\mu_p=0$, $\mu_q=\delta e_1$, and $\sigma=1$
in dimension $d=16$, and generate proposal children using a fixed
random seed.
Timings exclude the $d$-dimensional projections, which are common
to RASS, D-GRS and RMC.
Each reported value averages over the tested displacements and
repetitions, with the half-width of a $95\%$ confidence interval.

\paragraph{Rank-weight computation.}
Table~\ref{tab:rass-subroutines} reports the cost of individual
subroutines, using $1{,}000$ repetitions per $(K,\delta)$.
Solving $\RankWeights$ takes $5.8$--$8.0$\,ms.
As described above, this computation depends only on
$(\widetilde\delta,K)$: once computed, the weights are reused
whenever the same pair occurs.
This cost is therefore incurred on a cache miss rather than
at every verified node. Its contribution during sampling depends
on how frequently new pairs are encountered.

\paragraph{Residual sampling.}
An inverse-CDF residual draw takes $0.30$--$0.66$\,ms.
Its cost grows with $K$, consistent with the increasing number
of terms in $b_\omega$ and $C_\omega$.
Rejection sampling has a different cost profile.
Each trial succeeds with probability $\eta_\omega$, so a residual
draw requires $1/\eta_\omega$ trials on average.
The measured cost is below $14\,\mu$s for $\delta\geq2$, but reaches
$4$--$44$\,ms at $\delta=0.01$, where the residual mass is small.
These conditional costs do not directly determine the average
verification overhead. A residual draw is needed only with
probability $\eta_\omega$. Thus, for $\eta_\omega>0$, the expected
number of residual rejection-sampling trials per verified node is
\[
  \Pr(\text{rejection})\,
  \mathbb E[\text{trials}\mid\text{rejection}]
  =\frac{1}{\eta_\omega}\eta_\omega=1.
\]
Consequently, the large number of trials required by an individual
correction at small $\delta$ is offset, in expectation, by the
rarity of such corrections. The time per trial can still vary
with $(\delta,K)$.

\paragraph{Verification cost.}
Table~\ref{tab:verify-overhead} measures scalar verification with
cached rank weights, using $3{,}000$ calls per $(K,\delta)$ and
fresh children at each call.
The timings include residual sampling whenever verification
rejects the selected proposal.
Averaged over the tested displacements, RMC takes approximately
$12.6\,\mu$s per call and D-GRS takes $30$--$55\,\mu$s.
RASS with rejection sampling also takes $30$--$55\,\mu$s,
with mean times within $1.5\%$ of D-GRS for every tested $K$.
RASS with inverse-CDF sampling takes $117$--$168\,\mu$s.

These results show that the choice of residual sampler materially
affects verification overhead. Both constructions sample the same
residual law in exact arithmetic and therefore preserve the same
acceptance probabilities and target law.

With rejection sampling and cached weights, RASS therefore adds
little average scalar verification cost relative to D-GRS in this
benchmark. Computing previously unseen rank weights remains an
additional cost. The measured scalar computations are independent
of the state dimension and run entirely on the CPU.

\begin{table}[t]
\centering
\small
\begin{tabular}{@{}r c cc@{}}
\toprule
 & $\RankWeights$ & \multicolumn{2}{c}{Residual draw ($\mu$s)} \\
\cmidrule(l){3-4}
$K$ & (ms) & inverse CDF & rejection sampler \\
\midrule
2 & 5.78\,{\scriptsize$\pm$0.04} & 295\,{\scriptsize$\pm$3} & 645\,{\scriptsize$\pm$42} \\
3 & 6.94\,{\scriptsize$\pm$0.04} & 314\,{\scriptsize$\pm$1} & 682\,{\scriptsize$\pm$43} \\
4 & 7.22\,{\scriptsize$\pm$0.05} & 400\,{\scriptsize$\pm$3} & 1820\,{\scriptsize$\pm$117} \\
6 & 7.42\,{\scriptsize$\pm$0.04} & 513\,{\scriptsize$\pm$2} & 3654\,{\scriptsize$\pm$224} \\
8 & 7.95\,{\scriptsize$\pm$0.04} & 657\,{\scriptsize$\pm$4} & 6617\,{\scriptsize$\pm$426} \\
\bottomrule
\end{tabular}
\caption{Costs of individual RASS subroutines on one CPU core.
Entries average $1{,}000$ repetitions per $(K,\delta)$ over the
nine displacements listed in the benchmark setup and report
mean $\pm$ the half-width of a $95\%$ confidence interval.
$\RankWeights$ measures one uncached linear-program solve.
Residual timings measure one correction draw, conditional on
requiring a correction; they are not weighted by the probability
of rejection.}
\label{tab:rass-subroutines}
\end{table}

\begin{table}[t]
\centering
\small
\begin{tabular}{@{}r cc cc@{}}
\toprule
 & & & \multicolumn{2}{c}{RASS} \\
\cmidrule(l){4-5}
$K$ & RMC & D-GRS & inverse CDF & rejection sampler \\
\midrule
2 & 12.55\,{\scriptsize$\pm$0.02} & 29.90\,{\scriptsize$\pm$0.20} & 117\,{\scriptsize$\pm$2} & 29.53\,{\scriptsize$\pm$0.30} \\
3 & 12.60\,{\scriptsize$\pm$0.02} & 33.90\,{\scriptsize$\pm$0.21} & 121\,{\scriptsize$\pm$2} & 33.84\,{\scriptsize$\pm$0.42} \\
4 & 12.51\,{\scriptsize$\pm$0.02} & 37.72\,{\scriptsize$\pm$0.21} & 129\,{\scriptsize$\pm$2} & 38.21\,{\scriptsize$\pm$0.55} \\
6 & 12.73\,{\scriptsize$\pm$0.02} & 47.05\,{\scriptsize$\pm$0.24} & 159\,{\scriptsize$\pm$3} & 47.71\,{\scriptsize$\pm$0.67} \\
8 & 12.58\,{\scriptsize$\pm$0.02} & 54.69\,{\scriptsize$\pm$0.25} & 168\,{\scriptsize$\pm$3} & 55.49\,{\scriptsize$\pm$1.36} \\
\bottomrule
\end{tabular}
\caption{Scalar verification time per node ($\mu$s) on one CPU
core, with rank weights already cached.
Entries average $3{,}000$ calls with fresh children per
$(K,\delta)$ over the nine tested displacements and report
mean $\pm$ the half-width of a $95\%$ confidence interval.
Residual sampling is included only when verification rejects.
Rank-weight computation and $d$-dimensional projections are
excluded. RMC uses one child, whereas RASS and D-GRS use $K$.}
\label{tab:verify-overhead}
\end{table}

\subsection{Exactness Under Numerical Approximation}
\label{app:rass-numerics}

Theorem~\ref{thm:exact} guarantees exactness for every feasible
$\omega\in\mathcal W_K$, regardless of how the weights are obtained.
Thus, quadrature and rounding in $\RankWeights$ affect acceptance but do not invalidate the exactness guarantee. Neither does the choice among optimal solutions.
The acceptance test and residual sampler use the actual displacement $\delta$ and the returned weights $\omega$, rather than the rounded
displacement $\widetilde\delta$ used during optimization.
Numerical errors may nevertheless arise when implementing these
steps, through floating-point arithmetic, the finite-grid search
for residual intervals, and root-solver tolerances.
The exactness theorem applies to the ideal verifier. However, its numerical
implementation is subject to these errors.
Table~\ref{tab:quality} compares sample quality across methods using
FID and CLIP scores, while
Appendix~\ref{app:qualitative-exactness} provides visual comparisons.
These results complement the theoretical guarantee but do not establish distributional exactness.

\section{Experimental details and additional results}
\label{app:exp-details}

\subsection{Delayed-drift proposals and prefetching}
\label{app:delayed-drift-prefetch}

We specify the proposal construction and reuse policy used within the draft-tree sampler of Section~\ref{sec:background} and Algorithm~\ref{alg:speculative-template}. Delayed drift determines how a cached target evaluation is used to draft a tree. Prefetching determines which evaluation is retained for the next round. For simplicity, conditioning information, such as class labels or text prompts, is held fixed and suppressed below.

\paragraph{A general proposal template.}
Write the target mean in Eq.~(3) as
\begin{equation}
    m_k^q(x)=\mathcal M_k\bigl(x,h_k(x)\bigr),
    \label{eq:cached-target-template}
\end{equation}
where $h_k(x)$ is the expensive model field and $\mathcal M_k$ is an inexpensive reconstruction map incorporating the state and sampling schedule. The field may be a drift, score, denoiser output, or velocity. At the start of a round at $Y_n$, let $\bar h=h_{\tilde n}(\tilde y)$ be an available evaluation, with $\tilde n\leq n$. We hold this value fixed throughout drafting and use
\begin{equation}
    m_k^p(x;\bar h)=\mathcal M_k(x,\bar h),
    \qquad
    P_k(\cdot\mid x,\bar h)
    =\mathcal N\!\left(m_k^p(x;\bar h),\sigma_k^2 I_d\right).
    \label{eq:cached-proposal-template}
\end{equation}
Thus, at a parent $u$ of depth $|u|$, drafting uses $k=n+|u|$ and $x=\widehat Y_u$. Each child is sampled according to \eqref{eq:tree-proposal}, with the model field frozen: the state and schedule arguments of $\mathcal M_k$ remain those of the drafted node. Conditional on the cache, this construction needs no further model evaluation during drafting. If no evaluation is available initially, we obtain $\bar h=h_n(Y_n)$ with one target evaluation before drafting.

For the Euler-Maruyama parameterization in \eqref{eq:target-kernel}, take $h_k=b^q_{t_k}$ and $\mathcal M_k(x,z)=x+\gamma z$, giving $m_k^p(x;\bar h)=x+\gamma\bar h$. More generally, an affine velocity parameterization,
\begin{equation}
    m_k^q(x)=a_kx+b_kv_k(x),
    \qquad
    m_k^p(x;\bar v)=a_kx+b_k\bar v,
    \label{eq:cached-affine-velocity}
\end{equation}
uses known scalar coefficients $a_k,b_k$ and a cached velocity $\bar v$. This retains the current coefficients and state-dependent term even when the velocity was evaluated at a different state or timestep.

\paragraph{Prefetching between rounds.}
Let $\mathcal E\subseteq V$ contain the nodes whose target evaluations are available from the round's parallel evaluation, including any exact evaluations reused from earlier rounds. For $v\in\mathcal E$, write
\[
    H_v=h_{n+|v|}(\widehat Y_v).
\]
These values can be retained when the target means are evaluated, or recovered from those means when the reconstruction map is invertible in its second argument. After the round commits $N_\alpha$ transitions, using the convention of Algorithm~2, its next root is $y^+=Y_{n^+}$ with $n^+=n+N_\alpha$. No update is needed if $n^+=N$.

Let $u$ be the parent of the final verified transition and set $\mathcal A=C(u)\cap\mathcal E$. The nearest-node policy used in our experiments selects the next cache as follows:
\begin{enumerate}
    \item If the round ends by accepting a leaf $v_\star\in\mathcal E$, retain its own evaluation: $$\bar h^+=H_{v_\star}=h_{n^+}(y^+).$$
    \item If the round ends by rejection and $\mathcal A\neq\varnothing$, retain the evaluation at the closest evaluated candidate from that final sibling list:
    \[
        v^\dagger\in\operatorname*{arg\,min}_{v\in\mathcal A}
        \|\widehat Y_v-y^+\|_2,
        \qquad \bar h^+=H_{v^\dagger}.
    \]
    These candidates are at timestep $n^+$, although their states generally differ from the residual output $y^+$.
    \item Otherwise, retain the final parent's evaluation: $\bar h^+=H_u=h_{n^+-1}(\widehat Y_u)$.
\end{enumerate}
Ties may be resolved in a fixed order. A parent-only policy always uses the third choice. Disabling prefetching instead evaluates $h_{n^+}(y^+)$ before drafting each new round.

With internal-node-only evaluation, $\mathcal E=I(\mathcal T)$ away from the terminal horizon, so accepted leaves and candidates at the last level are unavailable for reuse. Evaluating the leaves, $\mathcal E=I(\mathcal V)$, makes their fields available in the same parallel phase, supporting the first two choices also at the last level. Before horizon truncation, this adds $K^L$ rows for a full $(K,L)$ tree and gives the full-tree budget $B=|V|$ used in the main experiments. The cache-selection operation itself requires no additional target evaluation.

\paragraph{Specialization to the experimental samplers.}
All the experiments use the linear interpolation
\[
    X_\varsigma=(1-\varsigma)X_0+\varsigma Z,
    \qquad
    X_0\sim q_{\mathrm{data}},\quad
    Z\sim\mathcal N(0,I_d),
\]
with independent $X_0$ and $Z$. This path connects data at $\varsigma=0$ to Gaussian noise at $\varsigma=1$, and its velocity is
\[
    v_\varsigma(x)
    =\mathbb E[Z-X_0\mid X_\varsigma=x].
\]
Sampling proceeds in decreasing $\varsigma$. The Gaussian-mixture experiment evaluates this velocity analytically, the pixel-space experiments approximate it through a pretrained denoiser \citep{karras2022elucidating}, and SD3.5 supplies a learned flow velocity.

Let $\varsigma_k$ denote the interpolation parameter at step $k$, different from the transition standard deviation $\sigma_k$, and set $\Delta\varsigma_k=\varsigma_{k+1}-\varsigma_k<0$. On stochastic steps, the target mean has the affine form
\[
    m_k^q(x)=a_kx+b_kv_{\varsigma_k}(x),
\]
where
\[
    a_k=1+\Delta\varsigma_k\frac{\epsilon^2}{1-\varsigma_k},
    \qquad
    b_k=\Delta\varsigma_k(1+\epsilon^2),
    \qquad
    \sigma_k=\epsilon
    \sqrt{\frac{2\varsigma_k}{1-\varsigma_k}
          (-\Delta\varsigma_k)}.
\]
The cached quantity is a previous velocity evaluation, $\bar v=v_{\tilde\varsigma}(\tilde y)$. The delayed-drift proposal therefore uses
\[
    m_k^p(x;\bar v)=a_kx+b_k\bar v,
\]
holding $\bar v$ fixed during drafting while retaining the current state $x$ and schedule coefficients $a_k,b_k$. All experiments use the prefetch policy described above to select $\bar v$ for the next round, with their respective schedules and conditioning. 

\subsection{Results with internal-node evaluation}
\label{app:internal-node-budget}

Section~\ref{sec:background} presesnts two options for the parallel evaluation phase. The main experiments evaluate the full tree, with budget $B = |\mathcal{V}|$. This section gives the results when the sampler evaluates only the internal nodes $\mathcal{I}(\mathcal{T})$, with budget $B = |\mathcal{I}|$.

\paragraph{Setup.}
Before horizon truncation, a full $(K,L)$ tree has
\begin{equation*}
  |\mathcal{V}| = \sum_{\ell=0}^{L} K^{\ell},
  \qquad
  |\mathcal{I}| = \sum_{\ell=0}^{L-1} K^{\ell} = |\mathcal{V}| - K^{L}.
\end{equation*}
Internal-node evaluation thus removes the $K^{L}$ leaves from the batch. It also removes their target fields from the prefetch cache (Appendix~\ref{app:delayed-drift-prefetch}). If a round accepts a leaf or rejects at the last level, no evaluated candidate is available at the next root. The next round then drafts from the evaluation at the final parent. This evaluation is one step older and is at a different state. Internal-node evaluation thus decreases the batch size, but it can decrease the accuracy of the next proposal.

All other settings are the same as in Section~\ref{sec:experiments}. These settings include the target discretization, the delayed-drift proposal, the prefetch policy, the $(K,L)$ configurations, and the number of samples. RASS and D-GRS use identical trees. RMC uses a chain matched to the same internal-node budget. We compare the two budget definitions at a fixed configuration $(K,L)$ and at a fixed budget $B$.

\paragraph{Gaussian mixture.}
Figures~\ref{fig:gm-speedup-internal-tree-appendix} and~\ref{fig:gm-speedup-heatmap-internal-tree-appendix} give the results with $B = |\mathcal{I}|$. Figures~\ref{fig:gm-speedup-full-tree-appendix} and~\ref{fig:gm-speedup-heatmap-full-tree-appendix} give the corresponding full-tree results. Both sets use churn $\varepsilon \in \{0.06, 0.1, 0.3, 0.6\}$ and the grid $K \in \{1, \dots, 7\}$, $L \in \{2, \dots, 7\}$.
At a fixed configuration, internal-node evaluation decreases the speedup of the two tree methods, mainly for shallow trees. For RASS, the median decrease is $14.2\%$ at $L = 2$, $7.5\%$ at $L = 3$, and $3.0\%$ at $L = 4$. For $L \geq 5$, the median decrease is less than $1\%$. A probable cause is that a shallow tree ends more rounds at its last level, where the leaf evaluations are necessary for an accurate cache.
At a fixed budget, internal-node evaluation gives a higher speedup at moderate budgets. Within $B \leq 100$, the best mean speedup of RASS is $1.1\%$ to $2.1\%$ higher with $B = |\mathcal{I}|$ than with $B = |\mathcal{V}|$. Over the full grid, the highest speedups for the two definitions differ by at most $0.6\%$.
With the internal-node budget, RASS gives a higher mean speedup than D-GRS in all $144$ configurations with $K \geq 2$, by $0.6\%$ to $6.7\%$. In $115$ of these configurations, the $95\%$ confidence intervals do not overlap. For every churn value, RASS matches or exceeds D-GRS at every budget at or above the number of diffusion steps. Its highest mean speedup is $3.9\%$ to $6.6\%$ above that of D-GRS. At $K = 1$, each node has one child, so no ranking is possible. The two rules then give the same speedup within the confidence intervals. RMC exceeds both tree methods at $L = 2$ for every $K$, and in some configurations with $L = 3$ and $K \leq 3$. In these configurations, the matched chain is longer than the tree and can commit more transitions per round.
\begin{figure}[t]
    \centering
    \input{figures/gm/gm-speedup-internal-tree}    
    \caption{Best mean target-call speedup achievable within parallel budget $B=|\mathcal{I}|$ on the Gaussian-mixture target, for churn $\varepsilon\in\{0.06,0.1,0.3,0.6\}$. Each point shows the best configuration $(K,L)$ whose budget does not exceed $B$. Shading indicates $95\%$ confidence intervals over $100$ trajectories. The dashed line marks the number of diffusion steps, and the shaded region is budgets at or above it. For every churn value, RASS matches or exceeds D-GRS at every budget in the shaded region.}
    \label{fig:gm-speedup-internal-tree-appendix}
\end{figure}
\begin{figure}[t]
    \centering
    \includegraphics[width=\linewidth]{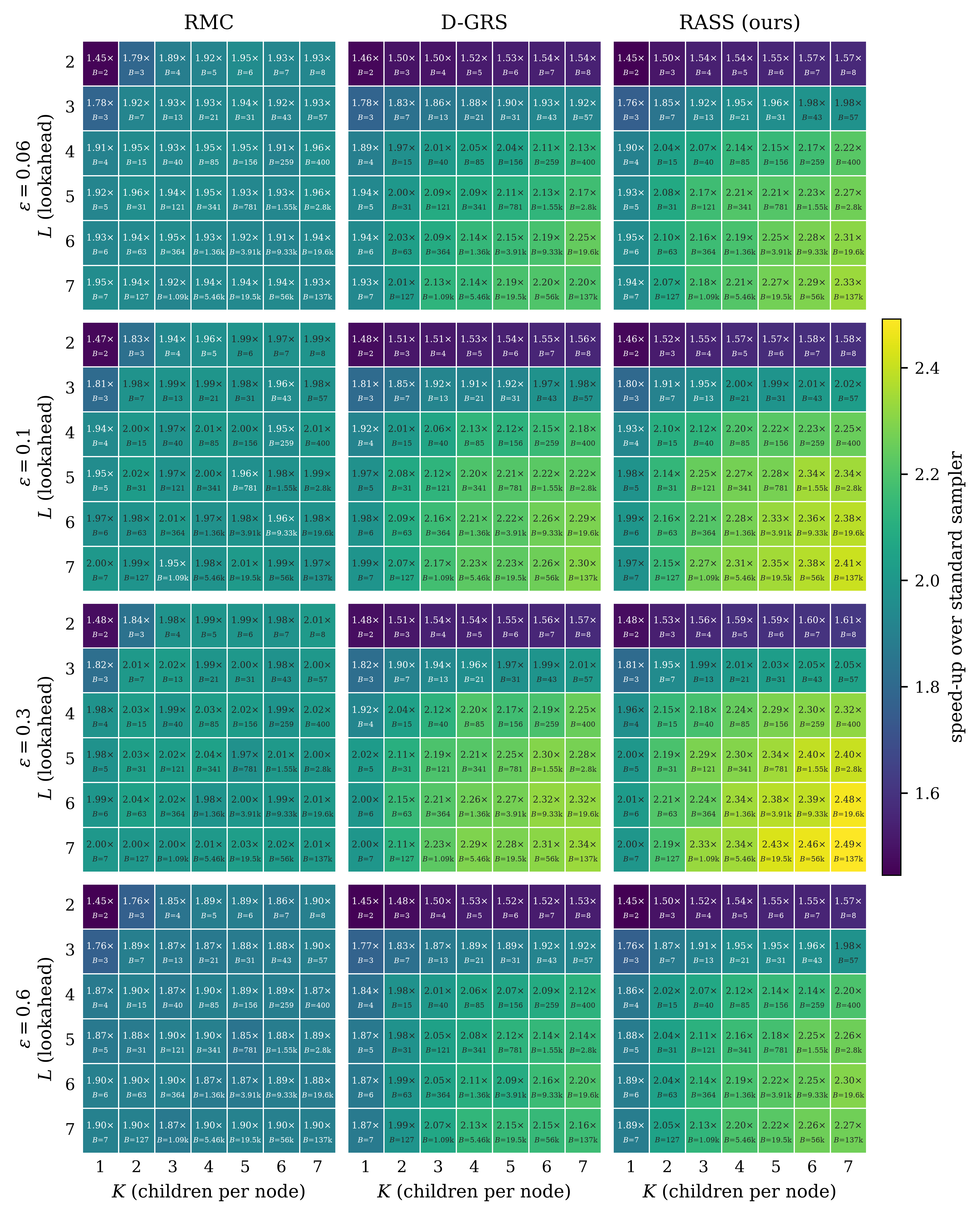}
    \caption{Mean target-call speedup against branching factor $K$ and lookahead $L$, achieved using parallel budget $B=|\mathcal{I}|$.}
    \label{fig:gm-speedup-heatmap-internal-tree-appendix}
\end{figure}
\begin{figure}[t]
    \centering
    \input{figures/gm/gm-speedup-full-tree}
    \caption{Best mean target-call speedup achievable within parallel budget $B=|\mathcal{V}|$ on the Gaussian-mixture target, for churn $\varepsilon\in\{0.06,0.1,0.3,0.6\}$. Each point shows the best configuration $(K,L)$ whose budget does not exceed $B$. Shading indicates $95\%$ confidence intervals over $100$ trajectories. The dashed line marks the number of diffusion steps, and the shaded region is budgets at or above it. For every churn value, RASS matches or exceeds D-GRS at every budget in the shaded region. Its highest speedup is $4.5\%$ to $6.4\%$ above D-GRS's, while RMC stays at about $1.9\times$--$2.0\times$.}
    \label{fig:gm-speedup-full-tree-appendix}
\end{figure}
\begin{figure}[t]
    \centering
    \includegraphics[width=\linewidth]{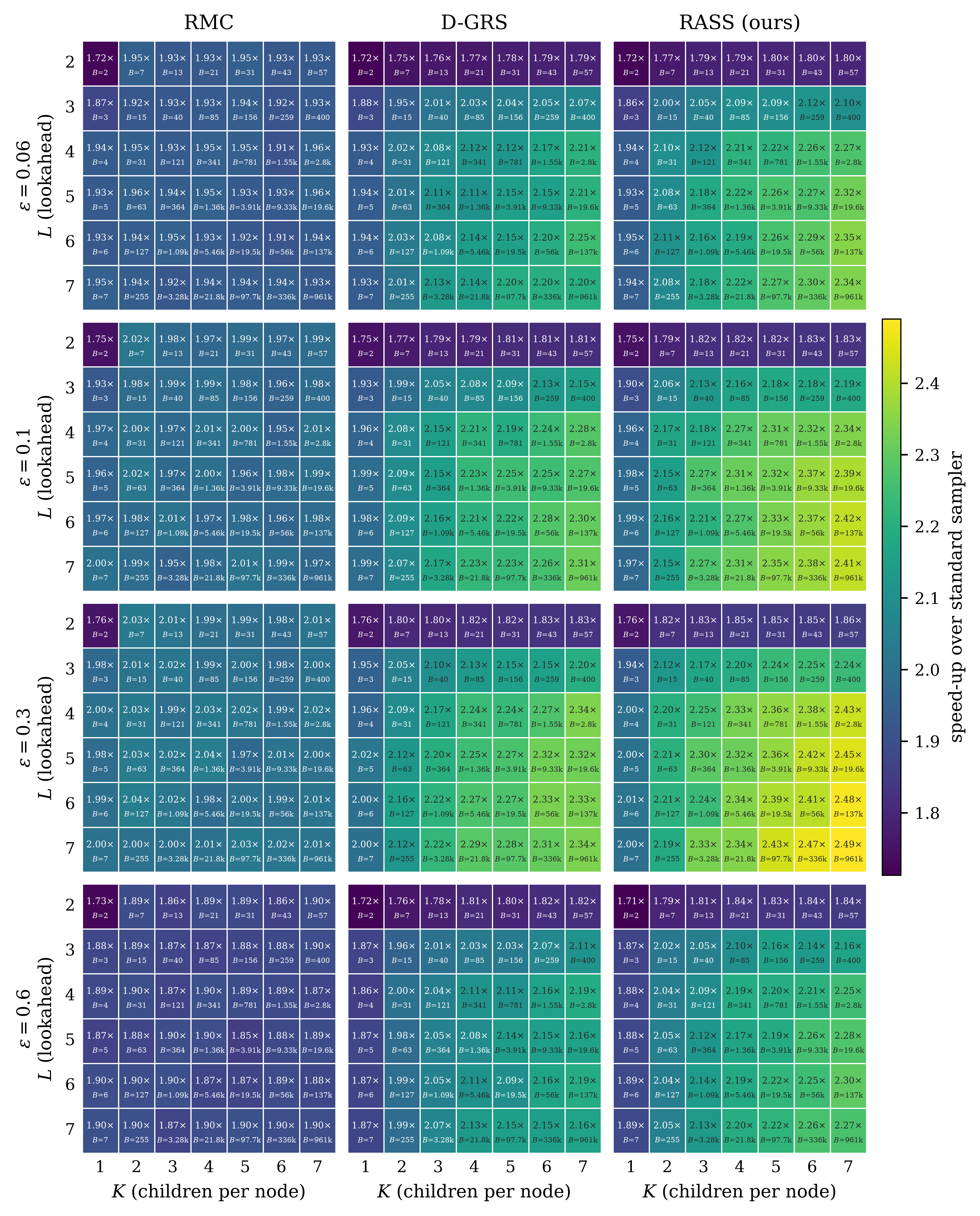}
    \caption{Mean target-call speedup against branching factor $K$ and lookahead $L$, achieved using parallel budget $B=|\mathcal{V}|$.}
    \label{fig:gm-speedup-heatmap-full-tree-appendix}
\end{figure}

\paragraph{Pixel-space diffusion.}
Figures~\ref{fig:pixel-space-speedup-internal} and \ref{fig:pixel-space-speedup-full-internal-trade-off} show the results for CIFAR-10 and FFHQ. The four configurations give $B = |\mathcal{I}| \in \{7, 31, 40, 85\}$, instead of $B = |\mathcal{V}| \in \{15, 63, 121, 341\}$.
At a fixed configuration, internal-node evaluation decreases the speedup of the two tree methods. For RASS, the decrease is between $1.9\%$ and $8.5\%$ on CIFAR-10 and at most $7.1\%$ on FFHQ. The largest decrease occurs at $(K,L) = (2,3)$. A probable cause is that a shallow tree ends more rounds at its last level, where the leaf evaluations are necessary for an accurate cache. The speedup of RMC changes by less than $2\%$. Its matched chains have a length of at least seven, and few rounds reach the end of the chain.
At a fixed budget, internal-node evaluation uses the parallel budget more efficiently. On CIFAR-10, RASS with $B = |\mathcal{I}| = 31$ is within $2.6\%$ of RASS with $B = |\mathcal{V}| = 63$. On FFHQ, the difference between these two values is at most $0.5\%$. Thus, internal-node evaluation gives almost the same speedup with half the budget. On CIFAR-10, RASS with $B = |\mathcal{I}| = 31$ also exceeds the highest full-tree speedup of D-GRS, which requires $B = |\mathcal{V}| = 341$. This result holds for all three churn values.
With the internal-node budget, RASS gives a higher mean speedup than D-GRS in every configuration. The relative gain is between $3.9\%$ and $18.7\%$ on CIFAR-10 and between $1.4\%$ and $2.4\%$ on FFHQ. On CIFAR-10, the gain increases with churn, as with the full-tree budget. It is between $3.9\%$ and $6.6\%$ at $\varepsilon = 0.1$ and between $10.3\%$ and $18.7\%$ at $\varepsilon = 0.6$. At the smallest budget, $B = |\mathcal{I}| = 7$, RMC gives a higher speedup than both tree methods in five of the six panels. A $(2,3)$ tree can commit at most three transitions per round, but the matched chain can commit up to seven. For $B \geq 31$, both tree methods exceed RMC in every seting we tested.
\begin{figure}[t]
  \centering
  \input{figures/pixel-space/pixel-space-style}
\pgfplotstableread[col sep=comma]{figures/pixel-space/data/pixel-space-speedup-results-internal-tree.csv}\PixelData
\begin{tikzpicture}
  \begin{groupplot}[
    group style={
      group size=3 by 2,
      horizontal sep=0.3cm,vertical sep=0.6cm,
      xlabels at=edge bottom,xticklabels at=edge bottom,
      ylabels at=edge left,yticklabels at=edge left
    },
    width=0.255\linewidth,height=0.18\linewidth,
    scale only axis,
    xmin=0,xmax=90,xtick={0,20, 40, 60, 80},
    xlabel={Parallel budget $B=|\mathcal{I}|$},
    ylabel={Mean speedup},
    axis lines=box,tick pos=left,tick align=outside,
    tick label style={font=\scriptsize},
    label style={font=\small},title style={font=\small,inner sep=0pt,yshift=2pt},
    every axis title shift=0pt,
    scaled ticks=false,
    yticklabel style={/pgf/number format/fixed,
      /pgf/number format/precision=1,/pgf/number format/fixed zerofill},
    unbounded coords=discard,filter discard warning=false,
  ]
    \nextgroupplot[
        title={CIFAR-10, $\varepsilon=0.1$},
        ymin=1.9,
        ymax=2.85,
        ytick={2.0,2.2,2.4,2.6,2.8},
        legend to name=pixelLegend, 
        legend columns=3,
        legend style={draw=none,font=\small,/tikz/every even column/.append style={column sep=0.4em}},
        grid=major,
        major grid style={densely dotted,draw=black,opacity=0.2}
    ]
    \PixelPanel{cifar10}{0.1}
    \addlegendimage{pixelDGRS,thick,mark=*}\addlegendentry{D-GRS}
    \addlegendimage{pixelRASS,thick,mark=*}\addlegendentry{RASS}
    \addlegendimage{pixelRMC,thick,mark=*}\addlegendentry{RMC}

    \nextgroupplot[
        title={CIFAR-10, $\varepsilon=0.3$},
        ymin=1.9,
        ymax=2.85,
        ytick={2.0,2.2,2.4,2.6,2.8},
        grid=major,
        major grid style={densely dotted,draw=black,opacity=0.2}
    ]
    \PixelPanel{cifar10}{0.3}

    \nextgroupplot[
        title={CIFAR-10, $\varepsilon=0.6$},
        ymin=1.9,
        ymax=2.85,
        ytick={2.0,2.2,2.4,2.6,2.8},
        grid=major,
        major grid style={densely dotted,draw=black,opacity=0.2}
    ]
    \PixelPanel{cifar10}{0.6}

    \nextgroupplot[
        title={FFHQ, $\varepsilon=0.1$},
        ymin=1.9,
        ymax=2.32,
        ytick={2.0,2.1,2.2,2.3},
        grid=major,
        major grid style={densely dotted,draw=black,opacity=0.2}
    ]
    \PixelPanel{ffhq}{0.1}

    \nextgroupplot[
        title={FFHQ, $\varepsilon=0.3$},
        ymin=1.9,
        ymax=2.32,
        ytick={2.0,2.1,2.2,2.3},
        grid=major,
        major grid style={densely dotted,draw=black,opacity=0.2}
    ]
    \PixelPanel{ffhq}{0.3}

    \nextgroupplot[title={FFHQ, $\varepsilon=0.6$},
        ymin=1.9,
        ymax=2.32,
        ytick={2.0,2.1,2.2,2.3},
        grid=major,
        major grid style={densely dotted,draw=black,opacity=0.2}
    ]
    \PixelPanel{ffhq}{0.6}
  \end{groupplot}
  \node[anchor=south] at ([yshift=0mm]current bounding box.north){\pgfplotslegendfromname{pixelLegend}};

\end{tikzpicture}
  \caption{Mean target-call speedup on CIFAR-10 (top) and FFHQ (bottom) across parallel budgets and churn values. Curves average over $500$ images per configuration, and shading shows $95\%$ confidence intervals.}
  \label{fig:pixel-space-speedup-internal}
\end{figure}
\begin{figure}[t]
  \centering
  \input{figures/pixel-space/pixel-space-style}
\pgfplotstableread[col sep=comma]{figures/pixel-space/data/pixel-space-speedup-results-full-tree.csv}\PixelDataFull
\pgfplotstableread[col sep=comma]{figures/pixel-space/data/pixel-space-speedup-results-internal-tree.csv}\PixelDataInternal

\pgfplotsset{
  pixel full/.style={solid,mark=*,mark size=1.4pt},
  pixel internal/.style={dashed,mark=x,mark size=2.2pt,mark options={solid}},
}

\providecommand{\PixelMatchCurve}[6]{%
  \addplot[draw=none,forget plot,name path=pixelLower,
    pixel keep={dataset}{#2},pixel keep={eps}{#3},pixel keep={rule}{#4}]
    table[x=B,y=ci_low] {#1};
  \addplot[draw=none,forget plot,name path=pixelUpper,
    pixel keep={dataset}{#2},pixel keep={eps}{#3},pixel keep={rule}{#4}]
    table[x=B,y=ci_high] {#1};
  \addplot[draw=none,fill=#5,fill opacity=0.16,forget plot]
    fill between[of=pixelLower and pixelUpper];
  \addplot[color=#5,thick,#6,forget plot,
    pixel keep={dataset}{#2},pixel keep={eps}{#3},pixel keep={rule}{#4}]
    table[x=B,y=speedup_mean] {#1};
}

\providecommand{\PixelBothPanel}[2]{%
  \PixelMatchCurve{\PixelDataFull}{#1}{#2}{rmc}{pixelRMC}{pixel full}%
  \PixelMatchCurve{\PixelDataInternal}{#1}{#2}{rmc}{pixelRMC}{pixel internal}%
  \PixelMatchCurve{\PixelDataFull}{#1}{#2}{d-grs}{pixelDGRS}{pixel full}%
  \PixelMatchCurve{\PixelDataInternal}{#1}{#2}{d-grs}{pixelDGRS}{pixel internal}%
  \PixelMatchCurve{\PixelDataFull}{#1}{#2}{paws}{pixelRASS}{pixel full}%
  \PixelMatchCurve{\PixelDataInternal}{#1}{#2}{paws}{pixelRASS}{pixel internal}%
}

\begin{tikzpicture}
  \begin{groupplot}[
    group style={
      group size=3 by 2,
      horizontal sep=0.3cm,vertical sep=0.6cm,
      xlabels at=edge bottom,xticklabels at=edge bottom,
      ylabels at=edge left,yticklabels at=edge left
    },
    width=0.255\linewidth,height=0.18\linewidth,
    scale only axis,
    xmode=log,log basis x=10,
    xmin=5,xmax=500,xtick={10,30,100,300},
    log ticks with fixed point,
    xlabel={Parallel budget $B$},
    ylabel={Mean speedup},
    axis lines=box,tick pos=left,tick align=outside,
    tick label style={font=\scriptsize},
    label style={font=\small},title style={font=\small,inner sep=0pt,yshift=2pt},
    every axis title shift=0pt,
    scaled ticks=false,
    yticklabel style={/pgf/number format/fixed,
      /pgf/number format/precision=1,/pgf/number format/fixed zerofill},
    unbounded coords=discard,filter discard warning=false,
    grid=major,
    major grid style={densely dotted,draw=black,opacity=0.2},
    cifar/.style={ymin=1.85,ymax=2.85,ytick={2.0,2.2,2.4,2.6,2.8}},
    ffhq/.style={ymin=1.85,ymax=2.32,ytick={1.9,2.0,2.1,2.2,2.3}},
  ]
    \nextgroupplot[
        cifar,
        title={CIFAR-10, $\varepsilon=0.1$},
        legend to name=pixelBothLegend,
        legend columns=6,
        legend style={draw=none,font=\small,/tikz/every even column/.append style={column sep=0.4em}},
    ]
    \PixelBothPanel{cifar10}{0.1}
    \addlegendimage{pixelDGRS,thick,mark=*}\addlegendentry{D-GRS}
    \addlegendimage{pixelRASS,thick,mark=*}\addlegendentry{RASS}
    \addlegendimage{pixelRMC,thick,mark=*}\addlegendentry{RMC}
    \addlegendimage{empty legend}\addlegendentry{}
    \addlegendimage{black,thick,pixel full}\addlegendentry{$B=|\mathcal{V}|$}
    \addlegendimage{black,thick,pixel internal}\addlegendentry{$B=|\mathcal{I}|$}

    \nextgroupplot[cifar,title={CIFAR-10, $\varepsilon=0.3$}]
    \PixelBothPanel{cifar10}{0.3}

    \nextgroupplot[cifar,title={CIFAR-10, $\varepsilon=0.6$}]
    \PixelBothPanel{cifar10}{0.6}

    \nextgroupplot[ffhq,title={FFHQ, $\varepsilon=0.1$}]
    \PixelBothPanel{ffhq}{0.1}

    \nextgroupplot[ffhq,title={FFHQ, $\varepsilon=0.3$}]
    \PixelBothPanel{ffhq}{0.3}

    \nextgroupplot[ffhq,title={FFHQ, $\varepsilon=0.6$}]
    \PixelBothPanel{ffhq}{0.6}
  \end{groupplot}
  \node[anchor=south] at (current bounding box.north){\pgfplotslegendfromname{pixelBothLegend}};
\end{tikzpicture}
  \caption{Pixel-space speedup under the two parallel-budget definitions. Mean target-call speedup on CIFAR-10 (top) and FFHQ (bottom) for churn $\varepsilon \in \{0.1, 0.3, 0.6\}$ and $(K,L) \in \{(2,3),(2,5),(3,4),(4,4)\}$. Solid lines evaluate the full tree, $B = |\mathcal{V}| \in \{15, 63, 121, 341\}$, as in Figure~\ref{fig:pixel-speedup-full-tree}. Dashed lines evaluate only the internal nodes, $B = |\mathcal{I}| \in \{7, 31, 40, 85\}$, as in Figure\ref{fig:pixel-space-speedup-internal}. RMC uses a chain matched to each budget. Curves average over 500 images per configuration, and shading shows 95\% confidence intervals. RASS exceeds D-GRS in every configuration under both budget definitions.}
  \label{fig:pixel-space-speedup-full-internal-trade-off}
\end{figure}

\paragraph{Latent-space text-to-image.}
Figure~\ref{fig:sd3-speedup-internal-full-tradeoff} compares the best speedup within budget
for SD3.5 under the two budget definitions. Figure~\ref{fig:sd3-speedup-internal-tree} shows all configurations with $B = |\mathcal{I}|$. The grid gives $B = |\mathcal{I}| = K + 1 \in \{3, \dots, 7\}$ for $L = 2$ and $B \in \{7, 13, 21, 31, 43\}$ for $L = 3$.
At a fixed configuration, the effect of internal-node evaluation depends on the lookahead. For $L = 2$, it decreases the speedup of RASS and D-GRS by $2.1\%$ to $3.7\%$. For $L = 3$, the change is at most $1.2\%$ in magnitude. The mean speedups on SD3.5 are near $1.1\times$, so few rounds probably reach the third level. The leaf evaluations then seldom change the next proposal. The speedup of RMC changes by less than $1\%$.
At a fixed budget, the two definitions give similar results. Within $B \leq 43$, the best mean speedup of RASS is $1.152\times$ with $B = |\mathcal{I}|$ and $1.151\times$ with $B = |\mathcal{V}|$. With the internal-node budget, RASS gives a higher mean speedup than D-GRS in all ten configurations, by $0.2\%$ to $1.2\%$. The $95\%$ confidence intervals overlap in all ten configurations, so no single difference is significant. RASS reaches $1.144\times$ at $B = 31$. This value exceeds the highest speedup of D-GRS, $1.141\times$ at $B = 43$, with $28\%$ less budget. RMC stays between $1.093\times$ and $1.099\times$. It exceeds RASS only at $(K,L) = (2,2)$, where $B = 3$ ($1.098\times$ against $1.094\times$, with overlapping confidence intervals).
\begin{figure}[t]
  \centering
%
%
%

\begingroup

\def\RuleDGRS{d-grs}
\def\RuleRASS{paws}
\def\RuleRMC{rmc}

\def\SDMaxBudget{43}

\definecolor{DGRSColor}{RGB}{31,119,180}
\definecolor{RASSColor}{RGB}{255,127,14}
\definecolor{RMCColor}{RGB}{44,160,44}

\def\DataFile{figures/latent-space/data/sd3-speedup-internal-tree.csv}

\pgfplotsset{
    sd extra/.style={},
    sd select/.style 2 args={
        x filter/.code={
            \edef\sdRow{\thisrow{rule}}%
            \edef\sdWant{#1}%
            \ifx\sdRow\sdWant\else\def\pgfmathresult{nan}\fi
            \edef\sdLWant{#2}%
            \ifx\sdLWant\empty\else
                \ifnum\thisrow{L}=\sdLWant\relax\else\def\pgfmathresult{nan}\fi
            \fi
            \ifnum\thisrow{budget}>\SDMaxBudget\relax\def\pgfmathresult{nan}\fi
        },
        sd extra,
        unbounded coords=discard,
        filter discard warning=false,
    },
    sd skip/.style 2 args={
        x filter/.append code={
            \ifnum\thisrow{K}=#1\relax
                \ifnum\thisrow{L}=#2\relax\def\pgfmathresult{nan}\fi
            \fi
        },
    },
}

\newcommand{\sdSeries}[8]{%
    \addplot[name path=#1-#2-#3-hi, draw=none, forget plot, sd select={#1}{#2}]
        table[x=#3, y=#6] {\DataFile};
    \addplot[name path=#1-#2-#3-lo, draw=none, forget plot, sd select={#1}{#2}]
        table[x=#3, y=#5] {\DataFile};
    \addplot[draw=none, fill=#7, fill opacity=0.13, forget plot]
        fill between[of=#1-#2-#3-hi and #1-#2-#3-lo];
    \addplot[color=#7, #8, sd select={#1}{#2}]
        table[x=#3, y=#4] {\DataFile};
}

\begin{tikzpicture}

\begin{groupplot}[
    group style={
        group size=3 by 1,
        horizontal sep=0.35cm,
        yticklabels at=edge left,
        ylabels at=edge left,
    },
    scale only axis,
    width=0.265\linewidth,
    height=0.16\linewidth,
    ymin=1.076, ymax=1.166,
    ytick={1.08,1.09,...,1.161},
    yticklabel style={
        /pgf/number format/.cd, fixed, fixed zerofill, precision=2,
    },
    grid=major,
    grid style={line width=.1pt, draw=gray!20},
    axis line style={black!70},
    tick align=outside,
    tick pos=left,
    tick style={black!70},
    table/col sep=comma,
    title style={font=\small, yshift=-6pt},
    label style={font=\small},
    tick label style={font=\scriptsize},
    every axis plot/.append style={
        line width=1.1pt,
    },
    legend columns=3,
    legend style={
        draw=none,
        font=\small,
        /tikz/every even column/.append style={column sep=0.8em},
    },
]

\nextgroupplot[
    title={Best within budget},
    title style={font=\small, yshift=-2pt},
    xlabel={Parallel Budget $B=|\mathcal{I}|$},
    ylabel={Speculative speedup},
    xmode=log,
    xmin=2.6, xmax=50,
    xtick={3,5,10,20,40},
    log ticks with fixed point,
    legend to name=sd3internallegend,
    sd extra/.style={sd skip={6}{2}},
]
\sdSeries{\RuleDGRS}{}{budget}{best_speedup}{best_ci_low}{best_ci_high}{DGRSColor}{mark=*, mark size=1.5pt}
\addlegendentry{D-GRS}
\sdSeries{\RuleRASS}{}{budget}{best_speedup}{best_ci_low}{best_ci_high}{RASSColor}{mark=square*, mark size=1.5pt}
\addlegendentry{RASS}
\sdSeries{\RuleRMC}{}{budget}{best_speedup}{best_ci_low}{best_ci_high}{RMCColor}{mark=triangle*, mark size=1.7pt}
\addlegendentry{RMC}

\nextgroupplot[
    title={Lookahead $L=2$},
    xlabel={Branching factor $K$},
    xmin=1.8, xmax=6.2,
    xtick={2,...,6},
]
\sdSeries{\RuleDGRS}{2}{K}{mean_speedup}{ci_low}{ci_high}{DGRSColor}{mark=*, mark size=1.5pt}
\sdSeries{\RuleRASS}{2}{K}{mean_speedup}{ci_low}{ci_high}{RASSColor}{mark=square*, mark size=1.5pt}
\sdSeries{\RuleRMC}{2}{K}{mean_speedup}{ci_low}{ci_high}{RMCColor}{mark=triangle*, mark size=1.7pt}

\nextgroupplot[
    title={Lookahead $L=3$},
    xlabel={Branching factor $K$},
    xmin=1.8, xmax=6.2,
    xtick={2,...,6},
]
\sdSeries{\RuleDGRS}{3}{K}{mean_speedup}{ci_low}{ci_high}{DGRSColor}{mark=*, mark size=1.5pt}
\sdSeries{\RuleRASS}{3}{K}{mean_speedup}{ci_low}{ci_high}{RASSColor}{mark=square*, mark size=1.5pt}
\sdSeries{\RuleRMC}{3}{K}{mean_speedup}{ci_low}{ci_high}{RMCColor}{mark=triangle*, mark size=1.7pt}

\end{groupplot}

\node[anchor=south, inner sep=0pt] at ([yshift=-3pt]group c2r1.outer north)
    {\pgfplotslegendfromname{sd3internallegend}};

\end{tikzpicture}

\endgroup
  \caption{Mean target-call speedup for SD3.5-medium when verification evaluates only the internal nodes, $B = |\mathcal{I}|$. Left: best observed mean speedup within parallel budget $B$ (logarithmic axis). Middle and right: speedup versus branching factor $K$ at lookahead $L=2$ and $L=3$, respectively. Shading indicates 95\% confidence intervals over 100 paired caption--noise inputs. RASS attains a higher mean speedup than D-GRS in every configuration.}
  \label{fig:sd3-speedup-internal-tree}
\end{figure}
\begin{figure}[t]
  \centering
  \input{figures/latent-space/sd3-speedup-internal-full-tradeoff}
  \caption{Best mean target-call speedup within parallel budget $B$ for SD3.5-medium under two budget definitions. Solid lines with filled markers evaluate the full tree, $B = |\mathcal{V}|$, as in Figure~\ref{fig:sd3-speedup}; dashed lines with open markers evaluate only the internal nodes, $B = |\mathcal{I}|$. Each point shows the best configuration $(K,L)$ whose budget does not exceed $B$; the budget axis is logarithmic. Shading indicates 95\% confidence intervals over 100 paired caption--noise inputs. Evaluating only internal nodes, RASS reaches its highest mean speedup, $1.152\times$, at $B = 43$ rather than $B = 156$.}
    \label{fig:sd3-speedup-internal-full-tradeoff}
\end{figure}

\paragraph{Summary.}
Neither budget definition is better in all settings.
Internal-node evaluation uses a smaller batch per round.
Full-tree evaluation gives a more accurate cache for the next round.
The better choice depends on the model, the lookahead, and the hardware cost of a
larger batch.
The order of the methods does not change.
Under both definitions, RASS gives a higher mean speedup than D-GRS in every
evaluated configuration.

\subsection{Qualitatively Assessment of Exactness}
\label{app:qualitative-exactness}

Figures~\ref{fig:exactness-cifar10}, \ref{fig:exactness-ffhq}, \ref{fig:exactness-sd3} show samples from the
standard target sampler alongside RMC, D-GRS and RASS. Rows are chosen as follows:
we draw four sample indices uniformly at random without replacement from the 500 images of each pixel-space run and from the 100 prompts of the SD3.5 run. We use the main-paper protocol with $\varepsilon=0.3$,
$K=4$, $L=4$ for CIFAR-10 and FFHQ, and $\varepsilon=0.8$, $K=6$, $L=3$ for SD3.5.

All three speculative samplers are exact, so each produces a sample from the same distribution as the target sampler. On CIFAR-10 the class label is shared across methods, but the four columns are independent draws. On SD3.5 the prompt and the
initial noise are shared, so images often share a composition but differ in
detail, because each sampler consumes its randomness differently along the trajectory. Across all three datasets the speculative samples are indistinguishable in quality and content from those of the target sampler, and the
text-conditioned samples are consistent with their prompts.

\begin{figure}[h]
  \centering
  \includegraphics[width=0.45\linewidth]{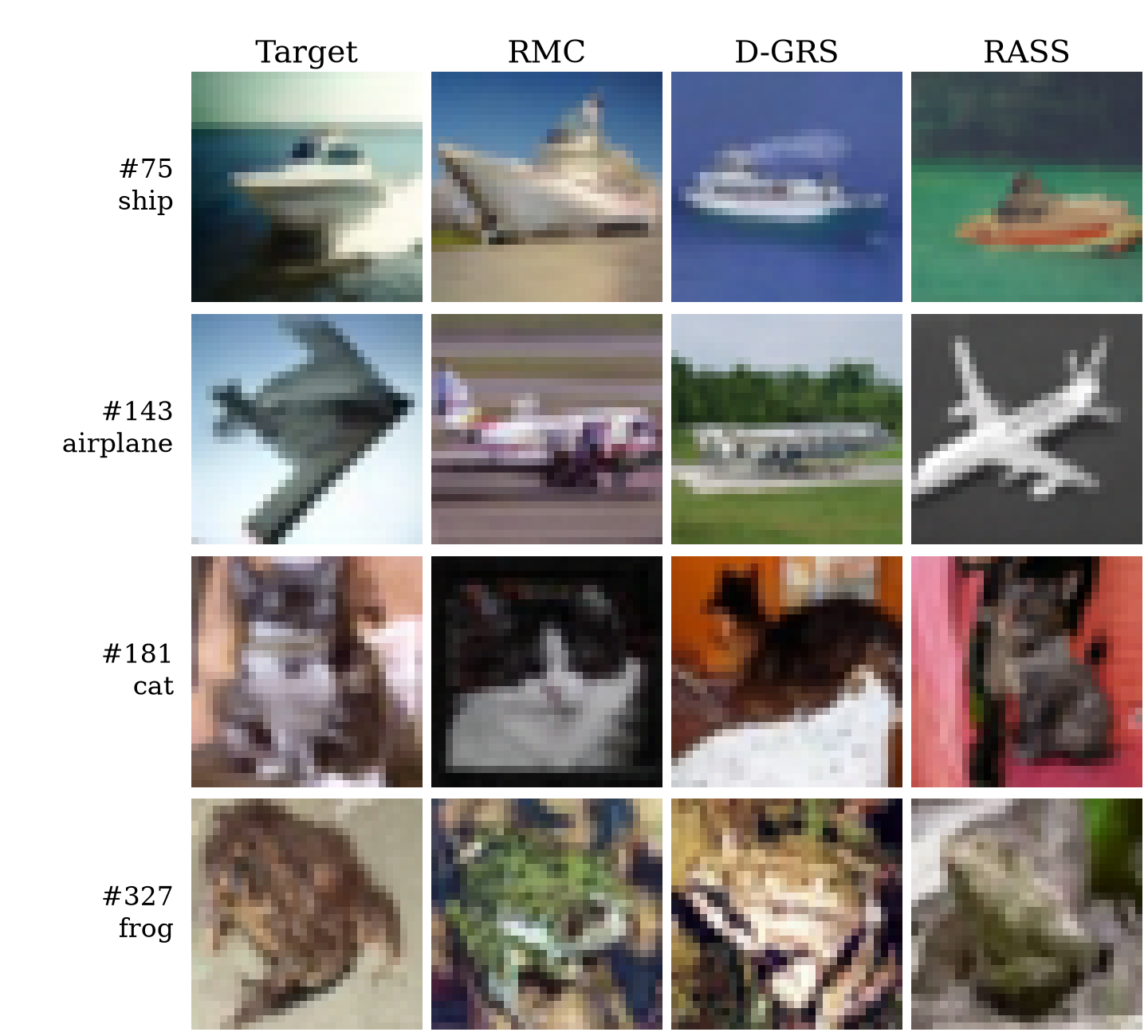}
  \caption{CIFAR-10 ($32\times32$, class-conditional), $\varepsilon=0.3$, $K=4$, $L=4$.}
  \label{fig:exactness-cifar10}
\end{figure}

\begin{figure}[h]
  \centering
  \includegraphics[width=0.6\linewidth]{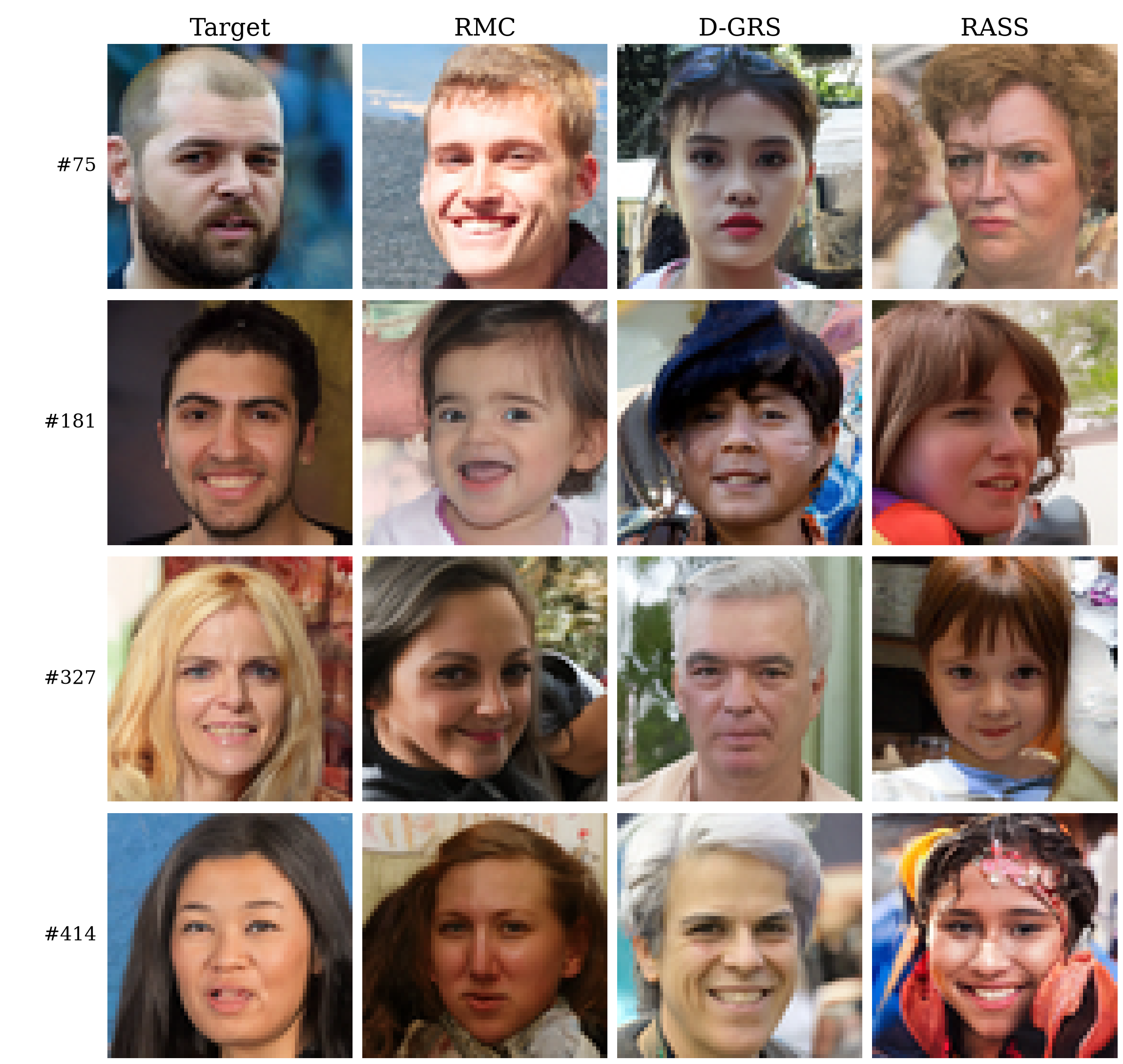}
  \caption{FFHQ ($64\times64$, unconditional), $\varepsilon=0.3$, $K=4$, $L=4$.}
  \label{fig:exactness-ffhq}
\end{figure}

\begin{figure}[h]
  \centering
  \includegraphics[width=0.85\linewidth]{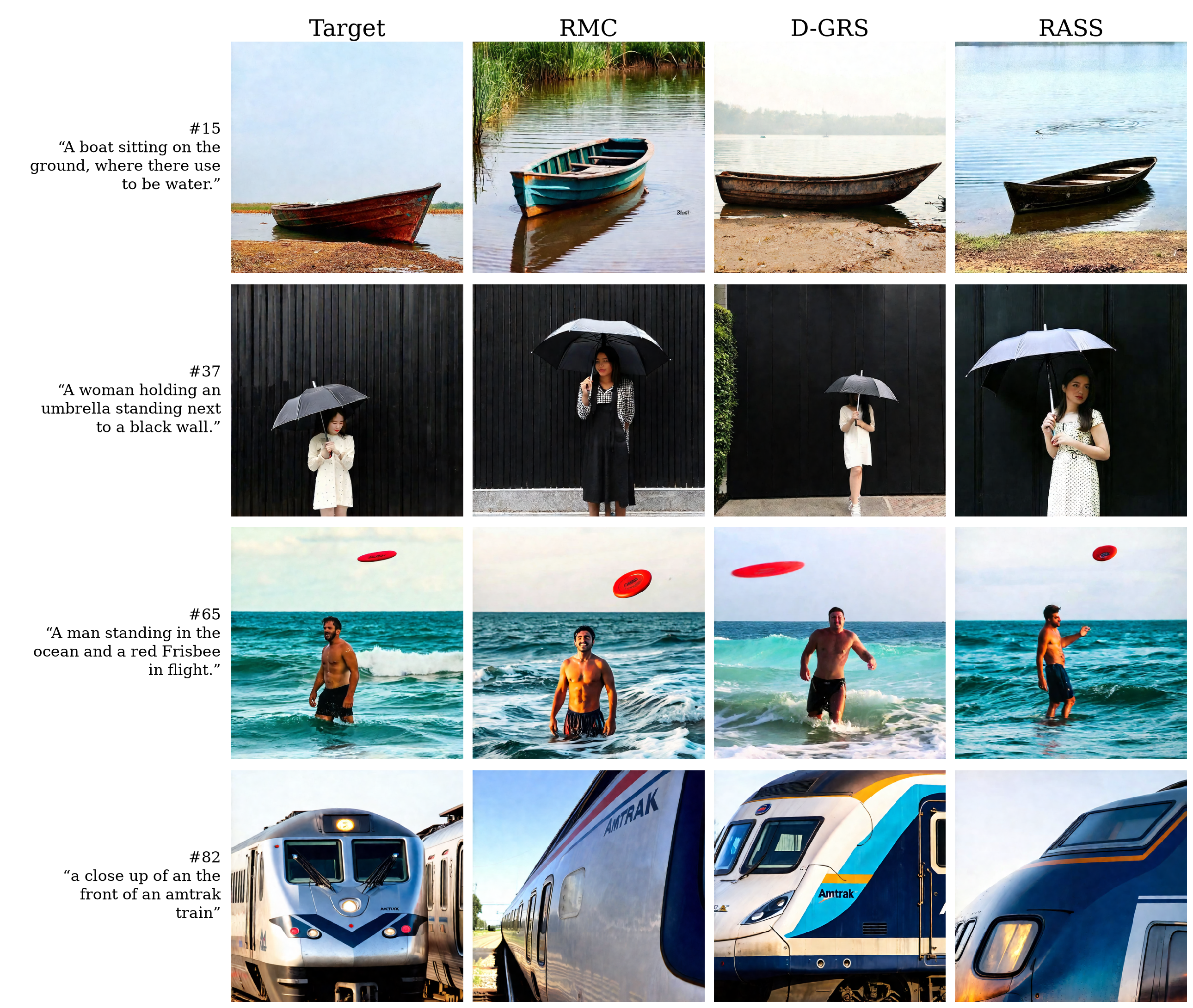}
  \caption{Stable Diffusion 3.5 Medium ($512\times512$), $\varepsilon=0.8$, $K=6$, $L=3$.}
  \label{fig:exactness-sd3}
\end{figure}



\end{document}